\PassOptionsToPackage{table}{xcolor}
\documentclass{article}

\usepackage{microtype}
\usepackage{graphicx}
\usepackage{subcaption}
\usepackage{booktabs} 

\usepackage{hyperref}

\usepackage[preprint]{icml2026}

\usepackage{amsmath}
\usepackage{amssymb}
\usepackage{mathtools}
\usepackage{amsthm}

\usepackage[capitalize,noabbrev]{cleveref}

\usepackage{multirow}
\usepackage{array}

\usepackage[table]{xcolor}
\usepackage{booktabs}
\definecolor{bg-green}{RGB}{209, 239, 213}
\definecolor{text-red}{RGB}{255, 0, 0}

\theoremstyle{plain}
\newtheorem{theorem}{Theorem}[section]

\newtheorem{lemma}[theorem]{Lemma}

\theoremstyle{definition}
\newtheorem{definition}[theorem]{Definition}
\newtheorem{assumption}[theorem]{Assumption}
\theoremstyle{remark}

\usepackage[textsize=tiny]{todonotes}

\icmltitlerunning{Adaptive Robust Estimator for Multi-Agent Reinforcement Learning}

\usepackage{graphicx}   
\usepackage{multirow}   
\usepackage{xcolor}     
\definecolor{bgblue}{RGB}{220,235,255} 
\usepackage{float}
\usepackage{placeins}
\usepackage{algorithm}
\usepackage{algorithmic}

\newcommand{\E}{\mathbb{E}}
\newcommand{\Pbb}{\mathbb{P}}
\newcommand{\Var}{\mathrm{Var}}
\newcommand{\R}{\mathbb{R}}
\newcommand{\abs}[1]{\left|#1\right|}
\newcommand{\1}{\mathbf{1}}

\usepackage{ragged2e}
\usepackage{threeparttable}
\usepackage{enumitem}


\begin{document}

\twocolumn[
\icmltitle{Adaptive Robust Estimator for Multi-Agent Reinforcement Learning}




\begin{icmlauthorlist}
\icmlauthor{Zhongyi Li\textsuperscript{*}}{buaa}
\icmlauthor{Wan Tian\textsuperscript{*}}{pku}
\icmlauthor{Jingyu Chen}{cas}
\icmlauthor{Kangyao Huang}{thu}
\icmlauthor{Huiming Zhang}{buaa}
\icmlauthor{Hui Yang}{pku}
\icmlauthor{Tao Ren}{pku}
\icmlauthor{Jinyang Jiang}{pku}
\icmlauthor{Yijie Peng\textsuperscript{$\dagger$}}{pku}
\icmlauthor{Yikun Ban\textsuperscript{$\dagger$}}{buaa}
\icmlauthor{Fuzhen Zhuang\textsuperscript{$\dagger$}}{buaa}
\end{icmlauthorlist}

\icmlaffiliation{buaa}{Beihang University}
\icmlaffiliation{pku}{Peking University}
\icmlaffiliation{cas}{Chinese Academy of Sciences}
\icmlaffiliation{thu}{Tsinghua University}

\icmlcorrespondingauthor{Yijie Peng}{}
\icmlcorrespondingauthor{Yikun Ban}{}
\icmlcorrespondingauthor{Fuzhen Zhuang}{}

\icmlkeywords{Machine Learning, ICML}

\vskip 0.3in
]



\printAffiliationsAndNotice{
\textsuperscript{*}Equal contribution \\
\textsuperscript{$\dagger$}Corresponding authors}


\definecolor{bgblue}{RGB}{220, 230, 245}

\begin{abstract}
Multi-agent collaboration has emerged as a powerful paradigm for enhancing the reasoning capabilities of large language models, yet it suffers from interaction-level ambiguity that blurs generation, critique, and revision, making credit assignment across agents difficult. Moreover, policy optimization in this setting is vulnerable to heavy-tailed and noisy rewards, which can bias advantage estimation and trigger unstable or even divergent training. To address both issues, we propose a robust multi-agent reinforcement learning framework for collaborative reasoning, consisting of two components: Dual-Agent Answer–Critique–Rewrite (DACR) and an Adaptive Robust Estimator (ARE). DACR decomposes reasoning into a structured three-stage pipeline—answer, critique, and rewrite—while enabling explicit attribution of each agent’s marginal contribution to its partner’s performance. ARE provides robust estimation of batch experience means during multi-agent policy optimization. Across mathematical reasoning and embodied intelligence benchmarks, even under noisy rewards, our method consistently outperforms the baseline in both homogeneous and heterogeneous settings. These results indicate stronger robustness to reward noise and more stable training dynamics, effectively preventing optimization failures caused by noisy reward signals. Our code  is available at \href{https://github.com/bhai114/ARE}{https://github.com/bhai114/ARE}.
\end{abstract}

\section{Introduction} \label{secintro}

\begin{figure*}[!t]
\centering
\includegraphics[width=0.99\textwidth]{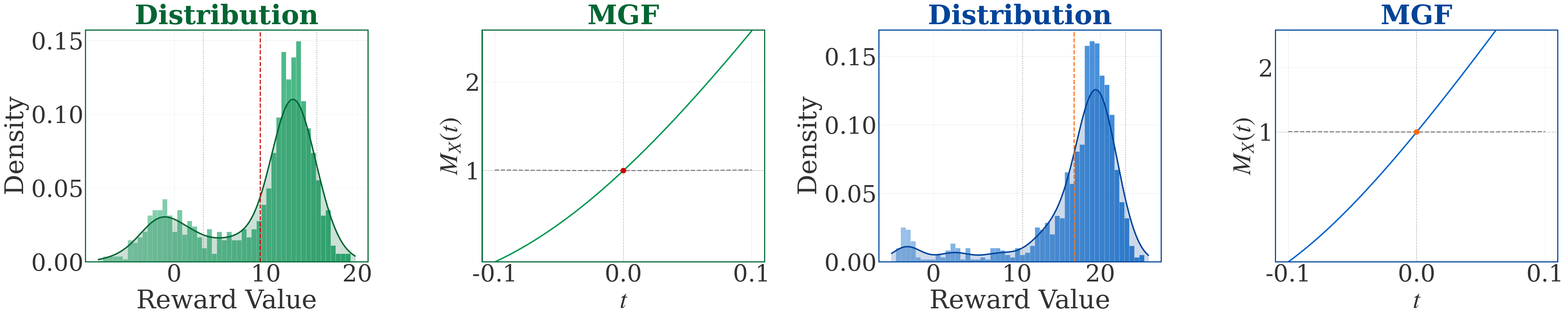}
\caption{The scoring results of the reward model. We randomly selected questions and answers from the MATH500 and Gaokao2023 datasets, and then used the large model to generate the answers. The first and third figures show the scoring results of the reward model for the answers generated by the large model. The second and fourth figures are the corresponding moment generating functions. It can be seen that the data distribution of the scoring by the reward model is biased.}
\label{fig:framework}
\vspace{-2mm} 
\end{figure*}

Large language models (LLMs) have achieved impressive performance on mathematical and logical reasoning tasks \cite{zhang2026heterogeneous,yang2026grouprelativeadvantagebiased,liu2025logical,wang2025survey,liu2025mathematical,huang2026does}. In this line of work, reinforcement learning (RL) has emerged as an increasingly common approach, providing a compelling alternative to supervised imitation learning \cite{ren2025riskpo,shao2025deepseekmath,zou2025transformer}. In parallel, multi-agent collaboration has emerged as a practical way to mitigate single-agent limitations such as confirmation bias and weak self-critique: pairing complementary roles (e.g., solver vs. evaluator) or aggregating multiple attempts can surface hidden errors and improve reliability \cite{wan2503rema,lin2025interactive,liao2025marft,chen2025harnessing}. These trends naturally motivate training collaborative LLM systems via multi-agent reinforcement learning (MARL). However, in practice, policy optimization for multi-agent reasoning is often brittle, and stability issues can dominate the training dynamics.

A major source of brittleness is the \emph{reward signal}. Reasoning rewards are frequently produced by verifiers, reward models, or heuristic graders, and can be noisy, biased, and heavy-tailed, with occasional extreme outliers (Figure~\ref{fig:framework}). Standard GRPO (Group Relative Policy Optimization) \cite{shao2024deepseekmathpushinglimitsmathematical} normalizes rewards using the batch empirical mean and variance. Under heavy-tailed or contaminated rewards, a small number of anomalous samples can significantly skew batch statistics, distort advantage scaling, and trigger oscillatory updates or even training divergence \cite{yang2026grouprelativeadvantagebiased}. This issue becomes more pronounced in multi-agent settings, where interaction can introduce additional variability and non-stationarity into the return distribution \citep{wang2025multi,huang2026real}.

To address these two challenges, we propose a robust MARL framework that stabilizes policy optimization under noisy, heavy-tailed rewards. The framework consists of two components. First, DACR is an interaction protocol that decomposes collaboration into explicit stages: each agent produces an initial answer, critiques its partner’s answer, and then rewrites its own solution conditioned on the critique. This structure decouples generation from evaluation and yields richer training trajectories than unstructured dialogues. We further adopt a lightweight cross-stage reward design to encourage effective interaction, without requiring complex credit assignment. Second, ARE is an adaptive robust estimator for MARL policy optimization. It replaces fragile batch-mean normalization of experience returns with a robust, adaptive location estimator, thereby improving training stability in the presence of outliers.

We evaluate the effectiveness of our framework on mathematical reasoning benchmarks, using both homogeneous and heterogeneous agent pairs across model families and scales. We further demonstrate the generality of ARE beyond language reasoning by applying it to an aerial vision-and-language navigation (VLN) task. Across both tasks, our approach consistently improves performance over GRPO-based baselines. Overall, our contributions are:

\begin{itemize}[leftmargin=*, topsep=2pt, itemsep=2pt, parsep=0pt, partopsep=0pt]
\item We introduce DACR, a structured collaboration protocol with a cross-improvement reward that attributes each agent’s contribution to its collaborator’s reasoning quality, enabling more interpretable and learnable credit assignment.

\item We propose ARE, which replaces GRPO’s fragile batch-mean advantage normalization with a robust estimator, mitigating instability in advantage estimation under noisy and heavy-tailed rewards.

\item We systematically evaluate both homogeneous and heterogeneous multi-agent systems on large-scale mathematical reasoning benchmarks, demonstrating consistent improvements over single-agent and multi-agent baselines in accuracy, stability, and generalization.

\item We further validate ARE in an embodied aerial VLN setting with a Qwen2-VL-2B VLM, achieving higher training rewards and improved NE/SR/OSR/SPL across all splits, with the largest gains on unseen environments.
\end{itemize}

\section{Methodology}
\label{sec:method}
In this section, we present our robust MARL framework, which consists of two components: DACR and ARE. DACR centers on a structured interaction protocol that facilitates collaborative error detection, while ARE provides a robust optimization procedure that mitigates the heavy-tailed rewards inherent to multi-agent settings.
\begin{figure*}[!t]
\centering
\includegraphics[width=0.99\textwidth]{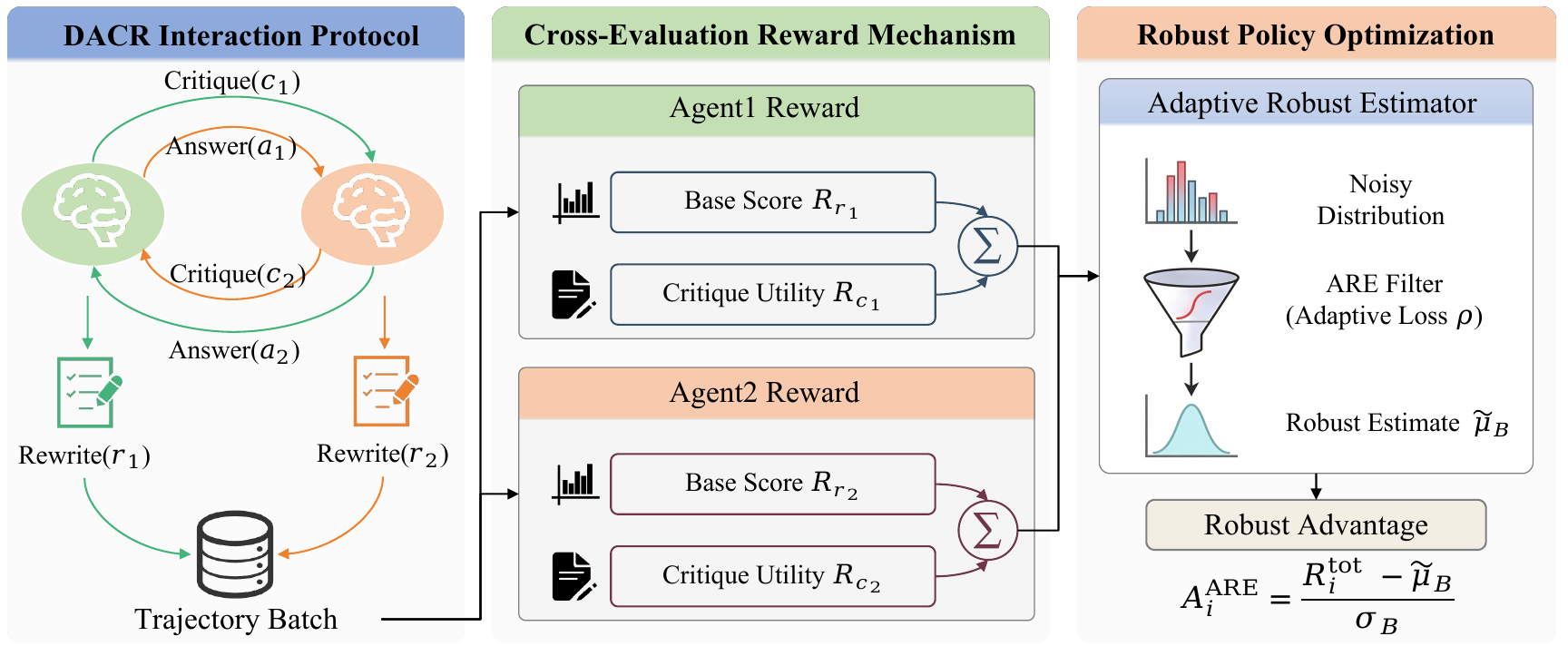}
\caption{The proposed DACR interaction framework. Two agents, $\pi_1$ and $\pi_2$, act as both solvers and peer-evaluators. The reward mechanism attributes credit based on standalone accuracy and the marginal improvement induced in the partner's revised solution.}
\label{fig:framework_new}
\vspace{-2mm}
\end{figure*}

\subsection{Dual-Agent Reasoning Interaction Protocol}
In this section, we introduce DACR, whose workflow is illustrated in the left panel of Figure~\ref{fig:framework_new}. DACR decouples the reasoning process into a dual-agent system in which two policies, $\pi_1^{\theta}$ and $\pi_2^{\phi}$, alternate between generative and evaluative roles. Given a problem $q$, the interaction proceeds through a structured three-stage trajectory:

\begin{enumerate}[leftmargin=*, topsep=2pt, itemsep=2pt, parsep=0pt, partopsep=0pt]
\item Independent Answering: Both agents generate initial reasoning chains and answers: $a_1 \sim \pi_1(q)$ and $a_2 \sim \pi_2(q)$.
\item Mutual Critique: Agents swap their answers. Agent 1 generates a critique identifying potential errors in $a_2$ ($c_1 \sim \pi_1(q, a_2)$), while Agent 2 critiques $a_1$ ($c_2 \sim \pi_2(q, a_1)$).
\item Revision: Each agent produces a refined answer $r_i$ by conditioning on the peer feedback received: $r_1 \sim \pi_1(q, a_1, d_2)$ and $r_2 \sim \pi_2(q, a_2, c_1)$.
\end{enumerate}

This protocol ensures that evaluation signals are external to the original reasoning trace, providing a more objective and adversarial diagnostic environment for solving complex problems.

\subsection{Cross-Evaluation Reward Mechanism}
In this section, we describe the multi-agent reward design for DACR, as illustrated in the middle panel of Figure~\ref{fig:framework_new}.

To optimize DACR, we move beyond binary correctness and adopt a lightweight reward mechanism that supports fine-grained credit attribution. Let $\mathrm{S}(y,q)$ denote a scoring function that assesses the logical and numerical validity of a response $y$ to problem $q$.

For each agent $i\in\{1,2\}$, we assign stage-wise rewards for the answering and rewriting phases:
\[
R_{\mathrm{ans},i}=\mathrm{S}(a_i,q),\qquad R_{\mathrm{rw},i}=\mathrm{S}(r_i,q).
\]
Crucially, we introduce a \textbf{cross-improvement} term $\Delta_i$ that credits an agent when its critique leads to a measurable improvement in its partner’s rewritten answer:
\[
\Delta_1 = R_{\mathrm{rw},2}-R_{\mathrm{ans},2},\qquad \Delta_2 = R_{\mathrm{rw},1}-R_{\mathrm{ans},1}.
\]
The total return for agent $i$ is then
\[
R_i^{\mathrm{total}} = R_{\mathrm{rw},i} + \gamma\,\Delta_i,
\]
where $\gamma>0$ trades off individual solution quality against collaborative utility. This objective encourages agents to provide actionable critiques while maintaining high standards for their own final answers.

\subsection{Robust Multi-Agent Policy Optimization}
In this section, we instantiate ARE (introduced in Section~\ref{AREintro}) for robust multi-agent policy optimization. The key idea is to stabilize advantage estimation by replacing the fragile batch mean with our proposed ARE-based location estimator. Given a batch of returns $\{R_k\}_{k=1}^B$, we compute a robust location estimate $\tilde{\mu}_B$ using ARE. The resulting robust advantage is defined as
\begin{equation*}
A_k^{\text{ARE}} = \frac{R_k^{\text{total}} - \tilde{\mu}_B}{\sigma_B + \epsilon}.
\end{equation*}
Each agent is then optimized by maximizing the clipped surrogate objective
\begin{equation*}
\begin{aligned}
\mathcal{L}(\theta) &= \mathbb{E}\Big[ \min\big( \eta\, A^{\text{ARE}},\ \text{clip}(\eta, 1-\epsilon, 1+\epsilon)\, A^{\text{ARE}} \big) \Big] \\
&\quad - \beta\, \mathbb{D}_{\text{KL}}(\pi_{\theta} \,\|\, \pi_{\text{ref}}),
\end{aligned}
\end{equation*}
where $\eta$ denotes the importance sampling ratio. By anchoring advantage estimation at a robust center, robust policy optimization suppresses gradient spikes induced by outlier rewards, yielding more stable convergence in challenging multi-agent reasoning landscapes.

\section{Adaptive Robust Estimator} \label{AREintro}


In this section, we introduce our proposed ARE method, which can be viewed as a principled refinement of the classical Median-of-Means (MoM) estimator \citep{sun2021we}. MoM partitions samples into blocks, computes a mean within each block, and aggregates the blockwise estimates via a median, providing robustness to heavy tails and a minority of corrupted blocks (see Appendix~\ref{MOMDiscussion} for details). However, MoM enforces robustness only at the aggregation stage: each blockwise estimate is still a sample mean and thus remains sensitive to within-block outliers. ARE addresses this weakness by replacing the blockwise mean with a robust estimator obtained by minimizing an adaptive loss \citep{BarronCVPR2019}, thereby enforcing robustness at both intra-block and inter-block levels and improving stability under heavy-tailed contamination. We defer the adaptive loss definition and its optimization properties to Appendix~\ref{IntroductionAdaptive}.


Specifically, following the MoM template, we partition the samples $X_1,\dots,X_n$ into $k$ disjoint blocks $B_1,\dots,B_k$. For each block, we compute a robust location estimate by minimizing the adaptive loss:
\begin{equation}\label{adaptive_loss_es}
\widetilde{X}_j \in \arg\min_{x\in\mathbb{R}}
\frac{1}{|B_j|}\sum_{i\in B_j}\rho\!\big(X_i-x;\alpha,c\big), \quad j=1,\dots,k .
\end{equation}
We then define ARE as the median of these blockwise estimates:
\begin{equation}\label{ARE}
\widetilde{\mu}=\operatorname{median}(\widetilde{X}_1,\dots,\widetilde{X}_k).
\end{equation}

The main computational challenge is solving \eqref{adaptive_loss_es}, since $\rho(\cdot;\alpha,c)$ can be nonconvex and involves robustness parameters $(\alpha,c)$. Without loss of generality, consider a single block of size $m$:
\begin{equation}\label{generalform}
\widetilde{X}\in\arg\min_{x\in\mathbb{R}}\frac{1}{m}\sum_{i=1}^m \rho\!\big(X_i-x;\alpha,c\big).
\end{equation}
Tuning $(\alpha,c)$ via cross-validation or Lepski's method is computationally prohibitive, especially in multi-agent settings. We therefore use alternating optimization: initialize $\widetilde{X}^{(0)}$ and $(\alpha^{(0)},c^{(0)})$ (e.g., $\widetilde{X}^{(0)}$ as the sample median, $\alpha^{(0)}=1$, $c^{(0)}=1$), and iterate
\begin{align}
\alpha^{(t+1)}, c^{(t+1)} &= \arg\min_{\alpha\in(-\infty,2],\,c>0}\frac{1}{m}\sum_{i=1}^m \rho\!\big(X_i-\widetilde{X}^{(t)};\alpha,c\big),\label{parastep}\\
\widetilde{X}^{(t+1)} &= \arg\min_{x\in\mathbb{R}}\frac{1}{m}\sum_{i=1}^m \rho\!\big(X_i-x;\alpha^{(t+1)},c^{(t+1)}\big),\label{Xstep}
\end{align}
until convergence. Section~\ref{solution} details efficient solvers for \eqref{parastep} and \eqref{Xstep}.

\section{Optimization Perspective} \label{solution}
In this section, we outline tailored solution procedures for the nonconvex subproblems \eqref{parastep} and \eqref{Xstep}, respectively.

\subsection{Estimation of Robust and Scale Parameters}

Prior to estimating the adaptive loss parameters, we first derive the following key insight by differentiating the adaptive loss with respect to \(\alpha\).

\begin{lemma} \label{Monotonicitybaron}
Fix $\epsilon\in\mathbb{R}$ and $c>0$. For $\alpha\neq 0,2$. Then
\(
\frac{\partial}{\partial\alpha}\rho(\epsilon;\alpha,c)\ \ge\ 0,
\)
and the inequality is strict whenever $X\neq 0$.
\end{lemma}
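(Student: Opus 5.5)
The plan is to avoid differentiating the closed form directly, because the prefactor $|\alpha-2|/\alpha$ is singular at $\alpha=0$. Instead I will rewrite $\rho$ as an integral whose integrand is visibly monotone in $\alpha$. I use the adaptive loss of \citet{BarronCVPR2019} as recalled in Appendix~\ref{IntroductionAdaptive}:
\[
\rho(\epsilon;\alpha,c)=\frac{|\alpha-2|}{\alpha}\Big[\Big(\frac{(\epsilon/c)^2}{|\alpha-2|}+1\Big)^{\alpha/2}-1\Big].
\]
Set $z=(\epsilon/c)^2\ge 0$ and $\beta=|\alpha-2|=2-\alpha>0$; since $\alpha\le 2$ and $\alpha\ne 2$, we have $\beta>0$.

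The first step is the elementary identity $(1+u)^{p}-1=p\int_0^u(1+t)^{p-1}\,dt$, applied with $p=\alpha/2$ and $u=z/\beta$. The factor $\alpha/2$ cancels the $1/\alpha$ in the prefactor. Substituting $t=s/\beta$ then gives
\[
\rho(\epsilon;\alpha,c)=\frac12\int_0^{z}\Big(1+\frac{s}{\beta}\Big)^{-\beta/2}\,ds ,\qquad \beta=2-\alpha .
\]
This formula is smooth in $\alpha$ on $(-\infty,2)$, so the case $\alpha=0$ causes no trouble. Differentiation under the integral sign is justified because the integrand is smooth in $(s,\beta)$ on the compact set $[0,z]\times$(any compact $\beta$-interval in $(0,\infty)$).

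The second step reduces the claim to monotonicity of the integrand. It suffices to show that for each fixed $s>0$, $h(\beta)=\log(1+s/\beta)^{-\beta/2}=-\tfrac{\beta}{2}\log(1+s/\beta)$ is strictly decreasing in $\beta$. Since $d\beta/d\alpha=-1$, the integrand is then strictly increasing in $\alpha$. Computing,
\[
\frac{d}{d\beta}\Big[\beta\log\Big(1+\frac{s}{\beta}\Big)\Big]=\log\Big(1+\frac{s}{\beta}\Big)-\frac{s}{\beta+s}.
\]
With $y=s/\beta>0$, this equals $\log(1+y)-\tfrac{y}{1+y}$, which is strictly positive. The reason is that $\phi(y)=\log(1+y)-y/(1+y)$ has $\phi(0)=0$ and $\phi'(y)=y/(1+y)^2>0$. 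Hence $h'(\beta)<0$, and therefore $\partial_\alpha$ of the integrand is strictly positive for $s>0$.

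The final step integrates over $s\in[0,z]$. This gives $\partial_\alpha\rho\ge 0$ always, with equality exactly when $z=0$. When $\epsilon\neq 0$ (the ``$X\neq0$'' in the statement), the integrand's derivative is positive on a set of positive measure, so the inequality is strict. The only real obstacle is finding the integral representation that removes the $1/\alpha$ singularity. After that, everything reduces to the one-variable inequality $\log(1+y)>y/(1+y)$.
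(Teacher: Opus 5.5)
Your argument is correct, but it takes a different route from the paper. The paper differentiates the closed form directly, treating $\alpha<2$ and $\alpha>2$ separately. For fixed $\alpha$ it regards $\partial_\alpha\rho$ as a function of $y=1+(\epsilon/c)^2/|\alpha-2|\ge 1$, notes that this function vanishes at $y=1$, and computes its $y$-derivative as $\frac{|\alpha-2|}{4}\,y^{\alpha/2-2}\,(y\log y-y+1)\ge 0$. You instead rewrite $\rho=\frac12\int_0^{z}(1+s/\beta)^{-\beta/2}\,ds$. This cancels the $1/\alpha$ prefactor and reduces everything to pointwise monotonicity of the integrand in $\alpha$. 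The decisive inequality is the same in both proofs: substituting $y\mapsto 1+y$ in $y\log y\ge y-1$ gives exactly your $\log(1+y)\ge y/(1+y)$. Your packaging avoids the three-term derivative bookkeeping and makes strictness immediate. It also shows more: $\rho$ is smooth in $\alpha$ through $\alpha=0$, with $\partial_\alpha\rho>0$ there, so excluding $\alpha=0$ is unnecessary. The paper's closing remark attributing non-differentiability at $\alpha=0$ to the absolute value is therefore inaccurate; only $\alpha=2$ is genuinely singular. The paper's proof does cover one case you omit, namely $\alpha>2$, which the statement does not exclude. Your restriction $\beta=2-\alpha$ matches the paper's declared domain $\alpha\in(-\infty,2]$, and your method extends verbatim. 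For $\alpha>2$ the same substitution gives $\rho=\frac12\int_0^{z}(1+s/\beta)^{\beta/2}\,ds$ with $\beta=\alpha-2$. The same inequality then shows this integrand increases in $\beta$, hence in $\alpha$.
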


\begin{figure}[H]
\centering
\includegraphics[scale=0.23]{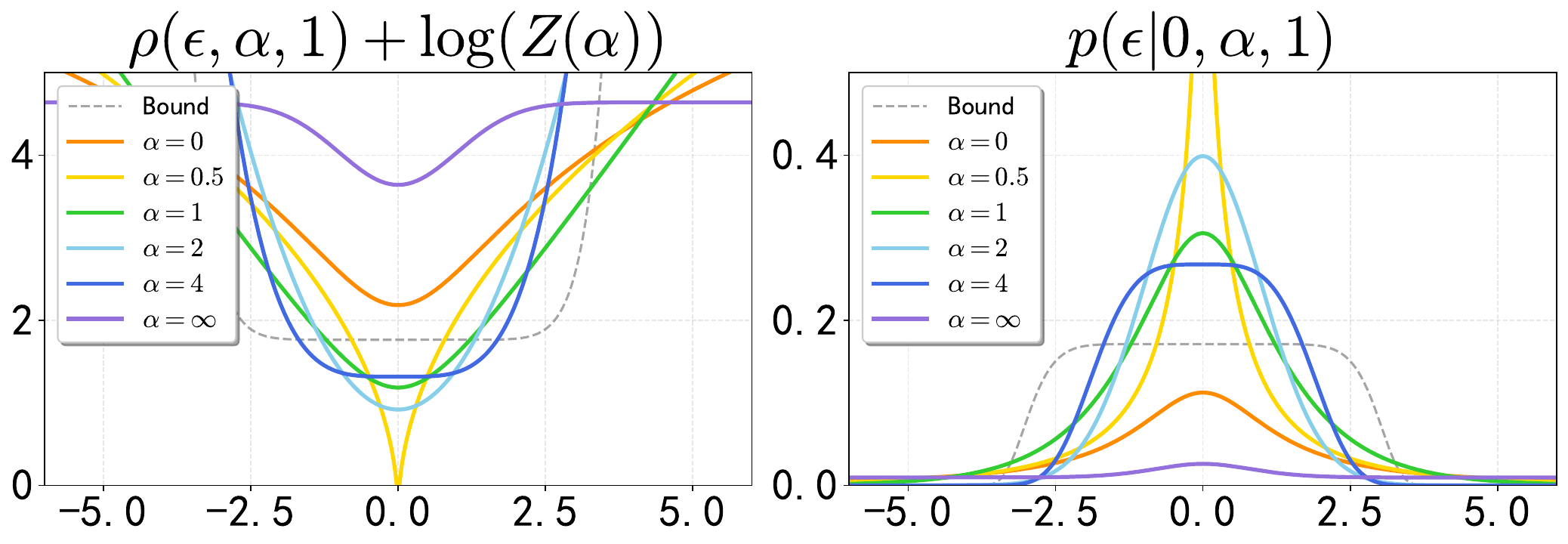}
\caption{The probability distribution and its negative log-likelihood corresponding to the adaptive loss function. The left panel shows the negative log-likelihood function $-\log p(\epsilon;\alpha,1) = \rho(\epsilon;\alpha,1) + \log Z(\alpha)$ for different shape parameters $\alpha = 0, 0.5, 1, 2, 4, \infty$; The right panel shows the corresponding probability density function $p(\epsilon;\alpha,1) = \exp\{-\rho(\epsilon;\alpha,1)\}/Z(\alpha)$.}\label{adaptivellospdfnll} 
\end{figure}

Lemma~\ref{Monotonicitybaron} indicates that, although it may seem natural to estimate the parameters by directly solving~\eqref{parastep}, this approach is in fact ill-posed: the objective admits a degenerate solution that pushes the shape parameter \(\alpha\) to excessively small values.
To avoid this collapse, we follow \citet{BarronCVPR2019,jung2024adaptive} and adopt a likelihood-based calibration.

Concretely, we consider the exponential-family density
\(
p(\epsilon;\alpha,c) =\exp(-\rho(\epsilon;\alpha,c)) / cZ(\alpha)
\)
with \(Z(\alpha)=\int_{-\infty}^{\infty}\exp(-\rho(u;\alpha,1))du\) so that \(Z(\alpha)\) normalizes the distribution. The corresponding negative log-likelihood is
\(
-\log p(\epsilon;\alpha,c)
\;=\;
\rho(\epsilon;\alpha,c) + \log c + \log Z(\alpha).
\)

Figure~\ref{adaptivellospdfnll} shows the induced density and its negative log-likelihood, clarifying the effect of $\alpha$. Decreasing $\alpha$ flattens the tails, reducing the penalty on large residuals (outliers) while increasing the relative penalty near the origin (inliers). Estimation therefore involves a trade-off: smaller $\alpha$ discounts large errors but penalizes small ones more, whereas larger $\alpha$ approaches least squares, favoring inliers while charging more for outliers. This trade-off encourages the method to adapt its robustness to the empirical residual distribution, avoiding the degenerate behavior implied by Lemma~\ref{Monotonicitybaron}.

Following the aforementioned discussion, we estimate the parameters $\alpha$ and $c$ by minimizing the negative log-likelihood function based on the sample, which is defined as
\begin{equation}\label{realstep1}
\begin{aligned}
\alpha^{(1)},\, c^{(1)}
&= \arg\min_{\alpha,\,c>0}
\sum_{i=1}^m \rho\bigl(X_i - \widetilde{X}^{(0)};\alpha,c\bigr)
\\
& + m\bigl(\log c + \log Z(\alpha)\bigr).
\end{aligned}
\end{equation}

It is worth noting that in optimization problem (\ref{realstep1}), we need to jointly optimize \(\alpha\) and \(c\), which differs from methods such as \citet{jung2024adaptive, 9361339} that optimize only \(\alpha\), and may entail greater computational challenges.

\subsection{Estimation of the Mean} \label{Estimationmean}
Upon obtaining $\alpha^{(1)}$ and $c^{(1)}$, directly solving \eqref{Xstep} remains nontrivial since the adaptive robust loss is generally nonconvex.
We therefore adopt the graduated nonconvexity (GNC) paradigm~\citep{10.1007/BF00131148} within an iteratively reweighted least-squares (IRLS) framework~\citep{9361339}.

Let $\epsilon_i = X_i - X$.
By matching the gradients of \eqref{Xstep} and the weighted least-squares objective, \eqref{Xstep} is equivalent to
\begin{equation}\label{weightedls}
\widetilde{X}^{(1)} \;=\; \arg\min_{X\in\mathbb{R}} \frac{1}{2m}\sum_{i=1}^{m} w_i \,(X_i - X)^2,
\end{equation}
where the weights are induced by the loss through
\(
w_i \;=\; \frac{1}{\epsilon_i}\frac{\partial \rho(\epsilon_i;\alpha^{(1)},c^{(1)})}{\partial \epsilon_i}.
\)
However, if $w_i$ are treated as unconstrained free variables and optimized jointly with $X$ in \eqref{weightedls}, the objective admits a degenerate minimum at $w_i=0$ for all $i$ (independent of $X$), rendering the solution meaningless.

To prevent this degeneration, we introduce an outlier-process regularizer on the weights via the Black--Rangarajan duality~\citep{10.1007/BF00131148}.
Define $\phi(z)\coloneqq \rho(c\sqrt{z};\alpha,c)$.
For the adaptive robust loss family, $\phi'(z)$ satisfies
\(
\lim_{z\to 0}\phi'(z)=\frac{1}{2}, \lim_{z\to\infty}\phi'(z)=0, \phi''(z)<0,
\)
which guarantees the existence of an analytical outlier-process function $\Phi_\rho(\cdot)$ such that
\[
\rho(\epsilon;\alpha,c)\;=\;\min_{w\in(0,1]}\frac{1}{2}w\epsilon^2+\Phi_\rho(w).
\]
Consequently, \eqref{weightedls} can be rewritten as the following constrained and regularized problem:
\begin{equation}\label{dualityweight}
\widetilde{X}^{(1)}
\;=\;
\arg\min_{X\in\mathbb{R},\, w_i\in(0,1]} \frac{1}{2m}\sum_{i=1}^m
\Big( w_i (X_i - X)^2 + \Phi_\rho(w_i)\Big).
\end{equation}

When initialization is poor, directly optimizing \eqref{dualityweight} remains challenging due to nonconvexity.
We therefore solve it by GNC: we start from a convex quadratic surrogate and gradually introduce nonconvexity until the original adaptive loss is recovered.
Concretely, we employ a shape mapping $f(\beta,\alpha^{(1)})$ that interpolates from $f=2$ (quadratic) to $f=\alpha^{(1)}$ (target adaptive regime), controlled by a continuation parameter $\beta$.
Details are provided in Appendix~\ref{realstep1solve}.

\section{Theoretical Properties}
We organize this section as follows. We first study the mean estimator with a fixed shape parameter $\alpha\in(0,1]$ and a sample-size--dependent scale $c_m$. We then establish two complementary regimes: (i) a finite-variance regime, in which the estimator is consistent and asymptotically normal at the classical $\sqrt{m}$ rate; and (ii) a heavy-tailed regime, where only a $(1+\epsilon)$-moment is assumed and the estimator attains a high-probability deviation bound of order $\bigl(\log(1/\delta)/m\bigr)^{\epsilon/(1+\epsilon)}$ for the single-block estimator in \eqref{generalform}. Finally, we study the asymptotic theory of the median-of-means aggregation---namely, the ARE in \eqref{ARE}---based on the collection of block means.
Throughout, the same estimating equation underlies all results; the difference between regimes is driven by how $c_m$ is scaled relative to the stochastic fluctuations of the empirical score.

Let $X_1,\dots,X_m$ be i.i.d. real-valued random variables.
The goal is to estimate the mean $\mu:=\E[X_1]$ (assumed to exist). Fix a shape parameter $\alpha\in(0,1]$ and let $\kappa_\alpha := 2-\alpha\in[1,2)$.
We use the adaptive robust loss family
\[
\rho(\varepsilon;\alpha,c)
:=
\frac{\kappa_\alpha}{\alpha}
\left[
\left(
1+\frac{(\varepsilon/c)^2}{\kappa_\alpha}
\right)^{\alpha/2}
-1
\right],
\qquad c>0,
\]
with score (influence) function
\[
\psi(\varepsilon;\alpha,c)
:=
\frac{\partial}{\partial \varepsilon}\rho(\varepsilon;\alpha,c)
=
\frac{\varepsilon}{c^2}
\left(
1+\frac{(\varepsilon/c)^2}{\kappa_\alpha}
\right)^{\alpha/2-1}.
\]
Heuristically, $\rho$ is locally quadratic while $\psi$ downweights large residuals when $|\varepsilon|$ exceeds the scale $c$.

Given a scale $c_m>0$ (possibly depending on $m$), define the empirical risk
\[
\widehat R_m(x):=\frac{1}{m}\sum_{i=1}^m \rho(X_i-x;\alpha,c_m),
\]
and the estimating equation
\[
\widehat g_m(x)
:=
\frac{d}{dx}\widehat R_m(x)
=
-\frac{1}{m}\sum_{i=1}^m \psi(X_i-x;\alpha,c_m).
\]
Let $R_m(x):=\E[\rho(X-x;\alpha,c_m)]$ and $g_m(x):=R_m'(x)=-\E[\psi(X-x;\alpha,c_m)]$.
Our proofs compare $\widehat g_m$ to its population counterpart $g_m$ on a neighborhood of $\mu$ to localize a stable root.

For $\alpha<1$, $\widehat g_m(x)=0$ may have multiple roots due to non-convexity.
We therefore define a \emph{central root} in a controlled interval.
Let $\widehat\mu_{\mathrm{pil}}$ be any pilot estimator and choose a radius $r_m>0$ such that $r_m\to\infty$ and $r_m=o(c_m)$.
Define
\(
\mathcal I_m := [\widehat\mu_{\mathrm{pil}}-r_m,\ \widehat\mu_{\mathrm{pil}}+r_m].
\)
\begin{definition}[Central-root estimator]
If the equation $\widehat g_m(x)=0$ has a unique solution in $\mathcal I_m$, denote it by $\widetilde X_m$.
\end{definition}
The only role of $\mathcal I_m$ is to rule out spurious distant roots while keeping $\mu$ inside with high probability.

\subsection{Finite-Variance Regime}
\begin{assumption}[Finite variance]
\label{ass:FV}
$\E[X_1]=\mu$ and $\E[(X_1-\mu)^2]=\sigma^2\in(0,\infty)$.
\end{assumption}

We use a purely asymptotic deterministic tuning: for a fixed $\gamma>0$, let
\(
c_m := m^{1/2+\gamma}, \quad r_m := m^{\gamma/4}.
\)
With this choice, $c_m$ diverges fast enough that the score is essentially linear on $\mathcal I_m$, recovering the classical $\sqrt{m}$ behavior.

\begin{theorem}
\label{thm:FV}
Assume Assumption~\ref{ass:FV} and fix $\alpha\in(0,1]$.
With the tuning above and with $\widetilde X_m$ defined as the unique central root of $\widehat g_m(x)=0$ in $\mathcal I_m$, there exists an event $\mathcal E_m$ with $\Pbb(\mathcal E_m)\to 1$ such that, on $\mathcal E_m$, $\widehat g_m$ has a unique zero in $\mathcal I_m$; moreover, $\widetilde X_m\xrightarrow{p}\mu$, and $\sqrt m(\widetilde X_m-\mu)\Rightarrow N(0,\sigma^2)$.
\end{theorem}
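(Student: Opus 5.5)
Throughout, I assume the pilot satisfies $\abs{\widehat\mu_{\mathrm{pil}}-\mu}=O_p(m^{-1/2})$; the sample mean or the sample median are natural choices. This is stronger than needed: it is enough that $\Pbb(\abs{\widehat\mu_{\mathrm{pil}}-\mu}\le r_m/2)\to1$. I would then work with the rescaled score. Put $u(\varepsilon):=\varepsilon^2/(\kappa_\alpha c_m^2)$ and
\[
h(\varepsilon):=c_m^2\psi(\varepsilon;\alpha,c_m)=\varepsilon\,(1+u)^{\alpha/2-1},\qquad \widehat H_m(x):=\frac1m\sum_{i=1}^m h(X_i-x).
\]
Then $\widehat g_m=-c_m^{-2}\widehat H_m$, so the two functions have the same zeros. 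A direct differentiation gives
\[
h'(\varepsilon)=(1+u)^{\alpha/2-2}\bigl[1+(\alpha-1)u\bigr].
\]
From this I would derive the elementary bound $h'(\varepsilon)\ge 1-3u$ for all $\varepsilon$. For $u\le 1$ it follows from $(1+u)^{\alpha/2-2}\ge 1-2u$ and $1+(\alpha-1)u\ge 1-u$. For $u>2/3$ it follows from $h'\ge -1$, which holds because $\abs{1+(\alpha-1)u}\le 1+u$ and $(1+u)^{\alpha/2-2}\le(1+u)^{-1}$. I would also record $\abs{h(\varepsilon)-\varepsilon}\le |\varepsilon|\min(Cu,2)\le C'\varepsilon^2/c_m$.

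\textbf{Step 1: uniform monotonicity on $\mathcal I_m$.} Let $\mathcal E_m$ be the event that (a) $\abs{\widehat\mu_{\mathrm{pil}}-\mu}\le r_m/2$, (b) $\frac1m\sum_i (X_i-\mu)^2\le 2\sigma^2$, and (c) $\abs{\bar X-\mu}\le m^{-1/2}\log m$. Each holds with probability tending to one, by the pilot assumption, the WLLN and Chebyshev respectively.

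On $\mathcal E_m$ every $x\in\mathcal I_m$ satisfies $\abs{x-\mu}\le 2r_m$. Hence
\[
\frac1m\sum_i u(X_i-x)\le\frac{2}{\kappa_\alpha c_m^2}\Bigl(\frac1m\sum_i(X_i-\mu)^2+4r_m^2\Bigr)=O(r_m^2/c_m^2)\to0 .
\]
Combined with $h'\ge1-3u$, this gives $\inf_{x\in\mathcal I_m}\frac1m\sum_i h'(X_i-x)\ge 1-o(1)$. So $\widehat H_m$ is strictly decreasing on $\mathcal I_m$ and has at most one zero there.

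\textbf{Step 2: linearization at $\mu$.} Write $\widehat H_m(\mu)=(\bar X-\mu)+\frac1m\sum_i[h(X_i-\mu)-(X_i-\mu)]$. By the bound on $h-\mathrm{id}$,
\[
\sqrt m\,\E\bigl|h(X-\mu)-(X-\mu)\bigr|\le C'\sqrt m\,\sigma^2/c_m=C'\sigma^2m^{-\gamma}\to0 .
\]
Therefore $\sqrt m\,\widehat H_m(\mu)=\sqrt m(\bar X-\mu)+o_p(1)$, which converges to $N(0,\sigma^2)$ by the CLT.

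\textbf{Step 3: existence and asymptotics of the root.} For $x=\mu\pm t_m$ with $t_m=2m^{-1/2}\log m$, the mean value theorem gives
\[
\widehat H_m(x)=\widehat H_m(\mu)-(x-\mu)\,\overline{h'}(\xi),\qquad \overline{h'}(\xi):=\frac1m\sum_i h'(X_i-\xi),
\]
for some $\xi$ between $\mu$ and $x$, and Step 1 applies on all of $[\mu-t_m,\mu+t_m]$. Since $\widehat H_m(\mu)=O_p(m^{-1/2})$, the values at $\mu+t_m$ and $\mu-t_m$ have opposite signs with probability tending to one. This interval lies inside $\mathcal I_m$, because $t_m=o(r_m)$ and the pilot is within $r_m/2$ of $\mu$. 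Enlarging $\mathcal E_m$ by this sign condition gives a unique zero $\widetilde X_m$ of $\widehat g_m$ in $\mathcal I_m$, with $\abs{\widetilde X_m-\mu}\le t_m$; in particular $\widetilde X_m\xrightarrow{p}\mu$. Finally, evaluating the same expansion at $x=\widetilde X_m$ yields
\[
\sqrt m(\widetilde X_m-\mu)=\frac{\sqrt m\,\widehat H_m(\mu)}{\overline{h'}(\xi_m)},
\]
and the denominator tends to $1$ in probability by Step 1. Slutsky's lemma then gives the limit $N(0,\sigma^2)$.

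I expect Step 1 to be the main obstacle. It needs a lower bound on $h'$ that holds globally, despite the nonconvexity of $\rho$ for $\alpha<1$, together with the requirement that $\mathcal I_m$ contains $\mu$. The bound $h'\ge1-3u$, combined with the averaged second-moment control, handles the nonconvexity without any truncation argument.
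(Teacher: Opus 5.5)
Your proof is correct, and it takes a genuinely different and more elementary route than the paper.

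\textbf{The paper's route.} The paper centers everything at the population root $\theta_m$ of $g_m$. It proves $\abs{g_m'}\asymp c_m^{-2}$ near $\mu$ and bounds the bias by $\abs{\theta_m-\mu}\le C\sigma^2/c_m$. It then asserts, without details, a uniform law of large numbers for $\widehat g_m'$ to get monotonicity. Finally it combines a Bahadur expansion around $\theta_m$ with a Lindeberg triangular-array CLT for $c_m^2[\psi(X_i-\theta_m)-\E\psi(X-\theta_m)]$.

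\textbf{Your route.} You never introduce $\theta_m$. The pointwise sandwich $1-3u\le h'\le 1$, together with the WLLN for $\frac1m\sum_i(X_i-\mu)^2$, gives two things deterministically on the event: monotonicity on all of $\mathcal I_m$, and $\overline{h'}(\xi_m)\to1$. No empirical-process argument is needed. The single estimate $\sqrt m\,\E\abs{h(X-\mu)-(X-\mu)}\le C\sigma^2 m^{-\gamma}$ then replaces both the bias lemma and the triangular-array CLT with the ordinary CLT for $\bar X$. As a by-product you get $\sqrt m(\widetilde X_m-\bar X)=o_p(1)$.

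\textbf{A weak point in the paper that you avoid.} The endpoint sign argument in the paper's Step~2 compares $r_m/c_m^2$ with the crude bound $\sup\abs{\widehat g_m-g_m}=O_p(1/(c_m\sqrt m))$, which comes from $\abs{\psi}\le L_\alpha/c_m$. Under the stated tuning,
$r_m/c_m^2=m^{-1-7\gamma/4}\ll m^{-1-\gamma}=1/(c_m\sqrt m)$,
so the claimed domination fails as written. Repairing it requires the finer bound $\abs{\psi}\le\abs{\varepsilon}/c_m^2$, which your argument effectively uses.

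\textbf{What the paper's template buys.} It keeps the proof uniform with Theorem~\ref{thm:HT}. There only a $(1+\epsilon)$-moment is available, comparison with $\bar X$ is useless, and one must concentrate the bounded score around $\theta_m$ instead.

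\textbf{Two small remarks.}
\begin{enumerate}
\item Your explicit pilot condition $\Pbb(\abs{\widehat\mu_{\mathrm{pil}}-\mu}\le r_m/2)\to1$ is genuinely needed, not a convenience. If, say, $\widehat\mu_{\mathrm{pil}}=\mu+3r_m$, then with high probability $\mathcal I_m$ contains no root, so the statement's ``any pilot'' is too generous. The paper tacitly works on $[\mu-r_m,\mu+r_m]$ instead of $\mathcal I_m$.
\item The limit $\overline{h'}(\xi_m)\to1$ also uses the upper bound $h'\le1$ from your preliminary paragraph, not only the lower bound from Step~1. You should cite it there.
\end{enumerate}
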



\begin{figure*}[htbp]
\centering
\includegraphics[width=0.95\textwidth]{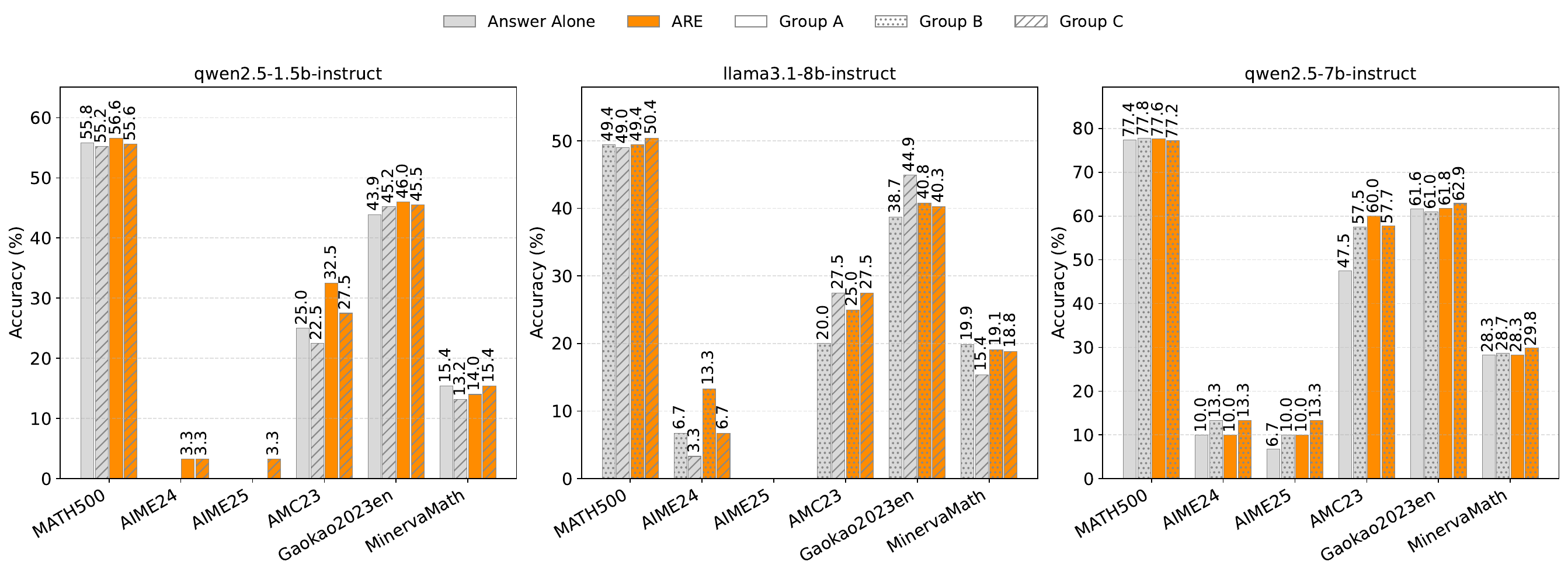}
\caption{Comparison of the effectiveness of the three-stage interaction model,The grey-filled area represents the experimental results without the three-stage interaction, while the orange part shows the results obtained with the three-stage interaction. It can be seen that the three-stage interactive solution is effective.}
\label{fig:effect_of_threeStage}
\end{figure*}

\begin{figure}[htbp]
\centering
\includegraphics[width=0.49\textwidth]{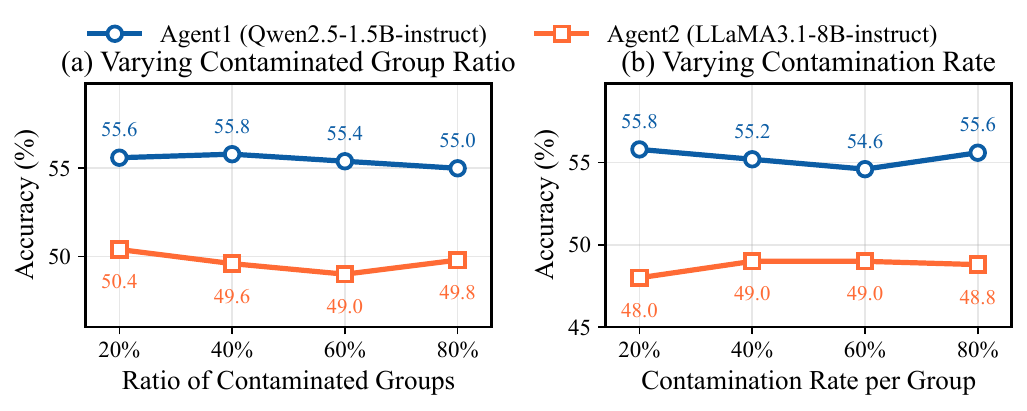}
\caption{The performance of the proposed method under different noise conditions is illustrated: the left figure shows contamination by noise groups of varying proportions, while the right figure depicts contamination ratios within the same group. It can be observed that the proposed method demonstrates strong noise resistance.}
\label{fig:noise_rate}
\vspace{-2mm}
\end{figure}

\begin{figure*}[htbp]
\centering
\includegraphics[width=1.0\textwidth]{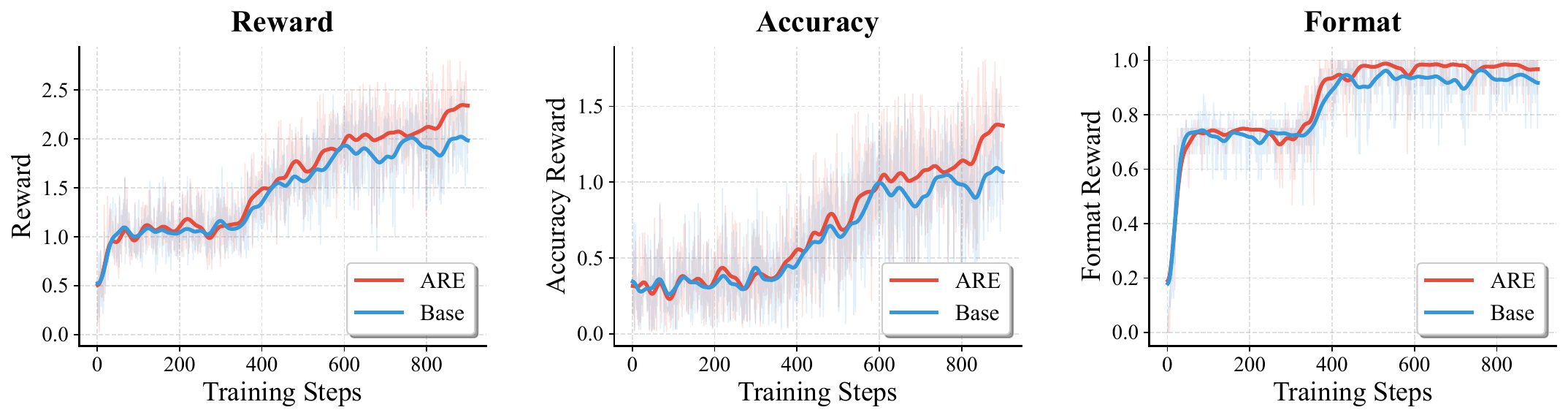}
\caption{Training reward curves of ARE and FlightGPT.}
\label{fig:ARE_train_curve}
\end{figure*}

\subsection{Heavy-Tailed Regime}
\begin{assumption}[$(1+\epsilon)$-moment]
\label{ass:HT}
Fix $\epsilon\in(0,1]$.
Assume $\E[X_1]=\mu$ and $\E[\abs{X_1-\mu}^{1+\epsilon}]\le v_{1+\epsilon}<\infty$.
\end{assumption}

For any confidence level $\delta\in(0,1/2)$, choose
\(
c_m(\delta):=\tau\left(\frac{v_{1+\epsilon}\,m}{\log(2/\delta)}\right)^{\frac{1}{1+\epsilon}}, \tau>0,
\)
and take any $r_m\to\infty$ with $r_m=o(c_m(\delta))$ (e.g., $r_m=c_m(\delta)^{1/2}$).
Here $c_m(\delta)$ is tuned to balance (controlled) bias from downweighting large residuals with concentration of the empirical score.

\begin{theorem}
\label{thm:HT}
Assume Assumption~\ref{ass:HT} and fix $\alpha\in(0,1]$.
With $c_m=c_m(\delta)$ and the central-root estimator $\widetilde X_m$, there exist constants $C_1,C_2>0$ depending only on $\alpha,\epsilon,\tau$ such that for all sufficiently large $m$,
\[
\Pbb\left(
\abs{\widetilde X_m-\mu}
\le
C_1\left(\frac{v_{1+\epsilon}\log(C_2/\delta)}{m}\right)^{\frac{\epsilon}{1+\epsilon}}
\right)\ge 1-\delta.
\]
\end{theorem}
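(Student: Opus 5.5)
The plan is to compare the empirical score $\widehat g_m$ with its population counterpart $g_m$ at two points, $\mu\pm t$, where
\[
t \asymp \Bigl(\tfrac{v_{1+\epsilon}\log(2/\delta)}{m}\Bigr)^{\epsilon/(1+\epsilon)} .
\]
If $\widehat g_m(\mu-t)<0<\widehat g_m(\mu+t)$, continuity gives a root in $[\mu-t,\mu+t]$. The argument then shows this root lies in $\mathcal I_m$ and is the unique root there, so it coincides with $\widetilde X_m$. Write $\psi_c(u)=\psi(u;\alpha,c)$ and $h(s)=(1+s^2/\kappa_\alpha)^{\alpha/2-1}$, so that $\psi_c(u)=(u/c^2)h(u/c)$. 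The function $h$ satisfies $h(0)=1$, $0<h\le 1$, and $|h(s)-1|\le C\min(s^2,1)$.

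\textbf{Step 1 (bias).} Set $Y=X-\mu$, so $|\E[\psi_c(Y)]|=c^{-2}|\E[Y(h(Y/c)-1)]|$ because $\E Y=0$. Using $|s(h(s)-1)|\le C|s|^{1+\epsilon}$, which holds for all $s$ because $|s|\min(s^2,1)\le |s|^{1+\epsilon}$ when $\epsilon\le1$, one gets $|\E\psi_c(Y)|\le C v_{1+\epsilon}c^{-1-\epsilon}\cdot c^{-1}\cdot c$.

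More cleanly, rescale the score as $\tilde\psi=c^2\psi_c$, which has units of $Y$. Then $|\E\tilde\psi(Y)|\le C v_{1+\epsilon}c^{-\epsilon}$. The same bound holds with $Y$ replaced by $Y\mp t$ for $t\le c$, after adjusting the constant via $\E|Y\mp t|^{1+\epsilon}\lesssim v+t^{1+\epsilon}$.

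\textbf{Step 2 (slope).} Show that $x\mapsto -\E\tilde\psi(X-x)$ has derivative $\E[\tilde\psi'(X-x)]\ge c_\alpha>0$ uniformly for $|x-\mu|\le t$. Since $\tilde\psi'(u)=(1-(1-\alpha)s^2/\kappa_\alpha)(1+s^2/\kappa_\alpha)^{\alpha/2-2}$ with $s=u/c$, it is $\ge 1/2$ for $|u|\le a c$ for a small constant $a$, and it is bounded below by $-C$ everywhere. By Markov, $\Pbb(|Y|>ac/2)\le v(ac/2)^{-1-\epsilon}\to0$. This gives $\E\tilde\psi(X-\mu-t)\le \text{bias}-c_\alpha t$, and symmetrically at $\mu-t$.

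\textbf{Step 3 (concentration).} The summands $\tilde\psi(X_i-x)$ are bounded by $C c$, since $|\tilde\psi(u)|\le |u|h(u/c)\lesssim c\,|s|^{\alpha-1}\le Cc$ for $|s|\ge1$ and $\alpha\le1$. Their variance is at most $\E\min(Y^2,c^2)\lesssim v c^{1-\epsilon}$. Bernstein's inequality at the two points $x=\mu\pm t$, with a union bound, gives deviations
\[
\lesssim \sqrt{v c^{1-\epsilon}\log(2/\delta)/m}+c\log(2/\delta)/m
\]
with probability $1-\delta$. With $c=c_m(\delta)$ both terms and the bias are of order $(v\log(2/\delta)/m)^{\epsilon/(1+\epsilon)}\cdot v^{1/(1+\epsilon)}$ up to the $\tau$ factors; I will track the $v$ powers carefully so that the final rate is as stated. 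Choosing $t=C_1(\cdot)^{\epsilon/(1+\epsilon)}$ large relative to the noise and bias divided by $c_\alpha$ yields the sign change.

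\textbf{Step 4 (uniqueness in $\mathcal I_m$).} This is the main obstacle, because the objective is nonconvex and the pilot is arbitrary. I would argue that on $\mathcal I_m$, whose points all lie within $r_m+|\widehat\mu_{\mathrm{pil}}-\mu|=o(c)$ of $\mu$ provided the pilot is within $o(c)$ (to be assumed, or ensured by taking the pilot to be a median), the empirical derivative $\frac1m\sum\tilde\psi'(X_i-x)$ is positive uniformly. The reason is that it is at least $\frac12\cdot\#\{|X_i-x|\le ac\}/m - C\#\{|X_i-x|>ac\}/m$, and the fraction with $|X_i-\mu|>ac/2$ is $O(v c^{-1-\epsilon}+\log(1/\delta)/m)=O(\log(2/\delta)/m)$ with probability $1-\delta$ by Bernstein for indicators. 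Hence $\widehat g_m$ is strictly increasing on $\mathcal I_m$, so the root is unique. The root from Step 3 lies in $\mathcal I_m$ because $t=o(r_m)$ for large $m$, with the pilot concentrated at rate $o(r_m)$. Splitting $\delta$ across the events and adjusting the constant gives $C_2$.
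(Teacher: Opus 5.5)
Your proof is sound, and it takes a genuinely different route from the paper's.

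\textbf{How the paper argues.} The paper centres everything at the population root $\theta_m$ of $g_m$. It bounds the bias $\abs{\theta_m-\mu}\le C v_{1+\epsilon}c_m^{-\epsilon}$ and applies Bernstein to $\widehat g_m(\theta_m)$. It then converts this into an error bound through the mean-value identity $\widetilde X_m-\theta_m=-\widehat g_m(\theta_m)/\widehat g_m'(\bar X_m)$. That last step needs a lower bound on $\widehat g_m'$ at a random intermediate point, which the paper attributes to an unspecified ``uniform LLN'' on $\mathcal I_m$. It also works ``on the event where $\widehat g_m$ \ldots crosses zero in $\mathcal I_m$'' without ever bounding that event's probability.

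\textbf{What your route changes.} Bracketing at the deterministic points $\mu\pm t$ avoids the population root altogether. The intermediate value theorem gives existence and localization at once, and concentration is needed only at two fixed points. Your uniform slope bound reduces to a single binomial tail for $\#\{i:\abs{X_i-\mu}>ac/2\}$, which does not depend on $x$; this is exactly the ingredient the paper's mean-value step leaves unproved. You also make explicit that the pilot must be accurate to $o(r_m)$. That requirement is genuinely necessary, since for an arbitrary pilot $\mathcal I_m$ need not contain $\mu$. The sample median suffices, because its error is $O(v_{1+\epsilon}^{1/(1+\epsilon)})$ with exponentially small failure probability.

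\textbf{What each buys.} The paper's route gives a Bahadur-type linearization with the same skeleton as the finite-variance CLT proof. Yours gives only the deviation bound, but it is self-contained and all the probability accounting is visible.

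\textbf{The $v_{1+\epsilon}$ bookkeeping.} Your promise to ``track the $v$ powers so that the final rate is as stated'' cannot be kept. With $L=\log(2/\delta)$ and $c=c_m(\delta)$, your three terms are $\tau^{-\epsilon}$, $\tau^{(1-\epsilon)/2}$ and $\tau$ times $v_{1+\epsilon}^{1/(1+\epsilon)}(L/m)^{\epsilon/(1+\epsilon)}$. Your intermediate expression carries an extra factor $v_{1+\epsilon}^{\epsilon/(1+\epsilon)}$. What you actually prove is $\abs{\widetilde X_m-\mu}\le C_1 v_{1+\epsilon}^{1/(1+\epsilon)}(\log(C_2/\delta)/m)^{\epsilon/(1+\epsilon)}$, which coincides with the theorem only when $\epsilon=1$.

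For $\epsilon<1$ the stated bound with $v$-free constants is inconsistent with scaling. The map $X\mapsto\lambda X$ sends $v_{1+\epsilon}\mapsto\lambda^{1+\epsilon}v_{1+\epsilon}$ and $c_m\mapsto\lambda c_m$. With $\mathcal I_m$ rescaled, it maps the central root to $\lambda$ times itself, so the error scales like $\lambda$. By contrast, $(v_{1+\epsilon}\log(C_2/\delta)/m)^{\epsilon/(1+\epsilon)}$ scales like $\lambda^{\epsilon}$. The paper's ``direct calculation'' makes the same slip, so you should state your conclusion with $v_{1+\epsilon}^{1/(1+\epsilon)}$.

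\textbf{Minor slips.} Your first bias display loses a factor of $c$: unscaled it is $C v_{1+\epsilon}c^{-2-\epsilon}$, which is consistent with your rescaled $C v_{1+\epsilon}c^{-\epsilon}$. Also, $\E\tilde\psi(Y\mp t)$ equals $\mp t+O\bigl((v_{1+\epsilon}+t^{1+\epsilon})c^{-\epsilon}\bigr)$, not $O(v_{1+\epsilon}c^{-\epsilon})$. Neither affects the argument, since Step~2 handles the shift.
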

This single-block deviation bound is the basic ingredient for median-of-means aggregation, which amplifies constant-success probability to an arbitrary confidence level.

\subsection{Asymptotic Theory for ARE}
We now aggregate across blocks using a sample median. In the finite-variance regime, a block-level CLT combined with the classical median CLT yields a $\sqrt{n}$ limit and the usual $\pi/2$ variance inflation, giving a constant asymptotic relative efficiency.

\begin{theorem}
\label{thm:ARE-FV}
Assume Assumption~\ref{ass:FV} and fix $\alpha\in(0,1]$.
Let $k=k_n\to\infty$ and $k_n=o(n)$, and set $m:=\lfloor n/k_n\rfloor\to\infty$.
On each block, compute $\widetilde X_j$ using the finite-variance tuning in
Theorem~\ref{thm:FV}, namely for a fixed $\gamma>0$,
\(
c_m := m^{1/2+\gamma},r_m:=m^{\gamma/4}.
\)
Assume in addition that the block-level distribution of $\widetilde X_j$ admits the local normal approximation
needed for the sample-median CLT (formalized in Appendix~\ref{app:ARE}).
Then
\[
\sqrt n\,(\widetilde\mu-\mu)\ \Rightarrow\ N\!\left(0,\frac{\pi}{2}\sigma^2\right).
\]
Consequently, relative to the sample mean $\bar X_n$, the asymptotic relative efficiency is
\(
\frac{2}{\pi}.
\)
\end{theorem}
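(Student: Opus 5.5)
We argue in three layers: a block-level linearization taken from Theorem~\ref{thm:FV}, a uniform-in-$j$ normal approximation for the block estimators, and then a triangular-array sample-median CLT. The blocks $B_1,\dots,B_k$ are disjoint and the $X_i$ are i.i.d., so $\widetilde X_1,\dots,\widetilde X_k$ are i.i.d. copies of the single-block central-root estimator $\widetilde X_m$ with block size $m=\lfloor n/k_n\rfloor$. Any leftover $n-km$ samples are discarded; this changes only lower-order terms, since $km/n\to1$.

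Write $\widetilde X_j=\mu+\sigma m^{-1/2}Z_{j}$. Let $F_m$ denote the common distribution function of $Z_j$. Theorem~\ref{thm:FV} gives $F_m\to\Phi$ pointwise, hence uniformly by P\'olya's theorem. The sample median of the $\widetilde X_j$ is $\mu+\sigma m^{-1/2}\,\mathrm{med}(Z_1,\dots,Z_k)$. Its scaled error is then
\[
\sqrt n(\widetilde\mu-\mu)\approx \sigma\sqrt{k}\,\mathrm{med}(Z_1,\dots,Z_k).
\]
It therefore suffices to show $\sqrt k\,\mathrm{med}(Z_{1:k})\Rightarrow N(0,\pi/2)$. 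Note that $1/(4\phi(0)^2)=\pi/2$.

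For the median step we use the standard exact-binomial representation. For fixed $t$,
\[
\Pbb\bigl(\sqrt k\,\mathrm{med}\le t\bigr)=\Pbb\bigl(\mathrm{Bin}(k,p_k)\ge \lceil k/2\rceil\bigr),\qquad p_k:=F_m(t/\sqrt k).
\]
If $p_k=\tfrac12+\phi(0)t/\sqrt k+o(k^{-1/2})$, the de Moivre--Laplace theorem gives the limit $\Phi(2\phi(0)t)$, which is the claimed $N(0,\pi/2)$ law.

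The crux, and the main obstacle, is exactly this local expansion of $F_m$ at the shrinking scale $k^{-1/2}$, including the centering $F_m(0)=\tfrac12+o(k^{-1/2})$. Weak convergence alone does not give this. It is precisely the ``local normal approximation'' that the theorem assumes (formalized in Appendix~\ref{app:ARE}). We plan to take it as the hypothesis
\[
\sup_{|s|\le \eta_k}\bigl|F_m(s)-\Phi(s)\bigr|=o(k^{-1/2})
\]
on a neighborhood of size $\eta_k\gg k^{-1/2}$.

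To justify that this hypothesis is natural, we would indicate two further points. First, on the event $\mathcal E_m$ of Theorem~\ref{thm:FV}, a Taylor expansion of $\widehat g_m$ around $\mu$ gives
\[
\widetilde X_m-\mu=\bar X_m-\mu+O_p(\text{remainder}).
\]
The remainder is controlled by $r_m/c_m\to0$ and by the curvature of $\psi$, which is of order $c_m^{-2}$, relative to the linear part. With $c_m=m^{1/2+\gamma}$, the remainder is $o_p(m^{-1/2})$. Second, for the normalized mean $\bar X_m$, the Berry--Esseen theorem (under a third moment) or an Edgeworth-type bound gives the required rate when $k_n$ grows slowly relative to $m$.

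Finally, $\Pbb(\mathcal E_m^c)$ enters each block. A union bound then requires $k\,\Pbb(\mathcal E_m^c)\to0$, or, more weakly, that failures perturb $F_m$ by only $o(k^{-1/2})$; this is again absorbed into the stated assumption. The efficiency claim follows at once by comparing $\tfrac{\pi}{2}\sigma^2$ with the CLT variance $\sigma^2$ of $\bar X_n$, which gives $2/\pi$.
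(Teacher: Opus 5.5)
Your argument is correct, given the assumed local normal approximation. It shares the paper's skeleton: the block-level CLT from Theorem~\ref{thm:FV}, an assumed local behaviour of the block law near $\mu$, a sample-median CLT, and rescaling with $1/(4\phi(0)^2)=\pi/2$.

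The real difference is in the median step.
\begin{itemize}
\item \emph{The paper.} It formalizes the assumption as $F_m(\mu)=\tfrac12$ and $f_m(\mu)=\frac{\sqrt m}{\sigma\sqrt{2\pi}}(1+o(1))$. It then plugs $F=F_m$ into the classical fixed-distribution median CLT (Lemma~\ref{lem:median-clt}). This produces a ``limit'' $N(0,\frac{\pi}{2}\sigma^2/m)$ that still depends on $n$, which is afterwards multiplied by $\sqrt m$.
\item \emph{Your route.} You normalize to the $Z_j$ and prove a triangular-array median CLT directly, from the binomial representation and de Moivre--Laplace. Your hypothesis is stated in cdf form at the scale $k^{-1/2}$.
\end{itemize}

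What your route buys is rigour. The fixed-$F$ lemma does not literally apply when the law changes with $n$. A density condition at the single point $\mu$ also does not control $F_m$ on the shrinking window $\mu\pm\sigma t/\sqrt{mk}$ without some uniformity in $m$, e.g.\ densities converging uniformly near $\mu$. Your condition is exactly what the binomial argument consumes. It needs no density, and it only asks for centering up to $o(k^{-1/2})$ rather than exact $F_m(\mu)=\tfrac12$. The paper's version is shorter and uses the familiar density language, but it is really a sketch of what your argument makes precise.

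Two remarks on your non-load-bearing justification.
\begin{itemize}
\item An $o_p(m^{-1/2})$ gap between $\widetilde X_m$ and $\bar X_m$, combined with Berry--Esseen, does not by itself give an $o(k^{-1/2})$ cdf error. Berry--Esseen also needs a third moment beyond Assumption~\ref{ass:FV}, plus $k=o(m)$. So this genuinely has to remain an assumption.
\item The union-bound requirement $k\,\Pbb(\mathcal E_m^c)\to0$ is unnecessary. Only the marginal $F_m$ enters the argument, so $\Pbb(\mathcal E_m^c)=o(k^{-1/2})$ suffices; this is your ``more weakly'' alternative.
\end{itemize}
Finally, fix a median convention for even $k$, such as the lower middle order statistic, so that the binomial identity is exact.
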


\begin{theorem}[Heavy-tailed deviation bound]
\label{thm:ARE-HT}
Assume Assumption~\ref{ass:HT} with some $\epsilon\in(0,1]$ and fix $\alpha\in(0,1]$.
Let $\delta\in(0,1/2)$ and choose the number of blocks
\(
k:=\left\lceil 8\log\!\Big(\frac{2}{\delta}\Big)\right\rceil,
m:=\lfloor n/k\rfloor .
\)
On each block, run the central-root estimator $\widetilde X_j$ with the scale
\(
c_m:=c_m(\delta_0)
=
\tau\left(\frac{v_{1+\epsilon}\,m}{\log(2/\delta_0)}\right)^{\frac{1}{1+\epsilon}},
\delta_0:=\frac14,
\)
and any radius $r_m\to\infty$ such that $r_m=o(c_m)$ (e.g., $r_m=c_m^{1/2}$).
Then there exist constants $C_1,C_2>0$ depending only on $\alpha,\epsilon,\tau$ such that
for all sufficiently large $n$,
\[
\mathbb P\!\left(
\bigl|\widetilde\mu-\mu\bigr|
\le
C_1\left(\frac{v_{1+\epsilon}\log(C_2/\delta)}{n}\right)^{\frac{\epsilon}{1+\epsilon}}
\right)\ge 1-\delta.
\]
\end{theorem}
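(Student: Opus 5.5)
This is the standard median-of-means boosting argument, with Theorem~\ref{thm:HT} supplying the single-block bound. Take $\delta_0=1/4$ and apply Theorem~\ref{thm:HT} to one block of size $m$. For all sufficiently large $m$, each block estimator $\widetilde X_j$ then satisfies
\[
\Pbb\bigl(\abs{\widetilde X_j-\mu}>t_m\bigr)\le \tfrac14,
\qquad
t_m:=C_1\left(\frac{v_{1+\epsilon}\log(4C_2)}{m}\right)^{\frac{\epsilon}{1+\epsilon}}.
\]
The blocks $B_1,\dots,B_k$ are disjoint and the data are i.i.d., so the events $E_j:=\{\abs{\widetilde X_j-\mu}>t_m\}$ are independent, each with probability at most $1/4$. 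This needs the central root to be a measurable function of its own block only. The pilot $\widehat\mu_{\mathrm{pil}}$ that defines $\mathcal I_m$ must therefore be computed blockwise, for example as the block median. The event that no unique root exists, or that the root falls outside $\mathcal I_m$, is absorbed into $E_j$ by declaring $\widetilde X_j$ arbitrary there. Theorem~\ref{thm:HT} already controls that event within its failure probability $\delta_0$.

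The median step is deterministic. If $\abs{\widetilde\mu-\mu}>t_m$, then at least $k/2$ of the $\widetilde X_j$ lie on the same side of $\mu$ at distance more than $t_m$. Hence $\sum_j \1_{E_j}\ge k/2$. By Hoeffding's inequality applied to the Bernoulli indicators, whose means are at most $1/4$,
\[
\Pbb\Bigl(\sum_{j=1}^k \1_{E_j}\ge k/2\Bigr)\le \exp\bigl(-2k(1/4)^2\bigr)=e^{-k/8}\le \delta/2\le\delta,
\]
using $k=\lceil 8\log(2/\delta)\rceil$.

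It remains to convert the rate in $m$ into a rate in $n$. Since $m=\lfloor n/k\rfloor\ge n/(2k)$ for large $n$ (we need $n\ge 2k$, which holds for sufficiently large $n$ at fixed $\delta$), and since $k\le 8\log(2/\delta)+1\le C\log(2/\delta)$ for $\delta<1/2$,
\[
t_m\le C_1\bigl(2C\log(4C_2)\bigr)^{\frac{\epsilon}{1+\epsilon}}\left(\frac{v_{1+\epsilon}\log(2/\delta)}{n}\right)^{\frac{\epsilon}{1+\epsilon}}.
\]
Renaming constants gives the stated form with $C_2'=2$, or any larger value. The new constants depend only on $\alpha,\epsilon,\tau$, because $C_1$ and $C_2$ from Theorem~\ref{thm:HT} do.

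The main obstacle is the phrase ``for all sufficiently large $m$'' in Theorem~\ref{thm:HT}. Its threshold depends on $\delta_0=1/4$, which is fixed, but also possibly on the distribution through $v_{1+\epsilon}$ and the localization radius $r_m$. We therefore need $m=\lfloor n/k\rfloor$ to exceed that threshold, which holds for $n$ large relative to $\log(1/\delta)$; I would state the ``sufficiently large $n$'' condition in exactly this $\delta$-dependent sense. A secondary point is checking that the unique-root event on each block is included in the $\delta_0$ failure event of Theorem~\ref{thm:HT}, so that the independence and Hoeffding step applies unchanged.
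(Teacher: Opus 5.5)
Your proposal is correct and follows the paper's proof essentially step for step. You apply Theorem~\ref{thm:HT} on each block at $\delta_0=1/4$, boost through the median with Hoeffding to get $e^{-k/8}\le\delta/2$ (the paper packages this as Lemma~\ref{lem:median-boost}), and convert to the $n$-rate using $m\ge n/(2k)$ and $k\lesssim\log(2/\delta)$. Your extra remarks are valid points that the paper leaves implicit: the pilot must be computed within each block to keep the block estimators independent, and ``sufficiently large $n$'' depends on $\delta$.
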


\section{Experiments}

\subsection{Mathematical Reasoning Experiments}
In this section, we evaluate the proposed ARE and DACR on challenging mathematical reasoning benchmarks across LLMs of different parameter scales and model families. We train on MATH~7.5k \citep{hendrycks2021measuringmathematicalproblemsolving} and test on MATH~500 \citep{lightman2023letsverifystepstep} (in-distribution), using AMC23, Gaokao2023en \citep{zhang2024evaluatingperformancelargelanguage}, and MinervaMath \citep{lewkowycz2022solvingquantitativereasoningproblems} as out-of-distribution suites to assess generalization. We compare against a baseline that matches our collaboration setting but replaces ARE/DACR with standard GRPO. We organize the empirical study around three research questions, examining overall effectiveness under noisy rewards, the contribution of the three-stage interaction, and robustness under diverse contamination patterns.

\textbf{Q1: Is the proposed multi-agent robust training effective?}\\
To evaluate robustness under noisy rewards, we inject heavy-tailed, peaked Cauchy noise into the reward model and train on mathematical reasoning datasets. As shown in \cref{table:maintable}, our method consistently improves accuracy over the GRPO-based baseline on both ID and OOD tests. The gains can be substantial; e.g., on AMC23, Qwen2.5-1.5B-Instruct improves from $22.5\%$ to $32.5\%$ (+$10\%$), suggesting the proposed multi-agent robust training is effective across model scales and families.

\textbf{Q2: Is the proposed three-stage interaction effective?}\\
We ablate the interaction protocol by training two agents that answer independently while keeping all other settings unchanged. Results in \cref{fig:effect_of_threeStage} show that the three-stage interaction yields consistently better reasoning performance across LLMs; in particular, on AMC23, Qwen2.5-7B-Instruct improves from $47.5\%$ to $60\%$ (+$12.5\%$).

\textbf{Q3: How robust is the method to noise?}\\
We further stress-test robustness under diverse contamination patterns (\cref{fig:noise_rate}): (i) group-level corruption where a fraction of groups are selected and 20\% of samples within each selected group are contaminated, and (ii) within-group corruption where the contamination ratio varies inside the same group. Both settings evaluate accuracy on MATH500. Across all conditions, our method degrades gracefully and remains consistently more robust to reward contamination than the baseline.

\begin{table*}[t]
\centering
\caption{Comparison of accuracy rates of various methods on the mathematical reasoning dataset}
\setlength{\tabcolsep}{5.7pt}        
\renewcommand{\arraystretch}{1} 
\begin{tabular}{c|c|cccccc}
\hline
\multirow{2}{*}{\textbf{Model}} &
\multirow{2}{*}{\textbf{Method}} &
\multirow{2}{*}{\textbf{MATH500}} &
\multirow{2}{*}{\textbf{AIME24}} &
\multirow{2}{*}{\textbf{AIME25}} &
\multirow{2}{*}{\textbf{AMC23}} &
\multirow{2}{*}{\shortstack[c]{\textbf{Gaokao}\\\textbf{2023en}}} &
\multirow{2}{*}{\shortstack[c]{\textbf{Minerva}\\\textbf{Math}}} \\
& & & & & & & \\   
\hline
\multirow{5}{*}{\shortstack[c]{Qwen2.5\\(1.5B-Instruct)}}
& Base Model & 35.80 & - & - & 30.00 & 23.50 & 5.150 \\ 
\cline{2-8}
& baseline-GroupA & 56.00 &- & - & 22.50 & 45.50 & 13.20 \\
& baseline-GroupC & 55.00 & \textbf{3.300} & - & 22.50 & 45.50 & 13.60 \\ 
\cline{2-8}
& \cellcolor{bg-green}ARE-GroupA & \cellcolor{bg-green}\textbf{56.60} & \cellcolor{bg-green}\textbf{3.300} & \cellcolor{bg-green}- & \cellcolor{bg-green}\textbf{32.50} & \cellcolor{bg-green}\textbf{46.00} & \cellcolor{bg-green}14.00 \\
& \cellcolor{bg-green}ARE-GroupC & \cellcolor{bg-green}55.60 & \cellcolor{bg-green}\textbf{3.300} & \cellcolor{bg-green}\textbf{3.300} & \cellcolor{bg-green}27.50 & \cellcolor{bg-green}45.50 & \cellcolor{bg-green}\textbf{15.40} \\
\hline

\multirow{5}{*}{\shortstack[c]{Llama3.1\\(8B-Instruct)}}
& Base Model & 32.60 & - & - & 5.000 & 3.390 & 7.350 \\ 
\cline{2-8}
& baseline-GroupB & 49.40 & 3.300 & - & 20.00 & 39.70 & 17.60 \\
& baseline-GroupC & 49.20 & 6.700 & 3.300 & 22.50 & 39.20 & 17.30 \\ 
\cline{2-8}
& \cellcolor{bg-green}ARE-GroupB & \cellcolor{bg-green}51.00 & \cellcolor{bg-green}\textbf{6.700} & \cellcolor{bg-green}- & \cellcolor{bg-green}22.50 & \cellcolor{bg-green}\textbf{42.60} & \cellcolor{bg-green}\textbf{19.90} \\
& \cellcolor{bg-green}ARE-GroupC & \cellcolor{bg-green}\textbf{50.40} & \cellcolor{bg-green}6.700 & \cellcolor{bg-green}- & \cellcolor{bg-green}\textbf{27.50} & \cellcolor{bg-green}40.30 & \cellcolor{bg-green}18.80 \\
\hline

\multirow{5}{*}{\shortstack[c]{Qwen2.5\\(7B-Instruct)}}
& Base Model & 68.00 & 6.667 & 10.00 & 37.50 & 42.04 & 18.38 \\ 
\cline{2-8}
& baseline-GroupA & \textbf{77.60} & \textbf{16.70} & 6.700 & 50.00 & \textbf{62.60} & 27.60 \\
& baseline-GroupB & 77.40 & 13.30 & 10.00 & 52.50 & 61.30 & 28.70 \\
\cline{2-8}
& \cellcolor{bg-green}ARE-GroupA & \cellcolor{bg-green}\textbf{77.60} & \cellcolor{bg-green}10.00 & \cellcolor{bg-green}10.00 & \cellcolor{bg-green}\textbf{60.00} & \cellcolor{bg-green}61.80 & \cellcolor{bg-green}28.30 \\
& \cellcolor{bg-green}ARE-GroupB & \cellcolor{bg-green}77.40 & \cellcolor{bg-green}13.30 & \cellcolor{bg-green}\textbf{10.00} & \cellcolor{bg-green}57.70 & \cellcolor{bg-green}61.60 & \cellcolor{bg-green}\textbf{30.50} \\
\hline
\end{tabular}
\label{table:maintable}
\end{table*}

\subsection{Embodied VLN Evaluation}
\begin{figure}[htbp]
\centering
\includegraphics[width=0.45\textwidth]{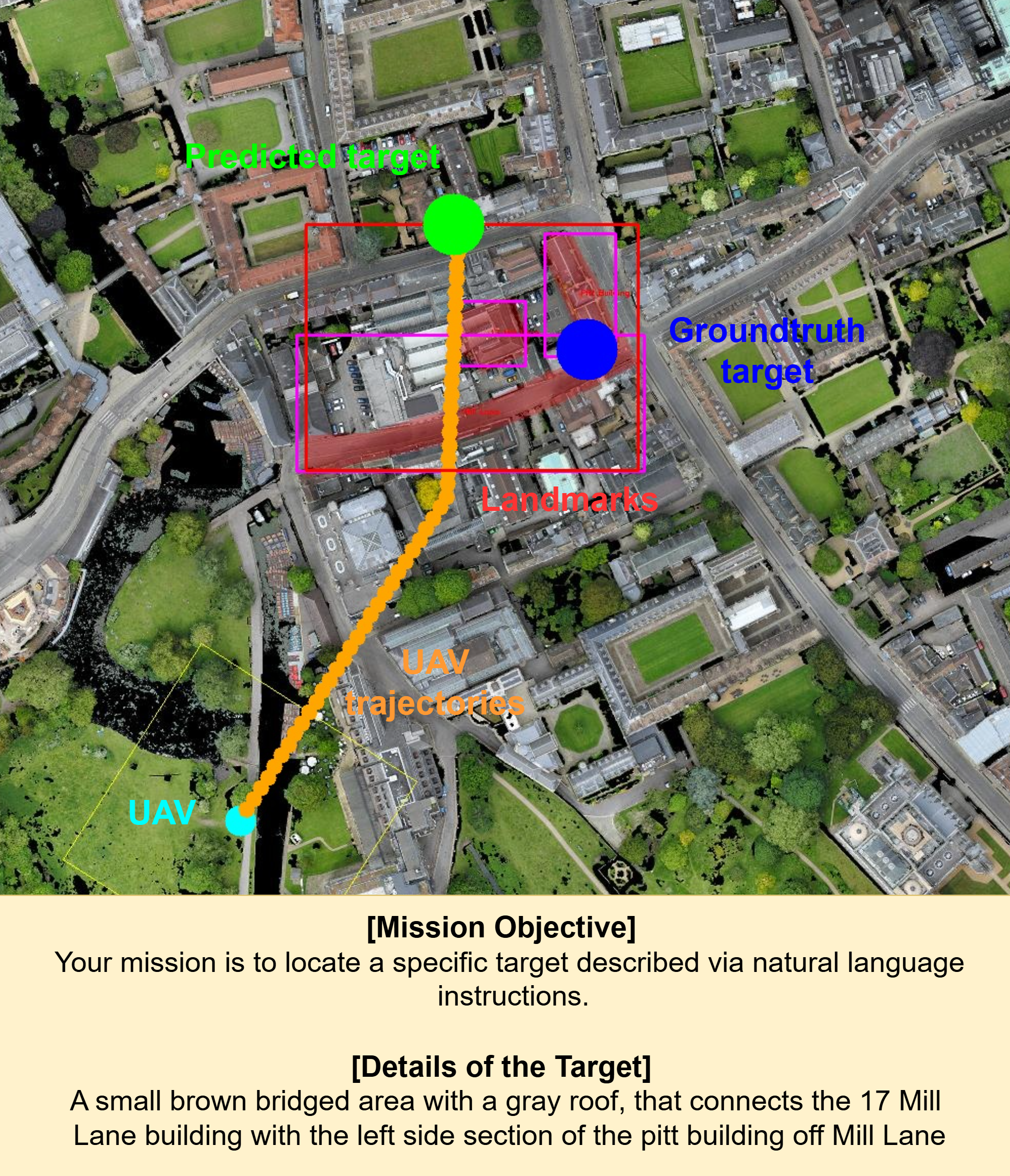}
\caption{Visualization of the navigation task. The UAV trajectory, predicted target, and ground-truth target are annotated.}
\label{fig:navigation}
\vspace{-2mm}
\end{figure}

To assess whether ARE transfers beyond pure language reasoning, we evaluated it on an embodied aerial VLN task by applying ARE to a visual language model. We built on FlightGPT~\cite{cai2025flightgpt} and replaced its standard GRPO optimization with ARE, fine-tuning a Qwen2-VL-2B-Instruct backbone \citep{Qwen2VL} with downscaled image resolutions to fit available compute; due to current constraints, we focused on validating the Adaptive Robust Estimator component, leaving full multi-agent interaction to future work. In VLN, the agent follows a natural-language instruction $D$ and visual observations from a 3D environment $E$, starting from $(\mathbf{p}_0,\theta_0)$ and executing discrete actions $a_t\in\mathcal{A}$; an episode succeeds if the final position $\mathbf{p}_T$ is within a threshold $\delta$ of the target.

Following~\cite{cai2025flightgpt}, we reported four standard metrics: Navigation Error (NE), the Euclidean distance between the final position and the ground-truth target (lower is better); Success Rate (SR), the fraction of episodes whose final position is within 20 meters of the target (higher is better); Oracle Success Rate (OSR), the fraction of episodes that reach within 20 meters of the target at any time (higher is better); and Success weighted by Path Length (SPL), which penalizes unnecessarily long trajectories (higher is better).

Figure~\ref{fig:ARE_train_curve} shows that ARE achieved consistently higher training rewards than FlightGPT, suggesting improved optimization stability under the same VLN setup; although we used a smaller backbone than in~\cite{cai2025flightgpt} (leading to lower absolute performance), the comparison to the GRPO baseline remained controlled. Consistently, Table~\ref{tab:vln_results_by_split} shows that ARE outperformed both the base model and FlightGPT across all splits, with the largest gains on unseen environments (e.g., NE 145.56$!\to$124.38 on $\text{val}_{\text{unseen}}$ and 140.78$!\to$129.71 on $\text{test}_{\text{unseen}}$) while also improving SR/OSR/SPL, indicating better localization, higher success, and more efficient trajectories.

\begin{table}[t]
\centering
\caption{Aerial VLN results (higher is better for SR/OSR/SPL; lower is better for NE).}
\label{tab:vln_results_by_split}
\footnotesize
\begin{threeparttable}
\begin{tabular}{llcccc}
\toprule
\textbf{Split} & \textbf{Method} & \textbf{NE} $\downarrow$ & \textbf{SR} $\uparrow$ & \textbf{OSR} $\uparrow$ & \textbf{SPL} $\uparrow$ \\
\midrule
\multirow{3}{*}{\textbf{VS}}
& Base  & 193.92 & 0.3300 & 1.0000 & 0.0000 \\
& FlightGPT & 148.19 & 2.6700 & 13.670 & 0.0200 \\
& \cellcolor{bg-green} ARE  & \cellcolor{bg-green} \textbf{139.49} & \cellcolor{bg-green} \textbf{3.0000} & \cellcolor{bg-green} \textbf{14.330} & \cellcolor{bg-green} \textbf{0.0300} \\
\midrule
\multirow{3}{*}{\textbf{VUS}}
& Base  & 190.84 & 1.3300 & 1.3300 & 0.0100 \\
& FlightGPT & 145.56 & 3.6700 & 18.000 & 0.0300 \\
& \cellcolor{bg-green} ARE  & \cellcolor{bg-green} \textbf{124.38} & \cellcolor{bg-green} \textbf{6.3300} & \cellcolor{bg-green} \textbf{18.330} & \cellcolor{bg-green} \textbf{0.0600} \\
\midrule
\multirow{3}{*}{\textbf{TUS}}
& Base  & 195.95 & 1.3300 & 2.3300 & 0.0100 \\
& FlightGPT & 140.78 & 2.3300 & 17.330 & 0.0200 \\
& \cellcolor{bg-green} ARE  & \cellcolor{bg-green} \textbf{129.71} & \cellcolor{bg-green} \textbf{4.6700} & \cellcolor{bg-green} \textbf{18.330} & \cellcolor{bg-green} \textbf{0.0400} \\
\bottomrule
\end{tabular}
\begin{tablenotes}[flushleft]
\footnotesize
\item VS: val\_seen, VUS: val\_unseen, TUS: test\_unseen.
\end{tablenotes}
\end{threeparttable}
\end{table}

\section{Conclusion}
This work addressed a key instability in MARL for LLM reasoning: noisy, heavy-tailed rewards can make GRPO’s batch-mean advantage normalization fragile, leading to oscillation or divergence. We proposed a robust collaborative training framework that coupled a structured DACR protocol with a cross-improvement reward for more reliable credit assignment, and introduced ARE to replace the batch mean in GRPO-style advantage estimation. We established theoretical guarantees for ARE under both finite-variance and heavy-tailed regimes, and experiments on mathematical reasoning benchmarks showed consistent gains in accuracy and training stability across model scales and families, including homogeneous and heterogeneous agent pairs, especially under heavy-tailed reward noise. We further validated ARE in an embodied aerial VLN setting by replacing FlightGPT’s GRPO with ARE on a Qwen2-VL-2B VLM, yielding higher training rewards and improved NE/SR/OSR/SPL across all splits, with the largest gains on unseen environments.

\section*{Impact Statement}
This paper presents work whose goal is to advance the field of machine learning by developing robust multi-agent policy optimization methods for collaborative reasoning under noisy, heavy-tailed rewards. We do not feel any potential societal consequences of this work necessary to be discussed here.

\bibliography{example_paper.bib}

\begin{thebibliography}{50}
\providecommand{\natexlab}[1]{#1}
\providecommand{\url}[1]{\texttt{#1}}
\expandafter\ifx\csname urlstyle\endcsname\relax
  \providecommand{\doi}[1]{doi: #1}\else
  \providecommand{\doi}{doi: \begingroup \urlstyle{rm}\Url}\fi

\bibitem[Barron(2019)]{BarronCVPR2019}
Barron, J.~T.
\newblock A general and adaptive robust loss function.
\newblock \emph{CVPR}, 2019.

\bibitem[Black \& Anandan(1996)Black and Anandan]{black1996robust}
Black, M.~J. and Anandan, P.
\newblock The robust estimation of multiple motions: Parametric and
  piecewise-smooth flow fields.
\newblock \emph{Computer vision and image understanding}, 63\penalty0
  (1):\penalty0 75--104, 1996.

\bibitem[Black \& Rangarajan(1996)Black and Rangarajan]{10.1007/BF00131148}
Black, M.~J. and Rangarajan, A.
\newblock On the unification of line processes, outlier rejection, and robust
  statistics with applications in early vision.
\newblock \emph{Int. J. Comput. Vision}, 19\penalty0 (1):\penalty0 57–91,
  July 1996.
\newblock ISSN 0920-5691.

\bibitem[Cai et~al.(2025)Cai, Dong, Tan, Deng, Li, Gao, Wang, Su, Sumalee, and
  Zhong]{cai2025flightgpt}
Cai, H., Dong, J., Tan, J., Deng, J., Li, S., Gao, Z., Wang, H., Su, Z.,
  Sumalee, A., and Zhong, R.
\newblock Flightgpt: Towards generalizable and interpretable uav
  vision-and-language navigation with vision-language models, 2025.
\newblock URL \url{https://arxiv.org/abs/2505.12835}.

\bibitem[Chebrolu et~al.(2021)Chebrolu, Läbe, Vysotska, Behley, and
  Stachniss]{9361339}
Chebrolu, N., Läbe, T., Vysotska, O., Behley, J., and Stachniss, C.
\newblock Adaptive robust kernels for non-linear least squares problems.
\newblock \emph{IEEE Robotics and Automation Letters}, 6\penalty0 (2):\penalty0
  2240--2247, 2021.

\bibitem[Chen et~al.(2025{\natexlab{a}})Chen, Ai, Li, Li, Wei, Zhou, Li, Yu,
  Chen, Sun, Zhuang, Li, Wang, and Ban]{chen2025llmboostmakelargelanguage}
Chen, Z., Ai, T., Li, Y., Li, G., Wei, Y., Zhou, W., Li, G., Yu, B., Chen, Z.,
  Sun, H., Zhuang, F., Li, J., Wang, D., and Ban, Y.
\newblock Llmboost: Make large language models stronger with boosting,
  2025{\natexlab{a}}.
\newblock URL \url{https://arxiv.org/abs/2512.22309}.

\bibitem[Chen et~al.(2025{\natexlab{b}})Chen, Ji, Mao, Wu, Cheng, Qin, Li, Li,
  Sun, Wang, et~al.]{chen2025scoring}
Chen, Z., Ji, Z., Mao, Q., Wu, H., Cheng, J., Qin, B., Li, Z., Li, J., Sun, K.,
  Wang, Z., et~al.
\newblock Scoring, reasoning, and selecting the best! ensembling large language
  models via a peer-review process.
\newblock \emph{arXiv preprint arXiv:2512.23213}, 2025{\natexlab{b}}.

\bibitem[Chen et~al.(2025{\natexlab{c}})Chen, Li, Chen, Li, Sun, Luo, Mao, Li,
  Xiao, Yang, et~al.]{chen2025harnessing}
Chen, Z., Li, J., Chen, P., Li, Z., Sun, K., Luo, Y., Mao, Q., Li, M., Xiao,
  L., Yang, D., et~al.
\newblock Harnessing multiple large language models: A survey on llm ensemble.
\newblock \emph{arXiv preprint arXiv:2502.18036}, 2025{\natexlab{c}}.

\bibitem[Chen et~al.(2026)Chen, Li, Ai, Li, Huang, Zhou, Zhuang, Liu, Li, Wang,
  et~al.]{chen2026weak}
Chen, Z., Li, G., Ai, T., Li, Y., Huang, Z., Zhou, W., Zhuang, F., Liu, X., Li,
  J., Wang, D., et~al.
\newblock Weak-driven learning: How weak agents make strong agents stronger.
\newblock \emph{arXiv preprint arXiv:2602.08222}, 2026.

\bibitem[Dennis~Jr \& Welsch(1978)Dennis~Jr and Welsch]{dennis1978techniques}
Dennis~Jr, J.~E. and Welsch, R.~E.
\newblock Techniques for nonlinear least squares and robust regression.
\newblock \emph{Communications in Statistics-simulation and Computation},
  7\penalty0 (4):\penalty0 345--359, 1978.

\bibitem[Geman \& Geman(1986)Geman and Geman]{geman1986bayesian}
Geman, D. and Geman, S.
\newblock Bayesian image analysis.
\newblock In \emph{Disordered systems and biological organization}, pp.\
  301--319. Springer, 1986.

\bibitem[Hastie et~al.(2001)Hastie, Tibshirani, and
  Friedman]{hastie01statisticallearning}
Hastie, T., Tibshirani, R., and Friedman, J.
\newblock \emph{The Elements of Statistical Learning}.
\newblock Springer Series in Statistics. Springer New York Inc., New York, NY,
  USA, 2001.

\bibitem[He et~al.(2025)He, Ban, Zou, Wei, Cook, and He]{he2025llm}
He, X., Ban, Y., Zou, J., Wei, T., Cook, C., and He, J.
\newblock Llm-forest: Ensemble learning of llms with graph-augmented prompts
  for data imputation.
\newblock In \emph{Findings of the Association for Computational Linguistics:
  ACL 2025}, pp.\  6921--6936, 2025.

\bibitem[Hendrycks et~al.(2021)Hendrycks, Burns, Kadavath, Arora, Basart, Tang,
  Song, and Steinhardt]{hendrycks2021measuringmathematicalproblemsolving}
Hendrycks, D., Burns, C., Kadavath, S., Arora, A., Basart, S., Tang, E., Song,
  D., and Steinhardt, J.
\newblock Measuring mathematical problem solving with the math dataset, 2021.
\newblock URL \url{https://arxiv.org/abs/2103.03874}.

\bibitem[Hitchcox \& Forbes(2022)Hitchcox and Forbes]{hitchcox2022mind}
Hitchcox, T. and Forbes, J.~R.
\newblock Mind the gap: Norm-aware adaptive robust loss for multivariate
  least-squares problems.
\newblock \emph{IEEE Robotics and Automation Letters}, 7\penalty0 (3):\penalty0
  7116--7123, 2022.

\bibitem[Hu(2025)]{hu2025reinforce++}
Hu, J.
\newblock Reinforce++: A simple and efficient approach for aligning large
  language models.
\newblock \emph{arXiv preprint arXiv:2501.03262}, 2025.

\bibitem[Huang et~al.(2026{\natexlab{a}})Huang, Xia, Ren, Zheng, Wang, Zhang,
  Xie, Liang, Chen, Xiao, et~al.]{huang2026does}
Huang, Z., Xia, X., Ren, Y., Zheng, J., Wang, X., Zhang, Z., Xie, H., Liang,
  S., Chen, Z., Xiao, X., et~al.
\newblock Does your reasoning model implicitly know when to stop thinking?
\newblock \emph{arXiv preprint arXiv:2602.08354}, 2026{\natexlab{a}}.

\bibitem[Huang et~al.(2026{\natexlab{b}})Huang, Xia, Ren, Zheng, Xiao, Xie,
  Huaqiu, Liang, Dai, Zhuang, et~al.]{huang2026real}
Huang, Z., Xia, X., Ren, Y., Zheng, J., Xiao, X., Xie, H., Huaqiu, L., Liang,
  S., Dai, Z., Zhuang, F., et~al.
\newblock Real-time aligned reward model beyond semantics.
\newblock \emph{arXiv preprint arXiv:2601.22664}, 2026{\natexlab{b}}.

\bibitem[Huber(2011)]{huber2011robust}
Huber, P.~J.
\newblock Robust statistics.
\newblock In \emph{International encyclopedia of statistical science}, pp.\
  1248--1251. Springer, 2011.

\bibitem[Humbert et~al.(2022)Humbert, Le~Bars, and
  Minvielle]{humbert2022robust}
Humbert, P., Le~Bars, B., and Minvielle, L.
\newblock Robust kernel density estimation with median-of-means principle.
\newblock In \emph{International Conference on Machine Learning}, pp.\
  9444--9465. PMLR, 2022.

\bibitem[Jung et~al.(2024)Jung, Hitchcox, and Forbes]{jung2024adaptive}
Jung, K., Hitchcox, T., and Forbes, J.~R.
\newblock An adaptive graduated nonconvexity loss function for robust nonlinear
  least-squares solutions.
\newblock \emph{IEEE Transactions on Robotics}, 2024.

\bibitem[Leclerc(1989)]{leclerc1989constructing}
Leclerc, Y.~G.
\newblock Constructing simple stable descriptions for image partitioning.
\newblock \emph{International journal of computer vision}, 3\penalty0
  (1):\penalty0 73--102, 1989.

\bibitem[Lecu{\'e} et~al.(2020)Lecu{\'e}, Lerasle, and
  Mathieu]{lecue2020robust}
Lecu{\'e}, G., Lerasle, M., and Mathieu, T.
\newblock Robust classification via mom minimization.
\newblock \emph{Machine learning}, 109\penalty0 (8):\penalty0 1635--1665, 2020.

\bibitem[Lewkowycz et~al.(2022)Lewkowycz, Andreassen, Dohan, Dyer, Michalewski,
  Ramasesh, Slone, Anil, Schlag, Gutman-Solo, Wu, Neyshabur, Gur-Ari, and
  Misra]{lewkowycz2022solvingquantitativereasoningproblems}
Lewkowycz, A., Andreassen, A., Dohan, D., Dyer, E., Michalewski, H., Ramasesh,
  V., Slone, A., Anil, C., Schlag, I., Gutman-Solo, T., Wu, Y., Neyshabur, B.,
  Gur-Ari, G., and Misra, V.
\newblock Solving quantitative reasoning problems with language models, 2022.
\newblock URL \url{https://arxiv.org/abs/2206.14858}.

\bibitem[Liao et~al.(2025)Liao, Wen, Wang, and Zhang]{liao2025marft}
Liao, J., Wen, M., Wang, J., and Zhang, W.
\newblock Marft: Multi-agent reinforcement fine-tuning.
\newblock \emph{arXiv preprint arXiv:2504.16129}, 2025.

\bibitem[Lightman et~al.(2023)Lightman, Kosaraju, Burda, Edwards, Baker, Lee,
  Leike, Schulman, Sutskever, and Cobbe]{lightman2023letsverifystepstep}
Lightman, H., Kosaraju, V., Burda, Y., Edwards, H., Baker, B., Lee, T., Leike,
  J., Schulman, J., Sutskever, I., and Cobbe, K.
\newblock Let's verify step by step, 2023.
\newblock URL \url{https://arxiv.org/abs/2305.20050}.

\bibitem[Lin et~al.(2025)Lin, Cao, Wang, Wu, Li, Yang, Zheng, and
  Qin]{lin2025interactive}
Lin, H., Cao, S., Wang, S., Wu, H., Li, M., Yang, L., Zheng, J., and Qin, C.
\newblock Interactive learning for llm reasoning.
\newblock \emph{arXiv preprint arXiv:2509.26306}, 2025.

\bibitem[Liu et~al.(2025{\natexlab{a}})Liu, Fu, Ding, Ning, Zhang, Liu, and
  Zhang]{liu2025logical}
Liu, H., Fu, Z., Ding, M., Ning, R., Zhang, C., Liu, X., and Zhang, Y.
\newblock Logical reasoning in large language models: A survey.
\newblock \emph{arXiv preprint arXiv:2502.09100}, 2025{\natexlab{a}}.

\bibitem[Liu et~al.(2025{\natexlab{b}})Liu, Chen, Liang, Lyu, and
  Amato]{liu2025llm}
Liu, S., Chen, T., Liang, Z., Lyu, X., and Amato, C.
\newblock Llm collaboration with multi-agent reinforcement learning.
\newblock \emph{arXiv preprint arXiv:2508.04652}, 2025{\natexlab{b}}.

\bibitem[Liu et~al.(2025{\natexlab{c}})Liu, Hu, Zhou, Ding, Li, Zeng, He, Chen,
  Jiang, Zhou, et~al.]{liu2025mathematical}
Liu, W., Hu, H., Zhou, J., Ding, Y., Li, J., Zeng, J., He, M., Chen, Q., Jiang,
  B., Zhou, A., et~al.
\newblock Mathematical language models: A survey.
\newblock \emph{ACM Computing Surveys}, 58\penalty0 (6):\penalty0 1--37,
  2025{\natexlab{c}}.

\bibitem[Lu et~al.(2022)Lu, Yu, de~Lamare, and Yang]{lu2022tukey}
Lu, L., Yu, Y., de~Lamare, R.~C., and Yang, X.
\newblock Tukey’s biweight m-estimate with conjugate gradient adaptive
  learning.
\newblock \emph{IEEE Signal Processing Letters}, 29:\penalty0 1117--1121, 2022.

\bibitem[Lu et~al.(2026)Lu, Wang, Chai, Yin, Lin, Chen, Luo, Zhuang, Ban, and
  Wang]{lu2026contextual}
Lu, X., Wang, X., Chai, J., Yin, G., Lin, W., Chen, Z., Luo, Y., Zhuang, F.,
  Ban, Y., and Wang, D.
\newblock Contextual rollout bandits for reinforcement learning with verifiable
  rewards.
\newblock \emph{arXiv preprint arXiv:2602.08499}, 2026.

\bibitem[Ma et~al.(2024)Ma, Hu, Pu, Boyin, Ai, Liang, and
  Chen]{ma2024coevolving}
Ma, H., Hu, T., Pu, Z., Boyin, L., Ai, X., Liang, Y., and Chen, M.
\newblock Coevolving with the other you: Fine-tuning llm with sequential
  cooperative multi-agent reinforcement learning.
\newblock \emph{Advances in Neural Information Processing Systems},
  37:\penalty0 15497--15525, 2024.

\bibitem[Motwani et~al.(2024)Motwani, Smith, Das, Rafailov, Laptev, Torr,
  Pizzati, Clark, and de~Witt]{motwani2024malt}
Motwani, S.~R., Smith, C., Das, R.~J., Rafailov, R., Laptev, I., Torr, P.~H.,
  Pizzati, F., Clark, R., and de~Witt, C.~S.
\newblock Malt: Improving reasoning with multi-agent llm training.
\newblock \emph{arXiv preprint arXiv:2412.01928}, 2024.

\bibitem[Park et~al.(2025)Park, Han, Guo, Ozdaglar, Zhang, and
  Kim]{park2025maporl}
Park, C., Han, S., Guo, X., Ozdaglar, A.~E., Zhang, K., and Kim, J.-K.
\newblock Maporl: Multi-agent post-co-training for collaborative large language
  models with reinforcement learning.
\newblock In \emph{Proceedings of the 63rd Annual Meeting of the Association
  for Computational Linguistics (Volume 1: Long Papers)}, pp.\  30215--30248,
  2025.

\bibitem[Rafailov et~al.(2023)Rafailov, Sharma, Mitchell, Manning, Ermon, and
  Finn]{rafailov2023direct}
Rafailov, R., Sharma, A., Mitchell, E., Manning, C.~D., Ermon, S., and Finn, C.
\newblock Direct preference optimization: Your language model is secretly a
  reward model.
\newblock \emph{Advances in neural information processing systems},
  36:\penalty0 53728--53741, 2023.

\bibitem[Ren et~al.(2025)Ren, Jiang, Yang, Tian, and Peng]{ren2025riskpo}
Ren, T., Jiang, J., Yang, H., Tian, W., and Peng, Y.
\newblock Risk{PO}: Risk-based policy optimization with verifiable reward for
  {LLM} post-training.
\newblock In \emph{NeurIPS 2025 Workshop MLxOR: Mathematical Foundations and
  Operational Integration of Machine Learning for Uncertainty-Aware
  Decision-Making}, 2025.
\newblock URL \url{https://openreview.net/forum?id=8hxqmh25ZH}.

\bibitem[Rieder(2012)]{rieder2012robust}
Rieder, H.
\newblock \emph{Robust Asymptotic Statistics: Volume I}.
\newblock Springer Science \& Business Media, 2012.

\bibitem[Shao et~al.(2024)Shao, Wang, Zhu, Xu, Song, Bi, Zhang, Zhang, Li, Wu,
  and Guo]{shao2024deepseekmathpushinglimitsmathematical}
Shao, Z., Wang, P., Zhu, Q., Xu, R., Song, J., Bi, X., Zhang, H., Zhang, M.,
  Li, Y.~K., Wu, Y., and Guo, D.
\newblock Deepseekmath: Pushing the limits of mathematical reasoning in open
  language models, 2024.
\newblock URL \url{https://arxiv.org/abs/2402.03300}.

\bibitem[Shao et~al.(2025)Shao, Luo, Lu, Ren, Hu, Ye, Gou, Ma, and
  Zhang]{shao2025deepseekmath}
Shao, Z., Luo, Y., Lu, C., Ren, Z., Hu, J., Ye, T., Gou, Z., Ma, S., and Zhang,
  X.
\newblock Deepseekmath-v2: Towards self-verifiable mathematical reasoning.
\newblock \emph{arXiv preprint arXiv:2511.22570}, 2025.

\bibitem[Sun(2021)]{sun2021we}
Sun, Q.
\newblock Do we need to estimate the variance in robust mean estimation?
\newblock \emph{arXiv preprint arXiv:2107.00118}, 2021.

\bibitem[Wan et~al.(2025)Wan, Li, Wen, Song, Wang, Yang, Schmidt, Wang, Zhang,
  Hu, et~al.]{wan2503rema}
Wan, Z., Li, Y., Wen, X., Song, Y., Wang, H., Yang, L., Schmidt, M., Wang, J.,
  Zhang, W., Hu, S., et~al.
\newblock Rema: Learning to meta-think for llms with multi-agent reinforcement
  learning.
\newblock \emph{arXiv preprint arXiv:2503.09501}, 2025.

\bibitem[Wang et~al.(2024)Wang, Bai, Tan, Wang, Fan, Bai, Chen, Liu, Wang, Ge,
  Fan, Dang, Du, Ren, Men, Liu, Zhou, Zhou, and Lin]{Qwen2VL}
Wang, P., Bai, S., Tan, S., Wang, S., Fan, Z., Bai, J., Chen, K., Liu, X.,
  Wang, J., Ge, W., Fan, Y., Dang, K., Du, M., Ren, X., Men, R., Liu, D., Zhou,
  C., Zhou, J., and Lin, J.
\newblock Qwen2-vl: Enhancing vision-language model's perception of the world
  at any resolution.
\newblock \emph{arXiv preprint arXiv:2409.12191}, 2024.

\bibitem[Wang et~al.(2025)Wang, Liu, Wang, Li, Wang, Yan, Jia, Liu, Chen, Xu,
  et~al.]{wang2025survey}
Wang, P.-Y., Liu, T.-S., Wang, C., Li, Z., Wang, Y., Yan, S., Jia, C., Liu,
  X.-H., Chen, X., Xu, J., et~al.
\newblock A survey on large language models for mathematical reasoning.
\newblock \emph{ACM Computing Surveys}, 2025.

\bibitem[Wang \& Xu(2025)Wang and Xu]{wang2025multi}
Wang, X. and Xu, M.
\newblock Multi-agent multi-armed bandit with fully heavy-tailed dynamics.
\newblock \emph{arXiv preprint arXiv:2501.19239}, 2025.

\bibitem[Xue et~al.(2025)Xue, Zhou, Zhang, Zhang, Li, Zhang, Yin, Torr, Ouyang,
  and Bai]{xue2025comas}
Xue, X., Zhou, Y., Zhang, G., Zhang, Z., Li, Y., Zhang, C., Yin, Z., Torr, P.,
  Ouyang, W., and Bai, L.
\newblock Comas: Co-evolving multi-agent systems via interaction rewards.
\newblock \emph{arXiv preprint arXiv:2510.08529}, 2025.

\bibitem[Yang et~al.(2026)Yang, Chen, Wang, Lu, Chai, Yin, Lin, Ma, Zhuang,
  Wang, et~al.]{yang2026grouprelativeadvantagebiased}
Yang, F., Chen, Z., Wang, X., Lu, X., Chai, J., Yin, G., Lin, W., Ma, S.,
  Zhuang, F., Wang, D., et~al.
\newblock Your group-relative advantage is biased.
\newblock \emph{arXiv preprint arXiv:2601.08521}, 2026.

\bibitem[Zhang et~al.(2024)Zhang, Li, Zong, Ying, He, and
  Qiu]{zhang2024evaluatingperformancelargelanguage}
Zhang, X., Li, C., Zong, Y., Ying, Z., He, L., and Qiu, X.
\newblock Evaluating the performance of large language models on gaokao
  benchmark, 2024.
\newblock URL \url{https://arxiv.org/abs/2305.12474}.

\bibitem[Zhang et~al.(2026)Zhang, Huang, Xia, Wang, Zhuang, Ma, Ding, Yang, Li,
  and Ban]{zhang2026heterogeneous}
Zhang, Z., Huang, Z., Xia, X., Wang, D., Zhuang, F., Ma, S., Ding, N., Yang,
  Y., Li, J., and Ban, Y.
\newblock Heterogeneous agent collaborative reinforcement learning.
\newblock \emph{arXiv preprint arXiv:2603.02604}, 2026.

\bibitem[Zou et~al.(2025)Zou, Ban, Li, Qi, Qiu, Yang, and
  He]{zou2025transformer}
Zou, J., Ban, Y., Li, Z., Qi, Y., Qiu, R., Yang, L., and He, J.
\newblock Transformer copilot: Learning from the mistake log in {LLM}
  fine-tuning.
\newblock In \emph{The Thirty-ninth Annual Conference on Neural Information
  Processing Systems}, 2025.
\newblock URL \url{https://openreview.net/forum?id=MRvxlTlkNQ}.

\end{thebibliography}
\bibliographystyle{icml2026}

\newpage
\appendix
\onecolumn

\section{Related Work}

\subsection{Multi-Agent Collaboration and Training}

Recent advances in large language models have inspired substantial interest in multi-agent collaboration\cite{chen2025llmboostmakelargelanguage,he2025llm,chen2026weak,chen2025scoring,lu2026contextual}, where multiple LLM-based agents interact to solve tasks that exceed the capabilities of a single model. Early explorations primarily focused on role specialization, task decomposition, and collaborative protocols such as planning, debating, and mutual verification. These frameworks improve solution quality through interaction but typically rely on fixed, frozen models without joint optimization. More recent studies extend this paradigm by treating multi-agent systems as interactive learning environments, in which agents coordinate, compete, or exchange information while being jointly optimized through reinforcement learning. Collectively, these efforts demonstrate that coordinated multi-agent LLMs can exhibit stronger reasoning, more robust generalization, and higher reliability than single-agent systems.

A representative example is CORY \citep{ma2024coevolving}, which formulates two symbiotic agents: a pioneer and an observer, both initialized from the same base model. The pioneer generates an initial response, while the observer synthesizes an improved refinement based on the pioneer’s output. The two agents periodically switch roles to foster cooperative evolution. MARFT \citep{liao2025marft} models multi-agent collaboration as a sequential decision-making process that aligns with the PPO training paradigm. This approach allows agents to share a common backbone while adopting distinct capabilities through different LoRA adapters.

Beyond these architectures, several works explicitly incorporate reinforcement learning for multi-agent reasoning. \citet{lin2025interactive} propose an interactive learning framework where agents dynamically choose cooperative or competitive behaviors based on problem difficulty and model capability. Their three-stage discussion protocol enhances information exchange, and the reward design incorporates distributional properties of other agents’ returns. Training is performed using GRPO, ultimately improving the reasoning ability of individual agents through multi-agent synergy. Similarly, CoMAS \citep{xue2025comas} constructs a rich interaction loop based on solution proposal, peer evaluation, and scoring, and employs an LLM-as-a-judge mechanism to convert discussion signals into reward feedback. These rewards are then optimized via the REINFORCE++ \citep{hu2025reinforce++} algorithm, enabling decentralized and scalable emergent learning without external supervision.\cite{yang2026grouprelativeadvantagebiased} provides a principled theoretical analysis of group-based advantage estimation.

Several recent works extend GRPO into explicit multi-agent regimes. ReMA \citep{wan2503rema} introduces meta-level reasoning into hierarchical multi-agent training, involving high-level and low-level agents trained under both shared and independent GRPO setups with single- and multi-round interaction schemes. MAGRPO \citep{liu2025llm} further generalizes GRPO to collaborative fine-tuning across multiple agents, where a centralized group-relative advantage guides joint optimization while retaining decentralized execution; this method has demonstrated effectiveness in program synthesis tasks. MAPoRL \citep{park2025maporl} proposes a post-training framework that combines multi-agent reinforcement learning with a carefully designed reward mechanism to explicitly train LLMs for coordinated decision-making in complex environments. Meanwhile, MALT \cite{motwani2024malt} constructs a division-of-labor multi-agent system with iterative collaboration and experience-based improvement, leveraging tree-structured data generation, value-iteration-style credit assignment, and a hybrid SFT–DPO \citep{rafailov2023direct} post-training pipeline to overcome the limitations of single-agent reasoning.

While these works have established a solid foundation for multi-agent reinforcement learning with LLMs, they rarely address the robustness challenges of GRPO in multi-agent optimization. In contrast, our approach introduces a new collaboration mechanism and explicitly incorporates robust reinforcement learning techniques to stabilize multi-agent GRPO training.

\subsection{Robust Estimation Methods}
Robust estimation methods are designed to maintain statistical reliability when data deviate from idealized assumptions, such as in the presence of outliers, model misspecification, or heavy-tailed noise \citep{huber2011robust,rieder2012robust}. Classical estimators—e.g., the sample mean or ordinary least squares—can be highly sensitive to a small fraction of corrupted observations, leading to unstable inference and degraded predictive performance. After decades of development, robust estimation methods can be categorized—according to their evolutionary trajectory—into the design of robust loss functions, adaptive procedures, tuning-free approaches, and median-of-means (MOM) estimators.

The core idea of robust loss functions is to minimize the influence of abnormally large per-sample residuals or gradients on the overall objective \citep{jung2024adaptive}. Representative choices include the Huber \citep{huber2011robust}, pseudo-Huber \citep{hastie01statisticallearning}, Cauchy \citep{black1996robust}, Tukey's biweight \citep{lu2022tukey}, and Geman--McClure (GM) \citep{geman1986bayesian} losses . A key caveat, however, is the risk of over-robustness: overly aggressive down-weighting of extreme observations may suppress rare yet genuine signals. Moreover, the landscape of robust losses is broad, and selecting an appropriate form typically relies heavily on expert knowledge and empirical tuning. To mitigate this issue, \citet{BarronCVPR2019} proposed an adaptive loss (as shown in (\ref{adaptiveloss})) which, via a shape parameter $\alpha$ governing tail behavior and robustness and a scale parameter $c$ controlling the width of the near-zero quadratic basin, unifies many classical losses within a single continuously adjustable---and potentially automatically learnable---framework. Nevertheless, both classes of approaches may suffer from substantial computational overhead for tuning-parameter selection (e.g., common choices such as Lepski's method and cross-validation can be computationally expensive), and the resulting objectives are often nonconvex \citep{jung2024adaptive}, which can lead to poor local optima, sensitivity to initialization, and training instability.

To address these two major challenges, a third line of work---tuning-free methods---has emerged. The key idea is to determine the value of the tuning parameter (typically depending on the target estimation accuracy and the population second moment of the data) under which the resulting estimator can achieve a sub-Gaussian rate of convergence. A representative example is the work of \citet{sun2021we}, which builds on the pseudo-Huber loss and derives an explicit criterion for choosing the tuning parameter. A fourth class of approaches that has attracted considerable attention in recent years is the MoM estimator \citep{lecue2020robust, humbert2022robust}. The procedure first partitions the sample into multiple groups, computes the sample mean within each group, and then takes the median of these group-wise means. Under the mere existence of a finite second moment, the MoM estimator can achieve sub-Gaussian rates of convergence. In Appendix \ref{MOMDiscussion}, we further discuss that, under both random contamination and adversarial contamination, the MoM estimator may still fail to be robust; moreover, attaining strong empirical performance typically requires a careful trade-off between the number of blocks and the sample size within each block.

\section{Discussion on the MOM Estimator and its Robustness}\label{MOMDiscussion}
In this section, we review the Median-of-Means (MoM) estimator as our baseline and discuss its key limitations under two contamination regimes: \emph{random} and \emph{adversarial} contamination.

Let \(X_1,\dots,X_n\) be i.i.d.\ real-valued random variables with mean \(\mu=\mathbb{E}[X_i]\). The empirical mean \(\overline X=\frac{1}{n}\sum_{i=1}^n X_i\) is statistically optimal under light-tailed distributions (e.g., sub-Gaussian), yet it can be highly unstable in the presence of heavy tails or contaminated samples. As noted in Section~\ref{secintro}, MoM provides a canonical robust alternative by combining blockwise averaging with a robust aggregation step. Concretely, fix \(k\ge 2\) and randomly partition \([n]\) into \(k\) disjoint blocks \(B_1,\dots,B_k\) of size \(m=\lfloor n/k\rfloor\). Define the block means
\[
\overline{X}_{j} = \frac{1}{|B_j|} \sum_{i \in B_j} X_i, \qquad j = 1, \dots, k,
\]
and aggregate them via the median:
\[
\widehat{\mu} = \operatorname{median}(\overline{X}_{1}, \dots, \overline{X}_{k}).
\]

This construction exhibits a transparent robustness mechanism: blockwise averaging reduces the variance contributed by the uncontaminated observations, while the median discards the influence of a minority of corrupted blocks. In particular, under the mild assumption \(\sigma^2=\mathrm{Var}(X_i)<\infty\), MoM achieves a sub-Gaussian-type concentration bound. For any \(\delta\in(0,1)\), choosing \(k\simeq c_0\log(1/\delta)\) yields
\[
\Pbb\!\left( \bigl| \widehat{\mu} - \mu \bigr| \le C_0 \sigma \sqrt{\frac{\log(1/\delta)}{n}} \right) \ge 1 - \delta,
\]
for universal constants \(c_0,C_0>0\) \citep{sun2021we}. Hence, even without sub-Gaussian tails, MoM retains the optimal \(n^{-1/2}\) rate and the canonical \(\sqrt{\log(1/\delta)}\) confidence scaling. Practically, \(k\) controls the robustness--efficiency trade-off: increasing \(k\) improves tolerance to corruption through the median at the cost of smaller block sizes. A common default is \(k=\lceil c_0\log(1/\delta)\rceil\) (with \(c_0\) in the range \(2\)--\(8\)); when \(\delta\) is not specified, the heuristic \(k\approx\lceil \log n\rceil\) is often used. We now turn to a simple analysis and discussion.

\textbf{In random contamination,} assume each observation is independently corrupted with probability $\kappa\in(0,1)$. Randomly partition the $n$ samples into $k$ disjoint blocks, each of size
\(
m=\lfloor n/k\rfloor.
\)
Call a block \emph{clean} if none of its $m$ samples is contaminated. Under independent contamination, the probability that a fixed block is clean is
\[
p_0=\Pr(\text{a block is clean})=(1-\kappa)^m,
\]
where we ignore the minor imbalance due to the floor operation and possible leftovers. Let $X_j=\mathbf{1}\{\text{block $j$ is clean}\}$ for $j=1,\dots,k$. Then $X_j\sim \mathrm{Bernoulli}(p_0)$ i.i.d., and the number of clean blocks is
\[
S:=\sum_{j=1}^k X_j \sim \mathrm{Binomial}(k,p_0).
\]
A key mechanism behind MoM is that the median is taken over a \emph{majority} of blocks; thus, if more than half of the blocks are clean (i.e., $S>k/2$), then the median is computed from predominantly uncontaminated block means. This motivates the condition $p_0>1/2$. When $p_0>1/2$, a Chernoff--Hoeffding (KL-form) bound yields
\[
\Pr(S\le k/2)
\le 
\exp\!\Bigl(-k\,\mathrm{KL}\bigl(1/2\,\|\,p_0\bigr)\Bigr),
\]
where the binary KL divergence is
\[
\mathrm{KL}\bigl(1/2\,\|\,p_0\bigr)
=
\frac12 \ln\!\frac{1/2}{p_0}
+
\frac12 \ln\!\frac{1/2}{1-p_0}.
\]
Hence, provided $p_0>1/2$, the failure event ``at most half of the blocks are clean'' has probability exponentially small in $k$.

The condition $p_0>1/2$ is equivalent to
\[
(1-\kappa)^m>\frac12
\quad\Longleftrightarrow\quad
m<\frac{\ln 2}{-\ln(1-\kappa)}.
\]
For small $\kappa$, using $-\ln(1-\kappa)\approx \kappa$, this becomes the convenient rule of thumb
\[
m \lesssim \frac{\ln 2}{\kappa}.
\]
\textcolor{blue}{Thus, smaller blocks (smaller $m$, larger $k$) make it easier to ensure that a majority of blocks are clean, which is precisely the regime in which MoM can retain sub-Gaussian-type deviation bounds under only finite second moments.}

It is important to note that ``smaller blocks are cleaner'' does not mean ``the smaller the better.'' Decreasing $m$ increases $p_0=(1-\kappa)^m$ and improves the probability of having a clean majority, but it also increases the variance of each within-block mean because fewer samples are averaged within a block. Therefore, obtaining good estimation accuracy requires balancing the number of blocks $k$ and the block size $m$.


\textbf{In the adversarial contamination,} an attacker can strategically place corrupted observations across blocks. In particular, it suffices to inject a single extremely large outlier into each of $\lceil k/2\rceil$ distinct blocks to arbitrarily shift the MoM output. Indeed, within any contaminated block of size $m$, the block mean takes the form
\[
\bar X_{\text{block}}=\frac{1}{m}\sum_{i=1}^m X_i,
\]
so a single outlier of magnitude $M$ shifts the block mean by
\[
\Delta=\frac{M}{m}.
\]
Although the multiplicative factor $1/m$ attenuates the effect, the attacker may choose $M$ arbitrarily large, hence $\Delta$ can be made arbitrarily large. Consequently, once at least half of the blocks are contaminated, the median over block means can be driven to an arbitrary value. This yields a finite-sample breakdown point on the order of
\[
\epsilon_{\mathrm{bd}} \approx \frac{\lceil k/2\rceil}{n},
\]
since contaminating $\lceil k/2\rceil$ blocks requires only $\lceil k/2\rceil$ adversarial points among $n$ observations. \textcolor{red}{In particular, if $k$ is fixed (or grows sublinearly in $n$), then $\epsilon_{\mathrm{bd}}\to 0$ as $n\to\infty$, indicating extremely poor robustness in the strict breakdown-point sense.}
Equivalently, even when the overall contamination proportion is very small, MoM can be completely compromised if more than half of the blocks each contain at least one outlier.

Finally, the choice of $k$ exhibits an intrinsic tension. If $k$ is too large, then $m=\lfloor n/k\rfloor$ becomes too small and the variance of each block mean inflates; if $k$ is too small, then blocks are too large and it becomes easier for contamination to hit a majority of blocks (especially under random contamination). \textcolor{red}{Therefore, MoM entails a structural trade-off between $k$ and $m$ that is difficult to calibrate in practice.}

\section{Simulation Studies of the ARE}
In the main text, we validate the effectiveness of the ARE estimator through several concrete LLM applications. In this section, we further demonstrate its performance via simulation studies. 

We consider one-dimensional robust mean estimation with ground-truth mean $\mu^\star=1$. For each sample size $n\in\{200,500,1000,2000,5000\}$, we generate i.i.d.\ samples $X_1,\dots,X_n$ under five data-generating processes (DGP1--DGP5) that progressively increase distributional difficulty from light tails to heavy tails and explicit corruption. DGP1 (Gaussian) serves as a light-tailed baseline:
\[
X_i=\mu^\star+\sigma Z_i,\qquad Z_i\sim\mathcal{N}(0,1),
\]
with default $\sigma=1$. DGP2 (centered log-normal) captures skewed heavy tails while preserving the mean: letting $Y_i\sim\mathcal{N}(0,1)$,
\[
X_i=\mu^\star+\Bigl(\exp(\tau Y_i)-\exp(\tau^2/2)\Bigr),
\]
where $\tau>0$ controls tail-heaviness and skewness (default $\tau=1.5$). DGP3 (Student-$t$) models symmetric heavy tails with finite variance:
\[
X_i=\mu^\star+sT_i,\qquad T_i\sim t_\nu,\qquad \nu>2,
\]
with default $(s,\nu)=(1,4)$; in our implementation $T_i=Z_i/\sqrt{V_i/\nu}$ with $Z_i\sim\mathcal{N}(0,1)$ and $V_i\sim\chi^2(\nu)$. DGP4 (symmetric Pareto) probes the infinite-variance regime while keeping the mean finite:
\[
X_i=\mu^\star+S_iU_i,
\]
where $S_i\in\{-1,+1\}$ is Rademacher (equiprobable) and $U_i\ge 1$ has Pareto tail $\mathbb{P}(U_i>u)=u^{-\alpha}$ for $u\ge 1$, with $\alpha\in(1,2)$ (default $\alpha=1.5$). Finally, DGP5 (contamination) explicitly injects outliers on top of a clean baseline $X_i^{(0)}\sim\mathcal{N}(\mu^\star,\sigma_0^2)$ with default $\sigma_0=1$. Under random (mean-preserving) contamination, independently for each $i$ we set $X_i=X_i^{(0)}$ with probability $1-\kappa$, and with probability $\kappa$ replace
\[
X_i\leftarrow \mu^\star+S_iM,
\]
where $S_i\in\{-1,+1\}$ is equiprobable (default $(\kappa,M)=(0.05,100)$). Under adversarial one-sided contamination, after drawing the clean sample we replace exactly $\lfloor\kappa n\rfloor$ points by the outlier $\mu^\star+M$. To further stress-test median-of-means (MoM) style aggregation, we also consider a block-aware adversary: we deterministically partition the first $mk$ samples into $k$ blocks of equal size $m=\lfloor n/k\rfloor$ (discarding leftovers), then select $r=\lceil k/2\rceil$ blocks and inject one outlier $\mu^\star+M$ into each selected block (optionally capped by a global budget $\lfloor\kappa n\rfloor$).

We compare the estimators \texttt{l2}, \texttt{Cauchy}, \texttt{GM}, \texttt{Adapt}, \texttt{AMB} (adaptive Maxwell--Boltzmann \citep{hitchcox2022mind}), their GNC variants \texttt{GNC\_Adapt}, \texttt{GNC\_AMB}, \texttt{GNC\_TLS} (truncated least squares), and the MoM-integrated variant \texttt{ARE}. Each method is initialized at the sample mean $\mu^{(0)}=\frac{1}{n}\sum_{i=1}^n X_i$ and returns a scalar estimate $\widehat{\mu}$. For each pair $(\mathrm{DGP},n)$, we run $R=100$ Monte Carlo replications. To ensure fair comparisons, we use common random numbers: in replication $r\in\{0,\dots,R-1\}$ we set the seed to $\texttt{base\_seed}+r$ (default $\texttt{base\_seed}=12345$), generate one dataset, and evaluate all estimators on the same data. Performance is summarized by the error $e=\widehat{\mu}-\mu^\star$ through MAE and MSE,
\[
\mathrm{MAE}=\mathbb{E}[|e|],\qquad \mathrm{MSE}=\mathbb{E}[e^2],
\]
together with robust quantiles of $|e|$ (Median, Q90, and Q95 computed via linear interpolation on sorted absolute errors). Tables~\ref{tab:dgp1_gaussian}--\ref{tab:dgp5_contamination_random} report the experimental results for DGP1--DGP5 across all sample sizes and estimators.

\begin{table*}[h]
\centering
\caption{Results on DGP1 (Gaussian): robust error summaries over $R=100$ replications.}
\label{tab:dgp1_gaussian}
\scriptsize
\setlength{\tabcolsep}{4pt}
\begin{tabular}{c|cccccccccc}
\hline
$n$ & Metric & l2 & Cauchy & GM & Adapt & AMB & GNC\_Adapt & GNC\_AMB & GNC\_TLS & ARE \\
\hline
\multirow{5}{*}{200}
& MAE    & 0.057287 & 0.074298 & 0.082726 & 0.089879 & 0.086309 & 0.089576 & 0.086304 & 0.067855 & 0.093291 \\
& MSE    & 0.005173 & 0.008747 & 0.010770 & 0.012688 & 0.011817 & 0.012600 & 0.011710 & 0.007302 & 0.013051 \\
& Median & 0.047519 & 0.062271 & 0.068645 & 0.074465 & 0.070920 & 0.074508 & 0.072363 & 0.056955 & 0.080019 \\
& Q90    & 0.117608 & 0.152255 & 0.173105 & 0.185062 & 0.181983 & 0.185578 & 0.178264 & 0.140570 & 0.187690 \\
& Q95    & 0.137931 & 0.199400 & 0.216909 & 0.229500 & 0.224137 & 0.225030 & 0.220215 & 0.163779 & 0.224390 \\
\hline
\multirow{5}{*}{500}
& MAE    & 0.034494 & 0.044633 & 0.049594 & 0.053918 & 0.052523 & 0.053909 & 0.052433 & 0.041655 & 0.063132 \\
& MSE    & 0.001838 & 0.002937 & 0.003645 & 0.004326 & 0.004069 & 0.004322 & 0.004044 & 0.002647 & 0.006196 \\
& Median & 0.029021 & 0.040605 & 0.043591 & 0.046434 & 0.045928 & 0.046208 & 0.046618 & 0.035880 & 0.052503 \\
& Q90    & 0.069328 & 0.089981 & 0.100150 & 0.106283 & 0.105652 & 0.106068 & 0.105136 & 0.085231 & 0.132819 \\
& Q95    & 0.081448 & 0.099868 & 0.114402 & 0.126156 & 0.119561 & 0.126863 & 0.119259 & 0.097621 & 0.152423 \\
\hline
\multirow{5}{*}{1000}
& MAE    & 0.024141 & 0.032439 & 0.036162 & 0.039273 & 0.038195 & 0.039322 & 0.038274 & 0.029353 & 0.044525 \\
& MSE    & 0.000877 & 0.001590 & 0.001982 & 0.002343 & 0.002228 & 0.002344 & 0.002231 & 0.001330 & 0.003116 \\
& Median & 0.020658 & 0.028489 & 0.031086 & 0.034594 & 0.034696 & 0.034421 & 0.034927 & 0.025598 & 0.037707 \\
& Q90    & 0.048351 & 0.064046 & 0.073582 & 0.079218 & 0.079125 & 0.079847 & 0.079148 & 0.059341 & 0.089249 \\
& Q95    & 0.057840 & 0.077597 & 0.085396 & 0.092410 & 0.090472 & 0.092422 & 0.089938 & 0.068942 & 0.108791 \\
\hline
\multirow{5}{*}{2000}
& MAE    & 0.018659 & 0.025467 & 0.028122 & 0.030252 & 0.029671 & 0.030232 & 0.029631 & 0.022884 & 0.035059 \\
& MSE    & 0.000534 & 0.000973 & 0.001192 & 0.001395 & 0.001326 & 0.001394 & 0.001323 & 0.000812 & 0.001860 \\
& Median & 0.016846 & 0.022130 & 0.024437 & 0.026106 & 0.026217 & 0.026044 & 0.026388 & 0.020445 & 0.029204 \\
& Q90    & 0.038356 & 0.052066 & 0.056480 & 0.061172 & 0.058984 & 0.061170 & 0.058737 & 0.046448 & 0.072581 \\
& Q95    & 0.044255 & 0.059942 & 0.066329 & 0.071238 & 0.069291 & 0.071240 & 0.068512 & 0.055781 & 0.082386 \\
\hline
\multirow{5}{*}{5000}
& MAE    & 0.011756 & 0.015152 & 0.016491 & 0.017618 & 0.017328 & 0.017611 & 0.017323 & 0.014490 & 0.020988 \\
& MSE    & 0.000215 & 0.000365 & 0.000439 & 0.000507 & 0.000490 & 0.000507 & 0.000489 & 0.000317 & 0.000683 \\
& Median & 0.010020 & 0.012336 & 0.013943 & 0.014923 & 0.014729 & 0.014922 & 0.014766 & 0.013010 & 0.017922 \\
& Q90    & 0.023354 & 0.032167 & 0.035496 & 0.037974 & 0.037047 & 0.037991 & 0.036445 & 0.027374 & 0.041582 \\
& Q95    & 0.027684 & 0.036030 & 0.040371 & 0.043431 & 0.043515 & 0.043434 & 0.042303 & 0.033452 & 0.049732 \\
\hline
\end{tabular}
\end{table*}

\FloatBarrier
\begin{table*}[h]
\centering
\caption{Results on DGP2 (Centered LogNormal): robust error summaries over $R=100$ replications.}
\label{tab:dgp2_lognormal}
\scriptsize
\setlength{\tabcolsep}{4pt}
\begin{tabular}{c|cccccccccc}
\hline
$n$ & Metric & l2 & Cauchy & GM & Adapt & AMB & GNC\_Adapt & GNC\_AMB & GNC\_TLS & ARE \\
\hline
\multirow{5}{*}{200}
& MAE    & 0.440732 & 2.383597 & 2.428866 & 2.437117 & 2.441007 & 2.457052 & 2.348575 & 2.228842 & 2.442037 \\
& MSE    & 0.319861 & 5.684898 & 5.905568 & 5.977074 & 5.964692 & 6.039979 & 5.522237 & 4.972983 & 5.968493 \\
& Median & 0.374759 & 2.385970 & 2.433266 & 2.458548 & 2.446208 & 2.460386 & 2.348119 & 2.225001 & 2.446614 \\
& Q90    & 0.835407 & 2.453506 & 2.501665 & 2.523363 & 2.513925 & 2.521825 & 2.452893 & 2.326485 & 2.530520 \\
& Q95    & 1.092987 & 2.474307 & 2.514922 & 2.537888 & 2.530098 & 2.537393 & 2.473458 & 2.343185 & 2.551792 \\
\hline
\multirow{5}{*}{500}
& MAE    & 0.284348 & 2.384501 & 2.431850 & 2.456983 & 2.446077 & 2.457022 & 2.342375 & 2.230460 & 2.456446 \\
& MSE    & 0.139719 & 5.686893 & 5.914873 & 6.037721 & 5.984414 & 6.037914 & 5.489974 & 4.976731 & 6.035749 \\
& Median & 0.233371 & 2.385287 & 2.431715 & 2.457309 & 2.446462 & 2.457377 & 2.335733 & 2.230171 & 2.458230 \\
& Q90    & 0.599346 & 2.426119 & 2.470482 & 2.496481 & 2.488803 & 2.496624 & 2.425090 & 2.284673 & 2.504184 \\
& Q95    & 0.708262 & 2.434107 & 2.482159 & 2.508471 & 2.499906 & 2.508461 & 2.444517 & 2.296450 & 2.517809 \\
\hline
\multirow{5}{*}{1000}
& MAE    & 0.211270 & 2.385299 & 2.432640 & 2.457747 & 2.448701 & 2.457779 & 2.329159 & 2.231928 & 2.457179 \\
& MSE    & 0.077294 & 5.690247 & 5.918278 & 6.041038 & 5.996701 & 6.041196 & 5.426657 & 4.982442 & 6.038572 \\
& Median & 0.172539 & 2.384034 & 2.432140 & 2.457033 & 2.448279 & 2.457028 & 2.324848 & 2.231803 & 2.457498 \\
& Q90    & 0.424567 & 2.417506 & 2.463839 & 2.489640 & 2.481998 & 2.489626 & 2.387608 & 2.270945 & 2.494162 \\
& Q95    & 0.530385 & 2.423898 & 2.470188 & 2.493565 & 2.486350 & 2.493547 & 2.406430 & 2.283688 & 2.506638 \\
\hline
\multirow{5}{*}{2000}
& MAE    & 0.156645 & 2.385716 & 2.433062 & 2.458169 & 2.449900 & 2.458201 & 2.318107 & 2.231655 & 2.458427 \\
& MSE    & 0.043052 & 5.691973 & 5.920085 & 6.042870 & 6.002305 & 6.043028 & 5.374561 & 4.980825 & 6.044297 \\
& Median & 0.121270 & 2.386073 & 2.433150 & 2.457977 & 2.449910 & 2.458020 & 2.313820 & 2.231244 & 2.459061 \\
& Q90    & 0.320521 & 2.409282 & 2.456259 & 2.480855 & 2.473205 & 2.480893 & 2.358750 & 2.261899 & 2.485424 \\
& Q95    & 0.394085 & 2.417107 & 2.461496 & 2.485910 & 2.478026 & 2.485956 & 2.375311 & 2.270079 & 2.491608 \\
\hline
\multirow{5}{*}{5000}
& MAE    & 0.105697 & 2.385195 & 2.432413 & 2.457464 & 2.449503 & 2.457496 & 2.307456 & 2.231414 & 2.458300 \\
& MSE    & 0.018055 & 5.689290 & 5.916752 & 6.039241 & 6.000178 & 6.039394 & 5.324652 & 4.979428 & 6.043399 \\
& Median & 0.091034 & 2.385240 & 2.432084 & 2.457331 & 2.449294 & 2.457379 & 2.306014 & 2.231975 & 2.458453 \\
& Q90    & 0.216724 & 2.399676 & 2.445936 & 2.470685 & 2.462998 & 2.470644 & 2.330204 & 2.251057 & 2.474467 \\
& Q95    & 0.262409 & 2.405306 & 2.450735 & 2.475080 & 2.466920 & 2.475065 & 2.338020 & 2.255622 & 2.477549 \\
\hline
\end{tabular}
\end{table*}

\FloatBarrier
\begin{table*}[h]
\centering
\caption{Results on DGP3 (Student-$t$): robust error summaries over $R=100$ replications.}
\label{tab:dgp3_studentt}
\scriptsize
\setlength{\tabcolsep}{4pt}
\begin{tabular}{c|cccccccccc}
\hline
$n$ & Metric & l2 & Cauchy & GM & Adapt & AMB & GNC\_Adapt & GNC\_AMB & GNC\_TLS & ARE \\
\hline
\multirow{5}{*}{200}
& MAE    & 0.079329 & 0.077027 & 0.085139 & 0.091804 & 0.089159 & 0.091806 & 0.088340 & 0.075760 & 0.106217 \\
& MSE    & 0.010140 & 0.009076 & 0.011105 & 0.013011 & 0.012191 & 0.013010 & 0.011964 & 0.009004 & 0.017967 \\
& Median & 0.065749 & 0.066426 & 0.074241 & 0.079843 & 0.077080 & 0.079817 & 0.076304 & 0.065225 & 0.086817 \\
& Q90    & 0.165889 & 0.156022 & 0.175618 & 0.185670 & 0.181116 & 0.185723 & 0.181239 & 0.155254 & 0.216114 \\
& Q95    & 0.197331 & 0.181987 & 0.201629 & 0.219082 & 0.212402 & 0.219154 & 0.211227 & 0.181042 & 0.265095 \\
\hline
\multirow{5}{*}{500}
& MAE    & 0.053192 & 0.050686 & 0.055188 & 0.059119 & 0.057425 & 0.059175 & 0.057249 & 0.050691 & 0.070144 \\
& MSE    & 0.004389 & 0.003959 & 0.004725 & 0.005423 & 0.005150 & 0.005426 & 0.005107 & 0.004040 & 0.007499 \\
& Median & 0.047555 & 0.042452 & 0.048275 & 0.049438 & 0.048677 & 0.049481 & 0.048193 & 0.041320 & 0.063075 \\
& Q90    & 0.108891 & 0.100059 & 0.111604 & 0.121137 & 0.118760 & 0.120968 & 0.118494 & 0.103940 & 0.144723 \\
& Q95    & 0.124714 & 0.122820 & 0.130709 & 0.143039 & 0.141928 & 0.143011 & 0.136721 & 0.123777 & 0.162166 \\
\hline
\multirow{5}{*}{1000}
& MAE    & 0.036055 & 0.034389 & 0.036687 & 0.038872 & 0.038216 & 0.038801 & 0.038277 & 0.035842 & 0.045915 \\
& MSE    & 0.002049 & 0.001818 & 0.002119 & 0.002405 & 0.002328 & 0.002397 & 0.002328 & 0.001978 & 0.003375 \\
& Median & 0.030219 & 0.029908 & 0.031209 & 0.032017 & 0.031530 & 0.031926 & 0.031413 & 0.030449 & 0.039262 \\
& Q90    & 0.076757 & 0.071994 & 0.077547 & 0.081623 & 0.079673 & 0.081599 & 0.080059 & 0.072346 & 0.096297 \\
& Q95    & 0.087859 & 0.085480 & 0.095375 & 0.100148 & 0.098976 & 0.100240 & 0.098355 & 0.083406 & 0.117685 \\
\hline
\multirow{5}{*}{2000}
& MAE    & 0.027005 & 0.025412 & 0.027685 & 0.029742 & 0.029301 & 0.029659 & 0.029244 & 0.026076 & 0.035955 \\
& MSE    & 0.001130 & 0.001019 & 0.001203 & 0.001371 & 0.001331 & 0.001361 & 0.001327 & 0.001082 & 0.002001 \\
& Median & 0.024092 & 0.021646 & 0.023765 & 0.025259 & 0.025824 & 0.025264 & 0.025472 & 0.022653 & 0.029967 \\
& Q90    & 0.053873 & 0.050953 & 0.057300 & 0.062042 & 0.060372 & 0.061152 & 0.059646 & 0.054660 & 0.073035 \\
& Q95    & 0.063653 & 0.059921 & 0.065569 & 0.071133 & 0.069951 & 0.070878 & 0.070029 & 0.063390 & 0.083720 \\
\hline
\multirow{5}{*}{5000}
& MAE    & 0.016767 & 0.015550 & 0.016721 & 0.017673 & 0.017497 & 0.017613 & 0.017462 & 0.016209 & 0.022485 \\
& MSE    & 0.000438 & 0.000380 & 0.000446 & 0.000505 & 0.000494 & 0.000503 & 0.000493 & 0.000417 & 0.000775 \\
& Median & 0.014579 & 0.013322 & 0.014540 & 0.015093 & 0.015065 & 0.015025 & 0.015005 & 0.013681 & 0.019660 \\
& Q90    & 0.033488 & 0.031232 & 0.033738 & 0.035248 & 0.035180 & 0.035245 & 0.035151 & 0.033712 & 0.045893 \\
& Q95    & 0.041146 & 0.037998 & 0.041529 & 0.044344 & 0.043462 & 0.044339 & 0.043444 & 0.041594 & 0.051865 \\
\hline
\end{tabular}
\end{table*}

\FloatBarrier
\begin{table*}[h]
\centering
\caption{Results on DGP4 (Symmetric Pareto): robust error summaries over $R=100$ replications.}
\label{tab:dgp4_pareto}
\scriptsize
\setlength{\tabcolsep}{4pt}
\begin{tabular}{c|cccccccccc}
\hline
$n$ & Metric & l2 & Cauchy & GM & Adapt & AMB & GNC\_Adapt & GNC\_AMB & GNC\_TLS & ARE \\
\hline
\multirow{5}{*}{200}
& MAE    & 0.412362 & 1.438328 & 1.518497 & 1.522176 & 1.517095 & 1.500790 & 1.564941 & 1.595852 & 1.326587 \\
& MSE    & 0.590463 & 2.072604 & 2.413169 & 2.459166 & 2.413946 & 2.255401 & 2.458244 & 2.645237 & 1.767638 \\
& Median & 0.253832 & 1.436787 & 1.502406 & 1.498419 & 1.502898 & 1.501094 & 1.564170 & 1.650789 & 1.325492 \\
& Q90    & 0.848063 & 1.514769 & 1.572911 & 1.565906 & 1.574786 & 1.566497 & 1.664394 & 1.763932 & 1.438845 \\
& Q95    & 1.155875 & 1.537037 & 1.602260 & 1.596185 & 1.600452 & 1.599216 & 1.694309 & 1.795882 & 1.461990 \\
\hline
\multirow{5}{*}{500}
& MAE    & 0.346482 & 1.439051 & 1.505764 & 1.513167 & 1.516621 & 1.497524 & 1.574814 & 1.638352 & 1.339161 \\
& MSE    & 0.441919 & 2.072407 & 2.268632 & 2.356217 & 2.371738 & 2.243820 & 2.482337 & 2.718181 & 1.798361 \\
& Median & 0.214237 & 1.435626 & 1.503844 & 1.501487 & 1.503984 & 1.495642 & 1.576356 & 1.655792 & 1.345156 \\
& Q90    & 0.679670 & 1.488627 & 1.554075 & 1.548618 & 1.551994 & 1.545145 & 1.633025 & 1.723029 & 1.425720 \\
& Q95    & 1.001716 & 1.505555 & 1.565078 & 1.559460 & 1.565327 & 1.556368 & 1.648180 & 1.741259 & 1.447595 \\
\hline
\multirow{5}{*}{1000}
& MAE    & 0.404736 & 1.435188 & 1.568055 & 1.499761 & 1.502030 & 1.496583 & 1.581408 & 1.650191 & 1.172465 \\
& MSE    & 6.035140 & 2.060631 & 4.486813 & 2.249927 & 2.260321 & 2.244539 & 2.502444 & 2.742182 & 1.592772 \\
& Median & 0.174446 & 1.433928 & 1.502667 & 1.497949 & 1.503413 & 1.497246 & 1.585403 & 1.658140 & 1.350682 \\
& Q90    & 0.617388 & 1.473958 & 1.538876 & 1.532762 & 1.537795 & 1.533496 & 1.626974 & 1.709116 & 1.419330 \\
& Q95    & 0.977243 & 1.485230 & 1.548406 & 1.543379 & 1.549137 & 1.545707 & 1.641047 & 1.721744 & 1.435046 \\
\hline
\multirow{5}{*}{2000}
& MAE    & 0.277339 & 1.483650 & 1.553332 & 1.549032 & 1.553667 & 1.498893 & 1.583741 & 1.640998 & 1.358529 \\
& MSE    & 1.570783 & 3.414067 & 3.633623 & 3.623838 & 3.619150 & 2.247003 & 2.509354 & 2.721683 & 1.850885 \\
& Median & 0.134661 & 1.434549 & 1.503553 & 1.499047 & 1.504036 & 1.497680 & 1.589020 & 1.660958 & 1.364235 \\
& Q90    & 0.499724 & 1.461581 & 1.527771 & 1.521979 & 1.528136 & 1.521673 & 1.621502 & 1.693832 & 1.421070 \\
& Q95    & 0.723037 & 1.468161 & 1.534138 & 1.527576 & 1.533661 & 1.527547 & 1.628375 & 1.702626 & 1.432571 \\
\hline
\multirow{5}{*}{5000}
& MAE    & 0.172371 & 1.433381 & 1.503061 & 1.498617 & 1.504033 & 1.498603 & 1.585251 & 1.610063 & 1.376493 \\
& MSE    & 0.291315 & 2.054726 & 2.259312 & 2.245967 & 2.262231 & 2.245924 & 2.518299 & 2.668343 & 1.896178 \\
& Median & 0.095233 & 1.433980 & 1.502981 & 1.498358 & 1.503860 & 1.498899 & 1.592539 & 1.660890 & 1.377615 \\
& Q90    & 0.312437 & 1.447404 & 1.516828 & 1.511865 & 1.517150 & 1.511360 & 1.608947 & 1.678992 & 1.423776 \\
& Q95    & 0.446499 & 1.452532 & 1.519658 & 1.515645 & 1.521086 & 1.515568 & 1.613800 & 1.682779 & 1.436759 \\
\hline
\end{tabular}
\end{table*}

\FloatBarrier
\begin{table*}[h]
\centering
\caption{Results on DGP5 (Random contamination): robust error summaries over $R=100$ replications.}
\label{tab:dgp5_contamination_random}
\scriptsize
\setlength{\tabcolsep}{4pt}
\begin{tabular}{c|cccccccccc}
\toprule
$n$ & Metric & l2 & Cauchy & GM & Adapt & AMB & GNC\_Adapt & GNC\_AMB & GNC\_TLS & ARE \\
\hline
\multirow{5}{*}{200}
& MAE    & 1.275762 & 0.076195 & 0.085042 & 0.092851 & 0.088960 & 0.092902 & 0.082093 & 0.069203 & 0.101623 \\
& MSE    & 2.589767 & 0.009235 & 0.011385 & 0.013490 & 0.012538 & 0.013503 & 0.010758 & 0.007540 & 0.016209 \\
& Median & 1.034393 & 0.063265 & 0.071296 & 0.077451 & 0.074361 & 0.077476 & 0.064493 & 0.058816 & 0.085344 \\
& Q90    & 2.562352 & 0.154705 & 0.171824 & 0.187716 & 0.182085 & 0.187708 & 0.170956 & 0.143705 & 0.219979 \\
& Q95    & 3.040650 & 0.198719 & 0.218241 & 0.232770 & 0.224837 & 0.232767 & 0.207915 & 0.179176 & 0.251584 \\
\hline
\multirow{5}{*}{500}
& MAE    & 0.803141 & 0.045911 & 0.051108 & 0.055784 & 0.054351 & 0.055812 & 0.051871 & 0.043391 & 0.065892 \\
& MSE    & 0.999150 & 0.003058 & 0.003778 & 0.004492 & 0.004241 & 0.004495 & 0.003863 & 0.002878 & 0.006678 \\
& Median & 0.647632 & 0.042344 & 0.047195 & 0.050444 & 0.050155 & 0.050443 & 0.046949 & 0.037627 & 0.055327 \\
& Q90    & 1.648696 & 0.088117 & 0.097799 & 0.107246 & 0.102712 & 0.107400 & 0.100482 & 0.087705 & 0.133173 \\
& Q95    & 1.987630 & 0.103240 & 0.113667 & 0.125033 & 0.119585 & 0.124933 & 0.113398 & 0.103239 & 0.162270 \\
\hline
\multirow{5}{*}{1000}
& MAE    & 0.559301 & 0.033795 & 0.037628 & 0.040745 & 0.039726 & 0.040750 & 0.038179 & 0.030827 & 0.047724 \\
& MSE    & 0.478545 & 0.001704 & 0.002122 & 0.002517 & 0.002390 & 0.002517 & 0.002215 & 0.001423 & 0.003510 \\
& Median & 0.493729 & 0.028837 & 0.032411 & 0.035934 & 0.034792 & 0.035898 & 0.033473 & 0.027246 & 0.038480 \\
& Q90    & 1.097783 & 0.068452 & 0.077436 & 0.084156 & 0.082611 & 0.084147 & 0.076178 & 0.059305 & 0.097886 \\
& Q95    & 1.313985 & 0.080211 & 0.088760 & 0.097425 & 0.095241 & 0.097411 & 0.092335 & 0.073964 & 0.115841 \\
\hline
\multirow{5}{*}{2000}
& MAE    & 0.402444 & 0.025673 & 0.028418 & 0.030721 & 0.030015 & 0.030728 & 0.029719 & 0.023478 & 0.036463 \\
& MSE    & 0.252472 & 0.000999 & 0.001222 & 0.001433 & 0.001362 & 0.001434 & 0.001334 & 0.000846 & 0.002092 \\
& Median & 0.342546 & 0.022232 & 0.024743 & 0.026295 & 0.026008 & 0.026305 & 0.026590 & 0.020136 & 0.030277 \\
& Q90    & 0.814902 & 0.052501 & 0.057734 & 0.063119 & 0.060448 & 0.063057 & 0.059638 & 0.046563 & 0.077859 \\
& Q95    & 0.983756 & 0.063525 & 0.067954 & 0.074476 & 0.071649 & 0.074469 & 0.071677 & 0.054432 & 0.091296 \\
\hline
\multirow{5}{*}{5000}
& MAE    & 0.254088 & 0.015730 & 0.017206 & 0.018488 & 0.018124 & 0.018487 & 0.018104 & 0.014815 & 0.023329 \\
& MSE    & 0.099986 & 0.000389 & 0.000470 & 0.000546 & 0.000526 & 0.000546 & 0.000520 & 0.000330 & 0.000835 \\
& Median & 0.214689 & 0.013191 & 0.014152 & 0.015257 & 0.014829 & 0.015312 & 0.014979 & 0.013506 & 0.020551 \\
& Q90    & 0.528963 & 0.031760 & 0.035809 & 0.038966 & 0.038035 & 0.038986 & 0.038560 & 0.027583 & 0.047545 \\
& Q95    & 0.641245 & 0.037724 & 0.041986 & 0.045497 & 0.044011 & 0.045536 & 0.043942 & 0.032985 & 0.055688 \\
\hline
\end{tabular}
\end{table*}

\FloatBarrier
\begin{table*}[H]
\centering
\caption{Results on DGP5 (Adversarial contamination): robust error summaries over $R=100$ replications.}
\label{tab:dgp5_contamination_adversarial}
\scriptsize
\setlength{\tabcolsep}{4pt}
\begin{tabular}{c|cccccccccc}
\hline
$n$ & Metric & l2 & Cauchy & GM & Adapt & AMB & GNC\_Adapt & GNC\_AMB & GNC\_TLS & ARE \\
\hline
\multirow{5}{*}{200}
& MAE    & 4.998228 & 0.076277 & 0.085139 & 0.093145 & 0.089440 & 0.093207 & 0.072658 & 0.069398 & 0.103908 \\
& MSE    & 24.987188 & 0.009264 & 0.011434 & 0.013583 & 0.012628 & 0.013600 & 0.008495 & 0.007742 & 0.016552 \\
& Median & 4.998399 & 0.063285 & 0.072447 & 0.080556 & 0.077372 & 0.080661 & 0.059623 & 0.057831 & 0.091392 \\
& Q90    & 5.084834 & 0.158220 & 0.175776 & 0.190045 & 0.185091 & 0.190049 & 0.154349 & 0.143999 & 0.212980 \\
& Q95    & 5.108672 & 0.194569 & 0.210418 & 0.232594 & 0.216446 & 0.232717 & 0.181021 & 0.173504 & 0.255916 \\
\hline
\multirow{5}{*}{500}
& MAE    & 4.999549 & 0.045390 & 0.050404 & 0.054917 & 0.053448 & 0.054955 & 0.044563 & 0.042151 & 0.067477 \\
& MSE    & 24.997235 & 0.003090 & 0.003836 & 0.004572 & 0.004302 & 0.004578 & 0.003019 & 0.002751 & 0.007088 \\
& Median & 4.999719 & 0.040502 & 0.044703 & 0.046358 & 0.046212 & 0.046366 & 0.037025 & 0.036002 & 0.055582 \\
& Q90    & 5.050312 & 0.089395 & 0.100666 & 0.109910 & 0.105790 & 0.110204 & 0.091927 & 0.086210 & 0.142707 \\
& Q95    & 5.064844 & 0.105286 & 0.118548 & 0.130789 & 0.127427 & 0.130905 & 0.106573 & 0.102324 & 0.159420 \\
\hline
\multirow{5}{*}{1000}
& MAE    & 5.001430 & 0.033238 & 0.037159 & 0.040485 & 0.039282 & 0.040500 & 0.032143 & 0.030005 & 0.047662 \\
& MSE    & 25.015113 & 0.001671 & 0.002089 & 0.002487 & 0.002350 & 0.002489 & 0.001573 & 0.001385 & 0.003576 \\
& Median & 5.001633 & 0.028801 & 0.031912 & 0.034680 & 0.034012 & 0.034688 & 0.028010 & 0.027005 & 0.039794 \\
& Q90    & 5.037464 & 0.067114 & 0.075948 & 0.083690 & 0.080102 & 0.083690 & 0.064081 & 0.060579 & 0.096793 \\
& Q95    & 5.044215 & 0.077899 & 0.088521 & 0.096025 & 0.093886 & 0.096265 & 0.075479 & 0.071472 & 0.119724 \\
\hline
\multirow{5}{*}{2000}
& MAE    & 5.001119 & 0.025654 & 0.028436 & 0.030799 & 0.030083 & 0.030811 & 0.024935 & 0.023116 & 0.036394 \\
& MSE    & 25.011676 & 0.001013 & 0.001251 & 0.001476 & 0.001393 & 0.001478 & 0.000961 & 0.000838 & 0.002057 \\
& Median & 5.002635 & 0.020993 & 0.023428 & 0.025935 & 0.025570 & 0.025954 & 0.021426 & 0.019531 & 0.032832 \\
& Q90    & 5.029097 & 0.052707 & 0.058848 & 0.063274 & 0.060947 & 0.063288 & 0.052107 & 0.050658 & 0.073951 \\
& Q95    & 5.036158 & 0.063923 & 0.069545 & 0.075759 & 0.074667 & 0.075907 & 0.059192 & 0.057579 & 0.086389 \\
\hline
\multirow{5}{*}{5000}
& MAE    & 5.001120 & 0.015621 & 0.016972 & 0.018272 & 0.017900 & 0.018279 & 0.015694 & 0.014902 & 0.021548 \\
& MSE    & 25.011411 & 0.000387 & 0.000465 & 0.000539 & 0.000520 & 0.000539 & 0.000384 & 0.000336 & 0.000751 \\
& Median & 5.001860 & 0.013595 & 0.014565 & 0.015719 & 0.015379 & 0.015783 & 0.013764 & 0.013436 & 0.017448 \\
& Q90    & 5.019530 & 0.031161 & 0.034034 & 0.036961 & 0.035849 & 0.036985 & 0.030413 & 0.028134 & 0.044297 \\
& Q95    & 5.022983 & 0.037216 & 0.040165 & 0.044437 & 0.043214 & 0.044466 & 0.037985 & 0.035315 & 0.054321 \\
\hline
\end{tabular}
\end{table*}

Across the five data-generating processes (DGP1--DGP5), the dominant pattern is that \texttt{ARE} offers its clearest benefits precisely in the regimes where robustness is genuinely needed: skewed heavy tails, infinite-variance tails, and explicit contamination. In the skewed heavy-tailed DGP2 (centered log-normal) and the infinite-variance DGP4 (symmetric Pareto with tail index $\xi\in(1,2)$), several non-MoM robust losses (e.g., \texttt{Cauchy/GM/Adapt/AMB} and their GNC counterparts) exhibit a pronounced non-vanishing error floor: their MAE and Median remain essentially flat as $n$ grows, with high quantiles (Q90/Q95) also failing to improve materially. In contrast, \texttt{ARE} consistently suppresses the tail risk in these settings, yielding markedly smaller Q90/Q95 and, in DGP4, substantially improved overall error summaries at representative sample sizes (e.g., a striking reduction in MAE/MSE at $n=1000$). This behavior is consistent with the two-layer robustness of the method: the inner GNC mechanism stabilizes the optimization of a nonconvex robust objective, while the outer median-of-means aggregation truncates the influence of a minority of blocks containing extreme observations, thereby preventing the estimator from being dominated by rare but catastrophic samples.

The advantage becomes even more transparent under explicit contamination (DGP5). Under random contamination, the sample mean (\texttt{l2}) suffers a dramatic degradation in all metrics, while \texttt{ARE} maintains small errors and noticeably better tail quantiles, indicating strong protection against sporadic outliers. Under adversarial contamination, \texttt{l2} is effectively ``locked'' by the injected outliers: its MAE stays near $5$ and does not meaningfully decrease with $n$, reflecting a complete breakdown of classical averaging. In sharp contrast, \texttt{ARE} remains accurate and continues to improve with $n$, with particularly strong control of Q90/Q95, which highlights its robustness to worst-case corruption consistent with MoM-type guarantees (so long as a majority of blocks remain uncontaminated). Importantly, in the benign baselines DGP1 (Gaussian) and DGP3 (Student-$t$ with finite variance), \texttt{ARE} incurs only a modest constant-factor efficiency loss relative to the best light-tail methods (e.g., \texttt{l2} or \texttt{GNC\_TLS}) and does not exhibit instability. Overall, these results suggest that \texttt{ARE} trades a small amount of efficiency in easy regimes for substantial and reliable gains in heavy-tailed and contaminated regimes, with its most consistent improvements appearing in the high-quantile error metrics (Q90/Q95) that directly capture tail-risk robustness.

\section{Introduction to Adaptive Losses.} \label{IntroductionAdaptive}
In this paper, we employ the adaptive loss function introduced by \citet{BarronCVPR2019}, defined as follows:
\begin{equation}\label{adaptiveloss}
\rho(\epsilon; \alpha, c)
= \frac{|\alpha-2|}{\alpha}\left(\left(1+\frac{(\epsilon/c)^2}{|\alpha-2|}\right)^{\alpha/2}-1\right),
\end{equation}
where $\alpha \in (-\infty, 2]$ is a shape parameter controlling the loss's robustness, while $c > 0$ is a scale parameter governing the quadratic bowl near $\epsilon = 0$. Through simple analysis, we can immediately recover several common losses---such as the L2, Charbonnier/pseudo-Huber/L1-L2, Cauchy/Lorentzian \citep{black1996robust}, Geman--McClure \citep{geman1986bayesian}, and Welsch/Leclerc losses \citep{dennis1978techniques,leclerc1989constructing}---as special cases of the adaptive loss $\rho(\epsilon;\alpha,c)$ in \eqref{adaptiveloss} by setting $\alpha$ to particular values, as summarized as follows,
\[
\rho(\epsilon; \alpha, c) =
\begin{cases}
\frac{1}{2} (\epsilon/c)^2 & \alpha = 2 \\
\sqrt{(\epsilon/c)^2 + 1} - 1 & \alpha = 1 \\
\log \bigl( \frac{1}{2} (\epsilon/c)^2 + 1 \bigr) & \alpha = 0 \\
2 (\epsilon/c)^2 / \bigl( (\epsilon/c)^2 + 4 \bigr) & \alpha = -2 \\
1 - \exp \bigl( -\frac{1}{2} (\epsilon/c)^2 \bigr) & \alpha = -\infty
\end{cases}, \quad \frac{\partial \rho}{\partial \epsilon}(\epsilon; \alpha, c) =
\begin{cases}
\epsilon / c^2 & \alpha = 2 \\
(\epsilon / c^2) / \sqrt{(\epsilon/c)^2 + 1} & \alpha = 1 \\
2\epsilon / (\epsilon^2 + 2 c^2) & \alpha = 0 \\
(\epsilon / c^2) \bigl( (\epsilon/c)^2 / 4 + 1 \bigr)^{-3/2} & \alpha = -2 \\
(\epsilon / c^2) \exp \bigl( -\frac{1}{2} (\epsilon/c)^2 \bigr) & \alpha = -\infty
\end{cases}
\]

\begin{figure}[H]
\centering 
\begin{minipage}[t]{0.98\textwidth}
\centering
\includegraphics[scale=0.38]{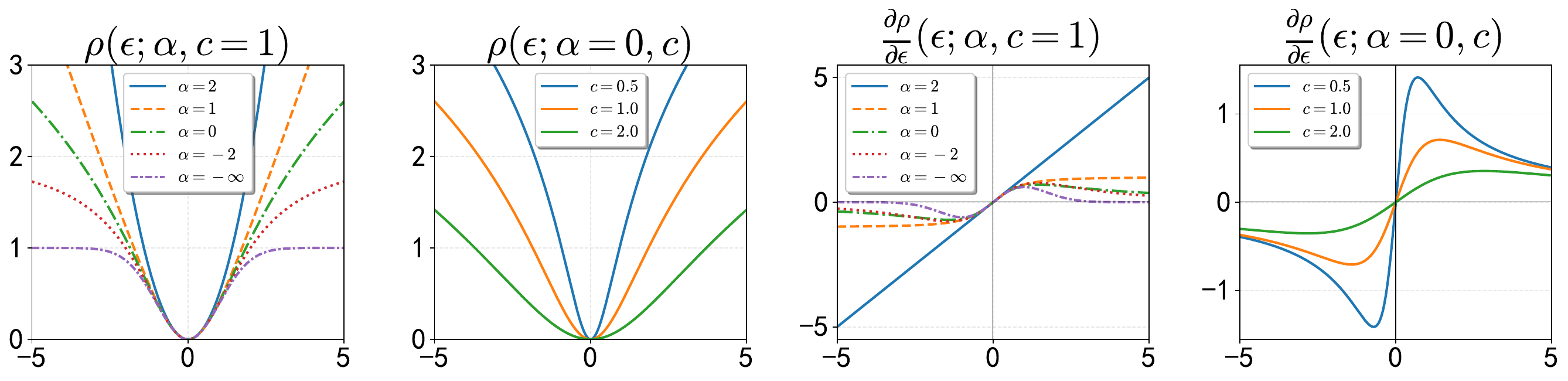}
\end{minipage}

\begin{minipage}[t]{0.98\textwidth}
\centering
\includegraphics[scale=0.38]{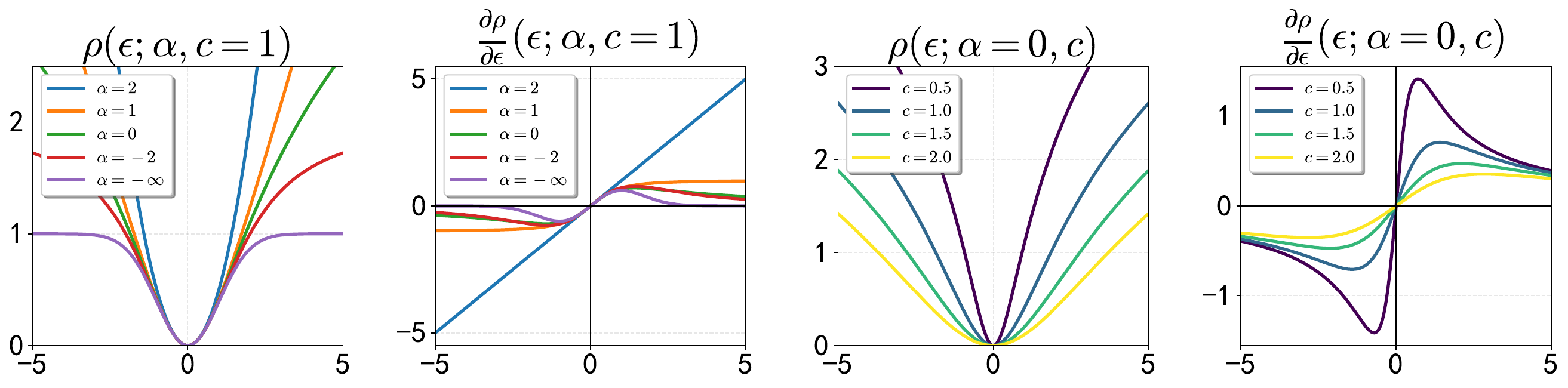}
\end{minipage}
\caption{\textbf{Adaptive robust loss $\rho(\epsilon;\alpha,c)$ and its gradient under different parameter settings.}
From left to right and top to bottom, the four panels show:
(\emph{top-left}) the loss $\rho(\epsilon;\alpha,c)$ for varying shape $\alpha$ with fixed $c=1$;
(\emph{top-right}) the corresponding gradient $\partial\rho/\partial\epsilon$ for varying $\alpha$ with fixed $c=1$;
(\emph{bottom-left}) the loss for varying scale $c$ with fixed $\alpha=0$;
(\emph{bottom-right}) the corresponding gradient for varying $c$ with fixed $\alpha=0$.}
\label{adaptivellos} 
\end{figure}


The adaptive loss $\rho(\epsilon;\alpha,c)$ provides a unified two-parameter family that spans a broad spectrum of robust penalties through a shape parameter $\alpha\in(-\infty,2]$ and a scale parameter $c>0$.
As illustrated in Figure \ref{adaptivellos}, decreasing $\alpha$ continuously interpolates from the quadratic $\ell_2$ loss ($\alpha=2$), which assigns unbounded influence to large residuals, to increasingly robust losses whose growth slows down for large $|\epsilon|$.
This continuum makes $\alpha$ a direct knob for robustness: smaller $\alpha$ reduces the marginal penalty on large errors and thus mitigates the effect of outliers or heavy-tailed noise. The scale parameter $c$ controls the width of the near-zero quadratic regime.
For fixed $\alpha$ (e.g., $\alpha=0$), larger $c$ enlarges the approximately quadratic region around $\epsilon=0$, thereby behaving more like least squares on moderate residuals and improving statistical efficiency under mild noise.
Conversely, smaller $c$ triggers earlier deviation from quadratic growth, increasing robustness by treating moderate-to-large residuals more conservatively.
Hence, $c$ mediates a local efficiency--robustness trade-off by determining the transition point at which the loss begins to downweight large errors.

The gradients $\partial\rho/\partial\epsilon$ make the robustness mechanism explicit through the induced influence function.
When $\alpha$ is small, the gradient decays rapidly as $|\epsilon|$ grows and can approach zero for very large residuals, effectively suppressing the contribution of extreme observations during optimization.
This bounded (or rapidly vanishing) influence stabilizes gradient-based updates and prevents optimization from being dominated by a small number of corrupted samples.
The scale $c$ further adjusts the decay rate of $\partial\rho/\partial\epsilon$: smaller $c$ yields faster saturation (stronger outlier rejection but potentially slower progress when many residuals are moderately large), whereas larger $c$ preserves larger gradients over a wider range (often faster convergence locally but weaker robustness).
Overall, the $(\alpha,c)$ parameterization enables data-dependent control of robustness versus efficiency within a single coherent framework, which is particularly useful for robust estimation and learning under heavy-tailed or contaminated data.

\section{The GNC Paradigm for Mean Estimation in Subsection~\ref{Estimationmean}} \label{realstep1solve}

This appendix details the graduated nonconvexity (GNC) procedure used to solve the nonconvex subproblem \eqref{Xstep} (equivalently \eqref{dualityweight}) after obtaining $(\alpha^{(1)},c^{(1)})$.
We present (i) a GNC surrogate for the adaptive loss, (ii) the induced IRLS weights and the associated Black--Rangarajan (BR) regularization, and (iii) a complete GNC-IRLS algorithm with practical continuation schedules and stopping criteria.

Let $f(\beta,\alpha^{(1)})$ be a shape mapping controlled by a continuation parameter $\beta$.
We construct a one-parameter family of surrogate losses by replacing the shape parameter $\alpha^{(1)}$ with $f(\beta,\alpha^{(1)})$:
\begin{equation}
\label{eq:app_surrogate_def}
\rho_{\beta}(\epsilon;\alpha^{(1)},c^{(1)})
\;\coloneqq\;
\rho\!\left(\epsilon;\ f(\beta,\alpha^{(1)}),\ c^{(1)}\right),
\end{equation}
where $\rho(\cdot;\alpha,c)$ denotes the adaptive robust loss used in the main text. To avoid ambiguity and ensure numerical stability, we recommend writing $\rho_\beta$ in the standard Barron form with a scale parameter:
\begin{equation}
\label{eq:app_barron_scaled}
\rho_{\beta}(\epsilon;\alpha^{(1)},c^{(1)})
=
\frac{|f-2|}{f}
\left[
\left(
\frac{(\epsilon/c^{(1)})^{2}}{|f-2|}+1
\right)^{\frac{f}{2}}
-1
\right],
\qquad f \equiv f(\beta,\alpha^{(1)}).
\end{equation}
This definition guarantees the correct quadratic limit as $f\to 2$ and matches the commonly used scaling convention (cf.\ Barron-style adaptive losses).
If a different scaling convention is adopted in the main text, \eqref{eq:app_barron_scaled} should be adjusted accordingly; the derivations below remain identical up to constant factors.

A valid shape mapping must satisfy
\begin{equation}
\label{eq:app_f_conditions}
\lim_{\beta\to \beta_1} f(\beta,\alpha^{(1)}) = 2,
\qquad
\lim_{\beta\to \beta_2} f(\beta,\alpha^{(1)}) = \alpha^{(1)},
\qquad
f(\beta,\alpha^{(1)}) \le 2,
\end{equation}
where $\beta_1,\beta_2$ are surrogate-dependent constants (or endpoints) determined by the chosen parameterization of $\beta$.
Intuitively, $f(\beta,\alpha^{(1)})\to 2$ yields a convex quadratic surrogate, while $f(\beta,\alpha^{(1)})\to \alpha^{(1)}$ recovers the original adaptive loss (typically nonconvex). We consider three concrete choices of $f(\beta,\alpha^{(1)})$:

\noindent\textbf{Example 1 (Polynomial decay; $\beta:\ \infty\to 1$).}
\begin{equation}
\label{eq:app_f1}
f_1(\beta,\alpha^{(1)})=2-\frac{2-\alpha^{(1)}}{\beta^{p}},\qquad p>0,
\end{equation}
so that $\beta\to\infty$ implies $f_1\to 2$ and $\beta\to 1$ implies $f_1\to \alpha^{(1)}$.

\noindent\textbf{Example 2 (Exponential transition; $\beta:\ 0^+\to \infty$).}
\begin{equation}
\label{eq:app_f2}
f_2(\beta,\alpha^{(1)})
=
\alpha^{(1)} e^{-1/\beta} + 2\bigl(1-e^{-1/\beta}\bigr),
\end{equation}
so that $\beta\to 0^+$ implies $f_2\to 2$ and $\beta\to\infty$ implies $f_2\to \alpha^{(1)}$.

\noindent\textbf{Example 3 (Rational form; $\beta:\ 0^+\to \infty$).}
\begin{equation}
\label{eq:app_f3}
f_3(\beta,\alpha^{(1)})=\frac{2+\alpha^{(1)}\beta^{q}}{1+\beta^{q}},\qquad q>0,
\end{equation}
so that $\beta\to 0^+$ implies $f_3\to 2$ and $\beta\to\infty$ implies $f_3\to \alpha^{(1)}$.

\begin{figure}[H]
\centering
\includegraphics[scale=0.38]{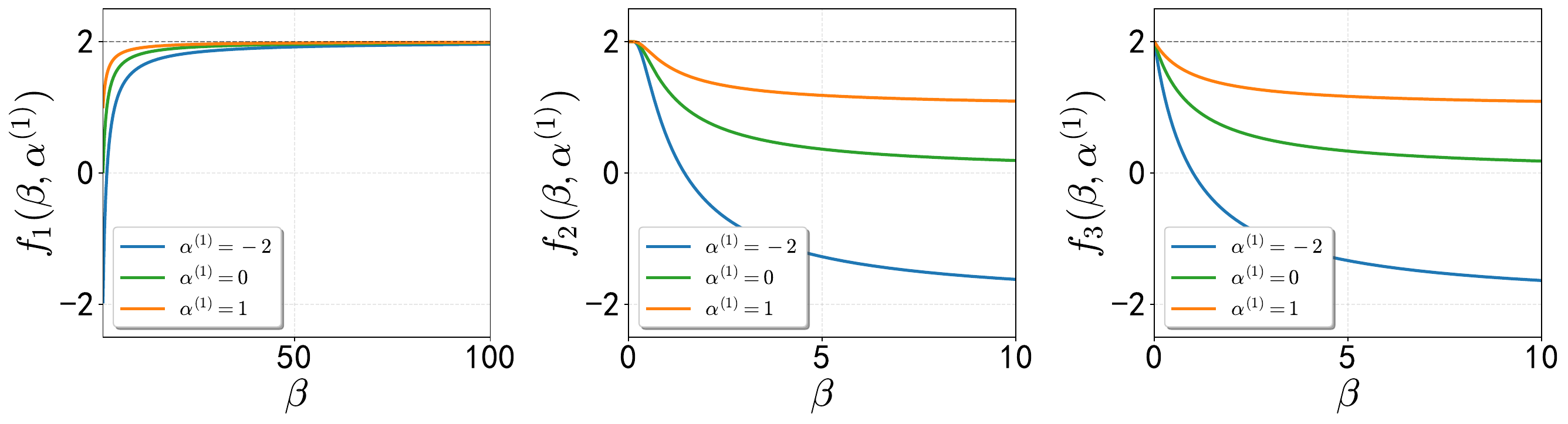}
\caption{Three instantiations of the interpolation function $f(\beta,\alpha^{(1)})$: (a) polynomial decay $f_1 = 2 - \frac{2-\alpha^{(1)}}{\beta^{p}}$ with $\beta: \infty \to 1$; (b) exponential transition $f_2 = \alpha^{(1)} e^{-1/\beta} + 2(1-e^{-1/\beta})$ with $\beta: 0^+ \to \infty$; (c) rational form $f_3 = \frac{2 + \alpha^{(1)} \beta^q}{1+\beta^q}$ with $\beta: 0^+ \to \infty$, all smoothly interpolating between the quadratic regime ($f=2$) and the adaptive regime ($f=\alpha^{(1)}$) for different values of $\alpha^{(1)} \in \{-2, 0, 1\}$.}\label{instantiationsexample} 
\end{figure}

Let $\epsilon_i=X_i-X$ and consider the surrogate objective
\begin{equation}
\label{eq:app_surrogate_obj}
\min_{X\in\mathbb{R}}\ \frac{1}{m}\sum_{i=1}^m \rho_{\beta}(\epsilon_i;\alpha^{(1)},c^{(1)}).
\end{equation}
At fixed $(\beta,\alpha^{(1)},c^{(1)})$, IRLS solves a sequence of weighted least-squares problems
\begin{equation}
\label{eq:app_wls}
\min_{X\in\mathbb{R}}\ \frac{1}{2m}\sum_{i=1}^m w_i\,\epsilon_i^2,
\end{equation}
with weights obtained by matching derivatives w.r.t.\ $X$:
\begin{equation}
\label{eq:app_weights_general}
w_i
\;=\;
\frac{1}{\epsilon_i}\frac{\partial}{\partial \epsilon_i}\rho_{\beta}(\epsilon_i;\alpha^{(1)},c^{(1)}).
\end{equation}

With $\rho_\beta$ defined in \eqref{eq:app_barron_scaled}, straightforward differentiation yields
\begin{equation}
\label{eq:app_weights_barron}
w_i
=
\frac{1}{(c^{(1)})^2}
\left(
\frac{(\epsilon_i/c^{(1)})^{2}}{|f-2|}+1
\right)^{\frac{f}{2}-1},
\qquad f=f(\beta,\alpha^{(1)}).
\end{equation}
For numerical robustness, it is often convenient to use the following special-case expressions when $f$ is close to singular values (and otherwise use \eqref{eq:app_weights_barron}):
\begin{equation}
\label{eq:app_weights_piecewise}
w_i=
\begin{cases}
\frac{1}{(c^{(1)})^2}, & f=2,\\[4pt]
\frac{2}{\epsilon_i^2 + 2(c^{(1)})^2}, & f=0,\\[6pt]
\frac{1}{(c^{(1)})^2}\left(\frac{(\epsilon_i/c^{(1)})^2}{|f-2|}+1\right)^{\frac{f}{2}-1}, & \text{otherwise},
\end{cases}
\end{equation}
and to clip $w_i$ to $(0,1]$ when required by the BR-duality constraint.

Optimizing \eqref{eq:app_wls} jointly over $(X,(w_i)^m_{i=1})$ without regularization degenerates to $w_i=0$ for all $i$.
To prevent this, we exploit the Black--Rangarajan duality~\citep{10.1007/BF00131148}.
Under standard conditions (e.g., concavity of $\phi(z)\!\coloneqq\!\rho_\beta(c^{(1)}\sqrt{z})$ in $z$), there exists an outlier-process penalty $\Phi_{\rho_\beta}(\cdot)$ such that
\begin{equation}
\label{eq:app_brd_identity}
\rho_{\beta}(\epsilon)=\min_{w\in(0,1]}\frac{1}{2}w\epsilon^2+\Phi_{\rho_{\beta}}(w).
\end{equation}
Plugging \eqref{eq:app_brd_identity} into \eqref{eq:app_surrogate_obj} yields the BR-regularized formulation
\begin{equation}
\label{eq:app_dual_problem}
\min_{X\in\mathbb{R},\, w_i\in(0,1]}\ \frac{1}{2m}\sum_{i=1}^m\Bigl(w_i\epsilon_i^2+\Phi_{\rho_{\beta}}(w_i)\Bigr),
\end{equation}
which matches \eqref{dualityweight} with the surrogate loss $\rho_\beta$.
At fixed $X$, minimizing \eqref{eq:app_dual_problem} over $w_i$ recovers the IRLS update \eqref{eq:app_weights_general}. For fixed weights, \eqref{eq:app_wls} admits a closed-form minimizer:
\begin{equation}
\label{eq:app_x_update}
X
\leftarrow
\frac{\sum_{i=1}^m w_i X_i}{\sum_{i=1}^m w_i}.
\end{equation}

The GNC continuation progressively changes $\beta$ so that $f(\beta,\alpha^{(1)})$ moves from the quadratic regime ($f\approx 2$) to the target regime ($f\approx \alpha^{(1)}$).

Example 1 schedule ($\beta:\infty\to 1$).
We initialize $\beta_0$ at a large value (e.g., $\beta_0\gg 1$ so that $f_1(\beta_0,\alpha^{(1)})\approx 2$) and decrease it geometrically toward $1$:
\begin{equation}
\label{eq:app_beta_decay}
\beta_{k}
=
1+\frac{\beta_{k-1}-1}{\gamma},
\qquad \gamma>1.
\end{equation}
This update ensures $\beta_k\downarrow 1$ and hence $f_1(\beta_k,\alpha^{(1)})\downarrow \alpha^{(1)}$.

Example 2/3 schedule ($\beta:0^+\to\infty$).
We initialize $\beta_0$ small (e.g., $\beta_0\ll 1$ so that $f_{2/3}(\beta_0,\alpha^{(1)})\approx 2$) and increase it geometrically:
\begin{equation}
\label{eq:app_beta_growth}
\beta_k=\gamma \beta_{k-1},\qquad \gamma>1,
\end{equation}
so that $\beta_k\uparrow \infty$ and $f_{2/3}(\beta_k,\alpha^{(1)})\to \alpha^{(1)}$.

We terminate the outer continuation once
\begin{equation}
\label{eq:app_stop_f}
\bigl|f(\beta,\alpha^{(1)})-\alpha^{(1)}\bigr|\le \varepsilon_f,
\end{equation}
or after a maximum number of continuation steps.

Algorithm~\ref{alg:app_gnd_irls} summarizes the full procedure.
The method alternates IRLS updates of $(X,(w_i)^m_{i=1})$ at a fixed $\beta$, and then updates $\beta$ to increase nonconvexity.
This yields a robust solver with improved basins of attraction compared to directly optimizing the original nonconvex loss.

We use geometric schedules \eqref{eq:app_beta_decay} (Example~1) or \eqref{eq:app_beta_growth} (Examples~2--3) with $\gamma\in[1.2,2]$ in our experiments.
For initialization, we set $X$ to the sample mean (or median for additional robustness) and choose $\beta_0$ such that $f(\beta_0,\alpha^{(1)})$ is sufficiently close to $2$ (e.g., $|f(\beta_0,\alpha^{(1)})-2|\le 10^{-2}$). Each inner iteration costs $\mathcal{O}(m)$ due to the closed-form updates \eqref{eq:app_weights_barron} and \eqref{eq:app_x_update}.
Hence the total cost is $\mathcal{O}(m\,T_{\max}\,K_{\max})$. The continuation parameter $\beta$ controls the rate at which nonconvexity is injected.
A faster schedule (larger $\gamma$ in \eqref{eq:app_beta_decay}--\eqref{eq:app_beta_growth}) can accelerate convergence but may reduce robustness to poor initialization; conversely, a slower schedule yields more stable iterations with larger attraction basins at the expense of additional outer steps.

\begin{algorithm}[H]
\caption{GNC-IRLS for solving \eqref{dualityweight} (Step-1 update)}
\label{alg:app_gnd_irls}
\begin{algorithmic}[1]
\REQUIRE Measurements $(X_i)_{i=1}^m$; parameters $(\alpha^{(1)},c^{(1)})$; mapping $f(\beta,\alpha^{(1)})$; schedule $\beta \leftarrow \mathcal{S}(\beta)$; tolerances $\varepsilon_X,\varepsilon_f$; iteration limits $T_{\max}$ (inner) and $K_{\max}$ (outer).
\ENSURE $\widetilde{X}^{(1)}$.

\STATE $X \leftarrow \frac{1}{m}\sum_{i=1}^m X_i$ (or median); choose $\beta \leftarrow \beta_0$ such that $f(\beta_0,\alpha^{(1)}) \approx 2$.
\FOR{$k=1,2,\ldots,K_{\max}$}
\STATE $f \leftarrow f(\beta,\alpha^{(1)})$.
\FOR{$t=1,2,\ldots,T_{\max}$}
\STATE $\epsilon_i \leftarrow X_i - X$,\ $w_i \leftarrow \frac{1}{\epsilon_i}\frac{\partial}{\partial \epsilon_i}\rho(\epsilon_i;f,c^{(1)}),$ $w_i \leftarrow \min\{1,\max\{\delta,w_i\}\},\ \forall i$
\STATE $X_{\mathrm{new}} \leftarrow \frac{\sum_{i=1}^m w_i X_i}{\sum_{i=1}^m w_i}$ 
\IF{$|X_{\mathrm{new}}-X|\le \varepsilon_X$}
\STATE $X \leftarrow X_{\mathrm{new}}$;
\ENDIF
\STATE $X \leftarrow X_{\mathrm{new}}$.
\ENDFOR
\IF{$\bigl|f(\beta,\alpha^{(1)})-\alpha^{(1)}\bigr|\le \varepsilon_f$}
\STATE \textbf{break}
\ENDIF
\STATE $\beta \leftarrow \mathcal{S}(\beta)$ \hfill \textit{(increase nonconvexity)}.
\ENDFOR

\STATE \textbf{return} $\widetilde{X}^{(1)} \leftarrow X$.
\end{algorithmic}
\end{algorithm}

\section{Technical Proofs}

\subsection{Proof of Lemma \ref{Monotonicitybaron}}
\begin{proof}
We divide the analysis into two cases. When \(\alpha > 2\), we have \(|\alpha - 2| = \alpha - 2\), so
\[
\rho(\epsilon;\alpha, c) = \frac{\alpha - 2}{\alpha} \left[ \left( 1 + \frac{(\epsilon/c)^2}{\alpha - 2} \right)^{\alpha/2} - 1 \right].
\]

Let
\[
y = 1 + \frac{(\epsilon/c)^2}{\alpha - 2} \quad (\ge 1).
\]
Differentiating with respect to \(\alpha\) (via the product and chain rules) yields
\[
\frac{\partial \rho}{\partial \alpha} = \frac{2}{\alpha^2} \bigl( y^{\alpha/2} - 1 \bigr) + \frac{\alpha - 2}{2\alpha} \, y^{\alpha/2} \log y - \frac{1}{2} \, y^{\alpha/2 - 1} (y - 1).
\]
For fixed $\alpha$, $\epsilon$, and $c$, define the function $F: [1, \infty) \to \mathbb{R}$ by
$$F(y) := \frac{\partial \rho}{\partial \alpha}(\epsilon, \alpha, c) = \frac{2}{\alpha^2} \bigl( y^{\alpha/2} - 1 \bigr) + \frac{\alpha - 2}{2\alpha} \, y^{\alpha/2} \log y - \frac{1}{2} \, y^{\alpha/2 - 1} (y - 1).$$
Direct differentiation in $y$ gives
$$
\frac{d}{dy} F(y) = \frac{\alpha - 2}{4} \, y^{\alpha/2 - 2} \bigl( 1 - y + y \log y \bigr) \ge 0,
$$
Moreover,
$$F(1) = \frac{2}{\alpha^2}(1 - 1) + \frac{\alpha - 2}{2\alpha} \cdot 1 \cdot \log 1 - \frac{1}{2} \cdot 1 \cdot (1 - 1) = 0.$$
Hence, $F(y) \ge 0$ for all $y \ge 1$, with strict inequality when $y > 1$ (equivalently, $\epsilon \neq 0$). When \(\alpha < 2\), we have
\[
\rho(\epsilon;\alpha, c) = \frac{2 - \alpha}{\alpha} \left[ \left( 1 + \frac{(\epsilon/c)^2}{2 - \alpha} \right)^{\alpha/2} - 1 \right].
\]
Let
\[
y = 1 + \frac{(\epsilon/c)^2}{2 - \alpha} \quad (\ge 1).
\]
Differentiating with respect to \(\alpha\) gives
\[
\frac{\partial \rho}{\partial \alpha} = -\frac{2}{\alpha^2} \bigl( y^{\alpha/2} - 1 \bigr) + \frac{2 - \alpha}{2\alpha} \, y^{\alpha/2} \log y + \frac{1}{2} \, y^{\alpha/2 - 1} (y - 1).
\]
For fixed $\alpha$, $\epsilon$, and $c$, define $G: [1, \infty) \to \mathbb{R}$ by
$$G(y) := \frac{\partial \rho}{\partial \alpha}(\epsilon, \alpha, c) = -\frac{2}{\alpha^2} \bigl( y^{\alpha/2} - 1 \bigr) + \frac{2 - \alpha}{2\alpha} \, y^{\alpha/2} \log y + \frac{1}{2} \, y^{\alpha/2 - 1} (y - 1).$$
Differentiating in $y$ gives
\[
\frac{dG}{dy} = \frac{2 - \alpha}{4} \, y^{\alpha/2 - 2} \bigl( y \log y - (y - 1) \bigr).
\]
The fact that \(y \log y - (y - 1) \ge 0\) for \(y \ge 1\) (a standard inequality, with equality only at \(y = 1\)) implies \(\frac{dG}{dy} \ge 0\), since \(2 - \alpha > 0\) and \(y^{\alpha/2 - 2} \ge 0\). As in the previous case, \(G(1) = 0\), so \(G(y) \ge 0\) for all \(y \ge 1\), with strict inequality when \(y > 1\) (i.e., $\epsilon \neq 0$). Combining both cases, we conclude that \(\partial \rho / \partial \alpha \ge 0\) for all \(\alpha \neq 0, 2\), with strict inequality unless \(\epsilon = 0\). At \(\alpha = 0\) and \(\alpha = 2\), the function is not differentiable in \(\alpha\) due to the absolute value, but the one-sided limits from the analysis above confirm that the loss increases as \(\alpha\) moves away from these points whenever \(\epsilon \neq 0\).
\end{proof}

\subsection{Basic properties of the score function}
\label{app:basic}

\begin{lemma}
\label{lem:basic}
Fix $\alpha\in(0,1]$ and $k=2-\alpha\in[1,2)$.
There exist constants $L_\alpha,C_\alpha,\underline c_\alpha>0$ depending only on $\alpha$ such that for all $\varepsilon\in\R$ and $c>0$:
\begin{enumerate}
\item (Bounded score)
\[
\abs{\psi(\varepsilon;\alpha,c)}
\le
\min\left\{\frac{\abs{\varepsilon}}{c^2},\ \frac{L_\alpha}{c}\right\}.
\]
\item (Derivative bound)
Let $\psi'(\varepsilon;\alpha,c):=\frac{\partial}{\partial \varepsilon}\psi(\varepsilon;\alpha,c)$.
Then
\[
\abs{\psi'(\varepsilon;\alpha,c)}\le \frac{C_\alpha}{c^2},
\qquad
\psi'(0;\alpha,c)=\frac{1}{c^2}.
\]
\item (Local positive curvature)
If $\abs{\varepsilon}\le c$, then
\[
\psi'(\varepsilon;\alpha,c)\ge \frac{\underline c_\alpha}{c^2}.
\]
\end{enumerate}
\end{lemma}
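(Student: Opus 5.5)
The plan is to reduce everything to a single-variable function by scaling. Write $u=\varepsilon/c$ and $k=\kappa_\alpha=2-\alpha$, and set
\[
h(u):=u\left(1+\frac{u^2}{k}\right)^{\alpha/2-1}.
\]
Then
\[
\psi(\varepsilon;\alpha,c)=\frac{1}{c}\,h(\varepsilon/c),
\qquad
\psi'(\varepsilon;\alpha,c)=\frac{1}{c^2}\,h'(\varepsilon/c).
\]
All three claims therefore become statements about $h$ that do not involve $c$. The constants will depend only on $\alpha$, since $k$ is determined by $\alpha$.

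\emph{Bounded score.} Because $\alpha/2-1<0$, the factor $(1+u^2/k)^{\alpha/2-1}$ is at most $1$. Hence $|h(u)|\le|u|$, which gives $|\psi|\le|\varepsilon|/c^2$.

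For the second bound, note that $|h(u)|\sim k^{1-\alpha/2}|u|^{\alpha-1}$ as $|u|\to\infty$. The exponent $\alpha-1$ is nonpositive for $\alpha\in(0,1]$, so $h$ is continuous, odd, and bounded at infinity. Thus $L_\alpha:=\sup_u|h(u)|<\infty$, and this gives $|\psi|\le L_\alpha/c$.

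To make $L_\alpha$ explicit, set $s=u^2/k$ and use $|h|=\sqrt{k}\,s^{1/2}(1+s)^{\alpha/2-1}$. Since $s^{1/2}\le(1+s)^{1/2}$, we get $|h|\le\sqrt{k}\,(1+s)^{(\alpha-1)/2}\le\sqrt{k}$. So we may take $L_\alpha=\sqrt{2-\alpha}$.

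\emph{Derivative bound.} Direct differentiation gives
\[
h'(u)=\left(1+\frac{u^2}{k}\right)^{\alpha/2-2}\left(1+\frac{u^2}{k}+\left(\alpha-2\right)\frac{u^2}{k}\right)
=\left(1+s\right)^{\alpha/2-2}\bigl(1-(1-\alpha)s\bigr),
\]
with $s=u^2/k$. At $u=0$ this equals $1$, which yields $\psi'(0)=1/c^2$.

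For the uniform bound, use $|1-(1-\alpha)s|\le 1+s$ and $(1+s)^{\alpha/2-1}\le 1$. Together these give $|h'|\le 1$, so $C_\alpha=1$ works.

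\emph{Local positive curvature.} The condition $|\varepsilon|\le c$ means $|u|\le 1$, so $s\le 1/k\le 1$ because $k\ge1$.

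The main point to check is positivity of the second factor. We have $1-(1-\alpha)s\ge 1-(1-\alpha)/k$, and
\[
1-\frac{1-\alpha}{2-\alpha}=\frac{1}{2-\alpha}>0.
\]
The first factor satisfies $(1+s)^{\alpha/2-2}\ge 2^{\alpha/2-2}$. Combining the two, we can take
\[
\underline c_\alpha=\frac{2^{\alpha/2-2}}{2-\alpha}.
\]

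Finally, undoing the scaling $u=\varepsilon/c$ in each of the three bounds gives the stated inequalities for $\psi$ and $\psi'$. Every step is elementary calculus. The only thing needing care is the sign in part 3, and that is handled by the bound $(1-\alpha)/k<1$.
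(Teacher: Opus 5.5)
Your proposal is correct and follows essentially the same route as the paper: both write $\psi'$ in the closed form $c^{-2}(1+s)^{\alpha/2-2}\bigl(1+(\alpha-1)s\bigr)$ with $s=(\varepsilon/c)^2/k$, then bound the two factors separately. The differences are minor. You make the constants explicit ($L_\alpha=\sqrt{2-\alpha}$, $C_\alpha=1$), whereas the paper leaves them implicit. In part 3 you use $s\le 1/k$ rather than $s\le 1$, which gives the slightly sharper bracket bound $1/(2-\alpha)$ in place of the paper's $\alpha$. Your explicit bound also handles $\alpha=1$ cleanly: there $\psi$ stays bounded but does not actually decay, as the paper's wording suggests.
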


\begin{proof}
Write $u:=(\varepsilon/c)^2/k\ge 0$ and $s:=1+u\ge 1$.
Then $\psi(\varepsilon;\alpha,c)=\frac{\varepsilon}{c^2}s^{\alpha/2-1}$.
Since $\alpha/2-1\le 0$ and $s\ge 1$, we have $s^{\alpha/2-1}\le 1$, hence $\abs{\psi}\le \abs{\varepsilon}/c^2$.
Moreover, $\abs{\psi(\varepsilon;\alpha,c)}$ decays for large $\abs{\varepsilon}$ (because $\alpha-1\le 0$), hence $\sup_\varepsilon\abs{\psi(\varepsilon;\alpha,c)}\le L_\alpha/c$ for some constant $L_\alpha$.

Differentiate to obtain
\[
\psi'(\varepsilon;\alpha,c)
=
\frac{1}{c^2}s^{\alpha/2-2}\Bigl[1+(\alpha-1)u\Bigr].
\]
This yields $\psi'(0)=1/c^2$ and $\abs{\psi'}\le C_\alpha/c^2$.
If $\abs{\varepsilon}\le c$ then $u\le 1/k\le 1$ and $1+(\alpha-1)u\ge \alpha$; also $s^{\alpha/2-2}\ge (1+1/k)^{\alpha/2-2}$.
Thus $\psi'(\varepsilon;\alpha,c)\ge \underline c_\alpha/c^2$ with $\underline c_\alpha:=\alpha(1+1/k)^{\alpha/2-2}>0$.
\end{proof}

\subsection{Proof of Theorem~\ref{thm:FV}}
\label{app:FV}

We prove Theorem~\ref{thm:FV} by comparing the empirical score $\widehat g_m$ to its population counterpart $g_m$ in a neighborhood of $\mu$.
The key idea is that, under finite variance and with $c_m\to\infty$, the score behaves locally like the linear score of the sample mean, while still being well-defined even when $\alpha<1$.
We first show that $g_m$ identifies a unique population root $\theta_m$ near $\mu$ and that $\theta_m$ is close enough to $\mu$ so that its contribution is asymptotically negligible after $\sqrt m$ scaling.
We then show that $\widehat g_m$ has a unique central root with high probability and finally derive a Bahadur-type linearization around $\theta_m$, which reduces the problem to a triangular-array CLT.

\begin{lemma}[Population identification near $\mu$]
\label{lem:ident}
Assume Assumption~\ref{ass:FV} and fix $\alpha\in(0,1]$.
Let $c_m\to\infty$.
Then there exist constants $c_0>0$ and $m_0$ such that for all $m\ge m_0$ and all $x$ with $\abs{x-\mu}\le c_m/4$,
\[
-\frac{2}{c_m^2}\ \le\ g_m'(x)\ \le\ -\frac{c_0}{c_m^2}.
\]
In particular, $g_m$ is strictly decreasing on $\{x:\abs{x-\mu}\le c_m/4\}$, hence the equation $g_m(x)=0$ has a unique solution on this interval, denoted by $\theta_m$.
\end{lemma}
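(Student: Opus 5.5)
The plan is to differentiate the population score through the integral, bound the resulting expectation of $\psi'$ pointwise from above and in expectation from below, and then read off monotonicity and a unique root. I adopt the sign convention under which the lemma is stated. Throughout, $g_m'(x)$ denotes the derivative in $x$ of the population score map $x\mapsto \E[\psi(X-x;\alpha,c_m)]$. By the chain rule this derivative equals $-\E[\psi'(X-x;\alpha,c_m)]$. (Under the literal definition $g_m=-\E\psi(X-x)$ the derivative is $+\E\psi'$, and the same argument gives the mirror-image bound; under the lemma's convention the two-sided bound reads exactly $-2/c_m^2\le g_m'(x)\le -c_0/c_m^2$.)

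\textbf{Step 1: differentiation under $\E$.} By Lemma~\ref{lem:basic}(2), $\abs{\psi'(\cdot;\alpha,c_m)}\le C_\alpha/c_m^2$ uniformly. Dominated convergence therefore lets me differentiate under the expectation, which gives $g_m'(x)=-\E[\psi'(X-x;\alpha,c_m)]$.

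\textbf{Step 2: the bound $g_m'\ge -2/c_m^2$.} This only needs $\psi'\le 2/c_m^2$ pointwise, and I will sharpen Lemma~\ref{lem:basic}(2) to this explicit constant. Write $u=(\varepsilon/c)^2/\kappa_\alpha$ and $s=1+u$, as in the proof of Lemma~\ref{lem:basic}. Then
\[
c^2\psi'=s^{\alpha/2-2}\bigl[1+(\alpha-1)u\bigr].
\]
Since $\alpha-1\le 0$ and $s\ge1$, the bracket is at most $1$ and $s^{\alpha/2-2}\le 1$. Hence $\psi'\le 1/c^2\le 2/c^2$.

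\textbf{Step 3: the bound $g_m'\le -c_0/c_m^2$.} This is the main step: it requires $\E[\psi'(X-x)]\ge c_0/c_m^2$, and $\psi'$ can be negative in the tails. First I bound the negative part. Using $\abs{1+(\alpha-1)u}\le 1+u=s$, I get $\abs{c^2\psi'}\le s^{\alpha/2-1}\le1$, so $\psi'\ge -1/c^2$ everywhere.

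Next, fix $x$ with $\abs{x-\mu}\le c_m/4$ and split on the event $A=\{\abs{X-\mu}\le c_m/2\}$.
\begin{itemize}
\item On $A$ we have $\abs{X-x}\le 3c_m/4\le c_m$, so Lemma~\ref{lem:basic}(3) gives $\psi'\ge \underline c_\alpha/c_m^2$.
\item On $A^c$ we use $\psi'\ge -1/c_m^2$. Chebyshev's inequality gives $\Pbb(A^c)\le 4\sigma^2/c_m^2\to0$, by Assumption~\ref{ass:FV}.
\end{itemize}
Combining the two pieces,
\[
\E\psi'\ \ge\ \frac{1}{c_m^2}\Bigl[\underline c_\alpha(1-4\sigma^2/c_m^2)-4\sigma^2/c_m^2\Bigr]\ \ge\ \frac{\underline c_\alpha/2}{c_m^2}
\]
for all $m\ge m_0$. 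This yields the lemma with $c_0=\underline c_\alpha/2$.

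\textbf{Step 4: uniqueness and existence of the root.} Steps 2 and 3 show that $g_m'$ has constant sign on $I=\{\abs{x-\mu}\le c_m/4\}$. Hence $g_m$ is strictly monotone there, which gives uniqueness and the stated monotonicity.

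For existence I show that $g_m$ changes sign on $I$. At the centre, $\E[X-\mu]=0$ gives
\[
\abs{g_m(\mu)}=\abs{\E[\psi(X-\mu)-(X-\mu)/c_m^2]}\le c_m^{-2}\,\E\bigl[\abs{\varepsilon}\,\abs{1-s^{\alpha/2-1}}\bigr],
\]
where $\varepsilon=X-\mu$. The factor satisfies $\abs{1-s^{\alpha/2-1}}\le\min\{1,C u\}$. Therefore
\[
\E\bigl[\abs{\varepsilon}\min\{1,\varepsilon^2/c_m^2\}\bigr]\ \le\ \E\bigl[\varepsilon^2/c_m\,\min\{c_m/\abs{\varepsilon},\abs{\varepsilon}/c_m\}\bigr]\ =\ o(1/c_m),
\]
by dominated convergence, since $\min\{\cdot\}\le 1$ and tends to $0$ pointwise. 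Thus $\abs{g_m(\mu)}=o(c_m^{-3})$.

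Meanwhile, Step 3 shows that $\abs{g_m}$ grows by at least $c_0/(4c_m)\cdot c_m^{-1}\cdot c_m^{0}$, that is, by at least $(c_0/c_m^2)(c_m/4)$, between $\mu$ and either endpoint of $I$. This gain is of order $1/c_m$, which dominates $\abs{g_m(\mu)}=o(c_m^{-3})$. Hence $g_m$ takes opposite signs at $\mu\pm c_m/4$ for all large $m$. The intermediate value theorem then produces the unique root $\theta_m$ in $I$.
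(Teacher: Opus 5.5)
Your proposal is correct and follows the same route as the paper. Both split on $\{\abs{X-\mu}\le c_m/2\}\subseteq\{\abs{X-x}\le c_m\}$, control the complement by Chebyshev, and use the local curvature bound of Lemma~\ref{lem:basic}(3) on the good event with a uniform bound on $\psi'$ off it. Your additions are sound refinements: the explicit bounds $-1/c^2\le\psi'\le 1/c^2$ (which actually justify the constant $2$, whereas the paper only cites an unspecified $C_\alpha$), the chain-rule sign correction, and the intermediate-value argument for existence of $\theta_m$, which the paper's proof omits. One slip: the factor $c_0/(4c_m)\cdot c_m^{-1}\cdot c_m^{0}$ in Step 4 is garbled and should simply read $c_0/(4c_m)$.
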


\begin{proof}
Recall $g_m'(x)=-\E[\psi'(X-x;\alpha,c_m)]$.
The upper bound follows from Lemma~\ref{lem:basic}(ii).
For the lower bound, define $A_x:=\{\abs{X-x}\le c_m\}$.
If $\abs{x-\mu}\le c_m/4$ then $\{\abs{X-\mu}\le c_m/2\}\subseteq A_x$, hence by Chebyshev,
\[
\Pbb(A_x)\ge 1-\frac{4\sigma^2}{c_m^2}.
\]
On $A_x$, Lemma~\ref{lem:basic}(iii) gives $\psi'(X-x;\alpha,c_m)\ge \underline c_\alpha/c_m^2$; on $A_x^c$, Lemma~\ref{lem:basic}(ii) gives $\abs{\psi'}\le C_\alpha/c_m^2$.
Therefore $\E[\psi'(X-x;\alpha,c_m)]\ge c_0/c_m^2$ for $m$ large enough, implying $g_m'(x)\le -c_0/c_m^2$.
\end{proof}

\begin{lemma}[Population bias]
\label{lem:bias-FV}
Assume Assumption~\ref{ass:FV} and let $\theta_m$ be the unique solution of $g_m(\theta)=0$ in $\{\abs{\theta-\mu}\le c_m/4\}$.
Then
\[
\abs{\theta_m-\mu}\le \frac{C\sigma^2}{c_m},
\]
for a constant $C$ depending only on $\alpha$.
\end{lemma}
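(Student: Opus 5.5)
The plan is a mean-value argument around $\mu$, combining the curvature bound of Lemma~\ref{lem:ident} with a bound on $g_m(\mu)$. Since $g_m(\theta_m)=0$, there is $\xi$ between $\mu$ and $\theta_m$ with
\[
0-g_m(\mu)=g_m(\theta_m)-g_m(\mu)=g_m'(\xi)(\theta_m-\mu).
\]
Both points lie in $\{\abs{x-\mu}\le c_m/4\}$, so Lemma~\ref{lem:ident} gives $\abs{g_m'(\xi)}\ge c_0/c_m^2$. Hence
\[
\abs{\theta_m-\mu}\le \frac{c_m^2}{c_0}\abs{g_m(\mu)}.
\]
It therefore suffices to prove $\abs{g_m(\mu)}\le C'\sigma^2/c_m^3$.

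To bound $g_m(\mu)=-\E[\psi(\varepsilon;\alpha,c_m)]$ with $\varepsilon:=X-\mu$, use the centering $\E[\varepsilon]=0$ to write
\[
g_m(\mu)=-\E\!\left[\psi(\varepsilon;\alpha,c_m)-\frac{\varepsilon}{c_m^2}\right].
\]
With $u=(\varepsilon/c_m)^2/\kappa_\alpha$, the integrand equals $\frac{\varepsilon}{c_m^2}\bigl((1+u)^{\alpha/2-1}-1\bigr)$.

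Bound the bracket $(1+u)^{\alpha/2-1}-1$ in two ways. It has absolute value at most $1$ for all $u\ge0$. It also has absolute value at most $(1-\alpha/2)u$, by the mean value theorem or Bernoulli's inequality, since the exponent lies in $[-1/2,0)$. So the integrand is bounded by
\[
\frac{\abs{\varepsilon}}{c_m^2}\min\{1,\,C u\}\le \frac{\abs{\varepsilon}}{c_m^2}\min\Bigl\{1,\frac{C\varepsilon^2}{c_m^2}\Bigr\}.
\]
Next use $\min\{1,t^2\}\le t$ for $t=\abs{\varepsilon}/c_m$; this holds for $t\le1$ since $t^2\le t$, and for $t\ge1$ since $1\le t$. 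The integrand is therefore at most $C\varepsilon^2/c_m^3$. Taking expectations gives $\abs{g_m(\mu)}\le C\sigma^2/c_m^3$, and combining with the first step yields $\abs{\theta_m-\mu}\le C\sigma^2/(c_0c_m)$.

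The main obstacle is that only a second moment is available, so a Taylor expansion of the bracket to higher order cannot be used. This is why the two regimes are merged via $\min\{1,t^2\}\le t$ rather than handled separately: only one factor $\abs{\varepsilon}/c_m$ is traded, which costs exactly one power of $c_m$ and leaves a quantity integrable under a second moment.

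The constant $c_0$ from Lemma~\ref{lem:ident} depends on $\alpha$ only once $m\ge m_0$, through $\underline c_\alpha$ and $C_\alpha$. So the final constant $C$ depends only on $\alpha$ for large $m$. For small $m$ one can either absorb the finitely many cases into $C$ or state the bound for $m\ge m_0$, matching Lemma~\ref{lem:ident}.
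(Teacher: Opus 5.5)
Your proposal is correct and follows essentially the same route as the paper. You bound $\abs{g_m(\mu)}\le C\sigma^2/c_m^3$ by centering $\psi$ against $\varepsilon/c_m^2$ and controlling the bracket by $\min\{1,Cu\}$, then transfer this to $\abs{\theta_m-\mu}$ via the mean value theorem and the curvature bound of Lemma~\ref{lem:ident}; your inequality $\min\{1,t^2\}\le t$ is just a compact form of the paper's explicit split into $\{\abs{\varepsilon}\le c_m\}$ and its complement.
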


\begin{proof}
The goal is to bound the (small) mismatch $g_m(\mu)$ induced by the robust score at finite $c_m$.
Let $\varepsilon:=X-\mu$ and $s:=1+(\varepsilon/c_m)^2/k$.
Then
\[
\psi(\varepsilon;\alpha,c_m)
=
\frac{\varepsilon}{c_m^2}
+
\frac{\varepsilon}{c_m^2}\Bigl(s^{\alpha/2-1}-1\Bigr).
\]
Since $\E[\varepsilon]=0$,
\[
\E[\psi(X-\mu;\alpha,c_m)]
=
\frac{1}{c_m^2}\E\Bigl[\varepsilon\bigl(s^{\alpha/2-1}-1\bigr)\Bigr].
\]
Split $B=\{\abs{\varepsilon}\le c_m\}$ and $B^c$.
On $B$, with $u=(\varepsilon/c_m)^2/k\in[0,1]$, a Taylor remainder bound gives $\abs{s^{\alpha/2-1}-1}\le Cu$; hence
\[
\abs{\varepsilon}\abs{s^{\alpha/2-1}-1}\le C\abs{\varepsilon}u
= C\frac{\abs{\varepsilon}^3}{c_m^2}\le C\frac{\varepsilon^2}{c_m}.
\]
On $B^c$, $\abs{s^{\alpha/2-1}-1}\le 1$ and $\abs{\varepsilon}\1\{\abs{\varepsilon}>c_m\}\le \varepsilon^2/c_m$, so the same bound holds.
Therefore
\[
\abs{\E[\psi(X-\mu;\alpha,c_m)]}\le \frac{C\sigma^2}{c_m^3},
\qquad\text{i.e.,}\qquad
\abs{g_m(\mu)}\le \frac{C\sigma^2}{c_m^3}.
\]
By the mean value theorem, for some $\bar\theta$ between $\mu$ and $\theta_m$,
\[
0=g_m(\theta_m)=g_m(\mu)+(\theta_m-\mu)g_m'(\bar\theta).
\]
Lemma~\ref{lem:ident} ensures $\abs{g_m'(\bar\theta)}\ge c_0/c_m^2$, hence
\[
\abs{\theta_m-\mu}\le \frac{\abs{g_m(\mu)}}{\abs{g_m'(\bar\theta)}}
\le \frac{C\sigma^2/c_m^3}{c_0/c_m^2}
= \frac{C'\sigma^2}{c_m}.
\]
\end{proof}

\begin{proof}[Proof of Theorem~\ref{thm:FV}]
\textbf{Step 1 (reduce to a CLT around the population root).}
Let $\theta_m$ be the unique solution of $g_m(\theta)=0$ in $\abs{\theta-\mu}\le c_m/4$.
Lemma~\ref{lem:bias-FV} yields $\abs{\theta_m-\mu}\le C\sigma^2/c_m$, hence
\[
\sqrt m\,\abs{\theta_m-\mu}\le C\sigma^2\frac{\sqrt m}{c_m}=C\sigma^2 m^{-\gamma}\to 0.
\]
Thus it suffices to prove $\sqrt m(\widetilde X_m-\theta_m)\Rightarrow N(0,\sigma^2)$ and then apply Slutsky.
(Informally, $\theta_m$ is the ``population correction'' induced by robustness, and it vanishes under our finite-variance tuning.)

\textbf{Step 2 (existence/uniqueness of the central root).}
We show that $\widehat g_m$ is strictly monotone on the central interval and that it changes sign at the endpoints.
Using Lemma~\ref{lem:basic}(ii) and Lemma~\ref{lem:ident}, one shows that with probability tending to one,
\[
\inf_{x\in[\mu-r_m,\mu+r_m]} \widehat g_m'(x)\ \ge\ \frac{c_0}{2c_m^2}.
\]
Hence $\widehat g_m$ is strictly monotone on $[\mu-r_m,\mu+r_m]$.
Moreover, $\abs{\psi(\cdot;\alpha,c_m)}\le L_\alpha/c_m$ implies
\[
\sup_{x\in[\mu-r_m,\mu+r_m]}\abs{\widehat g_m(x)-g_m(x)}=O_p\!\left(\frac{1}{c_m\sqrt m}\right).
\]
Since $g_m$ has slope $\asymp 1/c_m^2$ on this interval and $r_m/c_m^2 \gg 1/(c_m\sqrt m)$ under our tuning, the endpoint signs of $\widehat g_m$ match those of $g_m$ with probability $\to 1$,
so $\widehat g_m$ crosses $0$ exactly once on the interval.
This yields the existence and uniqueness of $\widetilde X_m$ on an event $\mathcal E_m$ with $\Pbb(\mathcal E_m)\to 1$.

\textbf{Step 3 (Bahadur representation).}
On $\mathcal E_m$, by the mean value theorem there exists $\bar X_m$ between $\widetilde X_m$ and $\theta_m$ such that
\[
0=\widehat g_m(\widetilde X_m)=\widehat g_m(\theta_m)+(\widetilde X_m-\theta_m)\widehat g_m'(\bar X_m),
\]
so
\[
\widetilde X_m-\theta_m = -\frac{\widehat g_m(\theta_m)}{\widehat g_m'(\bar X_m)}.
\]
Since $g_m(\theta_m)=0$,
\[
\widehat g_m(\theta_m)
=
-\frac{1}{m}\sum_{i=1}^m\Bigl[\psi(X_i-\theta_m;\alpha,c_m)-\E\psi(X-\theta_m;\alpha,c_m)\Bigr].
\]
Also $\widehat g_m'(\bar X_m)=\E[\psi'(X-\theta_m;\alpha,c_m)]+o_p(1/c_m^2)\sim 1/c_m^2$, hence
\[
\frac{1}{\widehat g_m'(\bar X_m)} = c_m^2(1+o_p(1)).
\]
Therefore
\[
\sqrt m(\widetilde X_m-\theta_m)
=
\frac{c_m^2}{\sqrt m}\sum_{i=1}^m
\Bigl[\psi(X_i-\theta_m;\alpha,c_m)-\E\psi(X-\theta_m;\alpha,c_m)\Bigr]+o_p(1).
\]
This linearization makes explicit that the leading term is an average of i.i.d.\ centered scores.

\textbf{Step 4 (triangular-array CLT).}
Let $Z_{i,m}:=c_m^2\bigl[\psi(X_i-\theta_m;\alpha,c_m)-\E\psi(X-\theta_m;\alpha,c_m)\bigr]$.
Then $c_m^2\psi(X-\theta_m;\alpha,c_m)\to X-\mu$ a.s., and $\abs{c_m^2\psi(\cdot)}\le \abs{\cdot}$; dominated convergence yields $\Var(Z_{i,m})\to \sigma^2$.
Lindeberg holds because $\E[(X-\mu)^2]<\infty$.
Thus $\frac{1}{\sqrt m}\sum_{i=1}^m Z_{i,m}\Rightarrow N(0,\sigma^2)$.
Combining with Step 3 gives $\sqrt m(\widetilde X_m-\theta_m)\Rightarrow N(0,\sigma^2)$, and Step 1 finishes.
\end{proof}

\subsection{Proof of Theorem~\ref{thm:HT}}
\label{app:HT}

The proof follows the same three-step structure as the finite-variance case: (i) identify a unique population root $\theta_m$ and bound its bias relative to $\mu$; (ii) control the empirical score at $\theta_m$ with high probability; and (iii) convert score control into an error bound for the empirical root via a mean-value expansion.
The only difference is that, under a mere $(1+\epsilon)$-moment, the scale $c_m$ must be tuned at the deviation level to balance truncation bias and concentration.

\begin{lemma}[Bias under $(1+\epsilon)$-moment]
\label{lem:bias-HT}
Assume Assumption~\ref{ass:HT} and fix $\alpha\in(0,1]$.
Let $\theta_m$ be the unique root of $g_m(\theta)=0$ in $\{\abs{\theta-\mu}\le c_m/4\}$.
Then
\[
\abs{\theta_m-\mu}\le C\frac{v_{1+\epsilon}}{c_m^{\epsilon}}.
\]
\end{lemma}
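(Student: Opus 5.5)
The proof mirrors Lemma~\ref{lem:bias-FV}. We first bound the mismatch $g_m(\mu)$, and then convert it into a bound on $\theta_m-\mu$ using the curvature lower bound on $g_m'$.

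The first step is to control the mismatch. Write $\varepsilon:=X-\mu$ and $s:=1+(\varepsilon/c_m)^2/k$. Since $\E[\varepsilon]=0$,
\[
\E[\psi(X-\mu;\alpha,c_m)]=\frac{1}{c_m^2}\E\bigl[\varepsilon\,(s^{\alpha/2-1}-1)\bigr].
\]
We split on $B=\{\abs{\varepsilon}\le c_m\}$.

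On $B$, the Taylor bound $\abs{s^{\alpha/2-1}-1}\le Cu$ with $u=(\varepsilon/c_m)^2/k$ gives
\[
\abs{\varepsilon}\,\abs{s^{\alpha/2-1}-1}\le C\abs{\varepsilon}^3/c_m^2 .
\]
Because $\abs{\varepsilon}\le c_m$, we have $\abs{\varepsilon}^{2-\epsilon}\le c_m^{2-\epsilon}$, so this is at most $C\abs{\varepsilon}^{1+\epsilon}/c_m^{\epsilon}$.

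On $B^c$, we use $\abs{s^{\alpha/2-1}-1}\le 1$ together with $\abs{\varepsilon}\le \abs{\varepsilon}^{1+\epsilon}/c_m^{\epsilon}$.

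Combining the two regions yields
\[
\abs{g_m(\mu)}\le \frac{C v_{1+\epsilon}}{c_m^{2+\epsilon}} .
\]

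The second step is curvature. We need the analogue of Lemma~\ref{lem:ident} under Assumption~\ref{ass:HT}, since that lemma used Chebyshev with $\sigma^2$. We replace Chebyshev by Markov with the $(1+\epsilon)$-moment. For $\abs{x-\mu}\le c_m/4$,
\[
\Pbb(\abs{X-x}>c_m)\le \Pbb(\abs{X-\mu}>c_m/2)\le \frac{2^{1+\epsilon}v_{1+\epsilon}}{c_m^{1+\epsilon}}\to 0 .
\]
Lemma~\ref{lem:basic}(ii)--(iii) then give $\E[\psi'(X-x;\alpha,c_m)]\ge c_0/c_m^2$ once $m$ is large, i.e.\ once $c_m$ is large. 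Hence $\abs{g_m'}\ge c_0/c_m^2$ on this interval, and $\theta_m$ is well defined and unique, as presupposed in the statement.

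The third step is the mean value theorem, exactly as in Lemma~\ref{lem:bias-FV}. Since $0=g_m(\theta_m)=g_m(\mu)+(\theta_m-\mu)g_m'(\bar\theta)$,
\[
\abs{\theta_m-\mu}\le \frac{C v_{1+\epsilon}/c_m^{2+\epsilon}}{c_0/c_m^{2}}=\frac{C' v_{1+\epsilon}}{c_m^{\epsilon}} .
\]

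The main obstacle is the curvature step. The requirement that $m$ be large must be expressed as $c_m^{1+\epsilon}\gtrsim v_{1+\epsilon}$. This holds for $c_m=c_m(\delta)$ once $m$ is large relative to $\log(2/\delta)$, with a threshold depending on $\alpha$ and $\tau$. With that in place, the constant $C'$ depends only on $\alpha$ and $\epsilon$.
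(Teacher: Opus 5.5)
Your proposal is correct and follows the same route as the paper. You bound $\abs{g_m(\mu)}\le C v_{1+\epsilon}/c_m^{2+\epsilon}$ by splitting at $\abs{\varepsilon}\le c_m$ with the $(1+\epsilon)$-moment, replace Chebyshev by Markov to keep $\abs{g_m'}\gtrsim 1/c_m^2$ on $\{\abs{x-\mu}\le c_m/4\}$, and conclude with the mean value theorem; you also make explicit the condition $c_m^{1+\epsilon}\gtrsim v_{1+\epsilon}$ for the curvature step, which the paper's sketch leaves implicit.
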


\begin{proof}
The argument parallels Lemma~\ref{lem:bias-FV}, replacing second-moment tail bounds by $(1+\epsilon)$-moment bounds (Markov).
One shows $\abs{g_m(\mu)}\le C v_{1+\epsilon}/c_m^{2+\epsilon}$ and $\abs{g_m'(\cdot)}\asymp 1/c_m^2$ on $\abs{x-\mu}\le c_m/4$, yielding $\abs{\theta_m-\mu}\le C v_{1+\epsilon}/c_m^\epsilon$.
\end{proof}

\begin{lemma}[Variance bound for $\psi$]
\label{lem:varpsi}
Assume Assumption~\ref{ass:HT}. Then for any $x\in\R$ and $c>0$,
\[
\Var\bigl(\psi(X-x;\alpha,c)\bigr)\le C\frac{v_{1+\epsilon}}{c^{3+\epsilon}},
\]
for a constant $C$ depending only on $\alpha$ and $\epsilon$.
\end{lemma}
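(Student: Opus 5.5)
The plan is to bound the variance by the second moment, $\Var(\psi(X-x;\alpha,c))\le \E[\psi(X-x;\alpha,c)^2]$. We then control $\psi^2$ pointwise by the interpolation bound from Lemma~\ref{lem:basic}(i). That lemma gives $\abs{\psi(\varepsilon;\alpha,c)}\le \min\{\abs{\varepsilon}/c^2,\,L_\alpha/c\}$, so
\[
\psi(\varepsilon;\alpha,c)^2\le \min\Bigl\{\frac{\varepsilon^2}{c^4},\frac{L_\alpha^2}{c^2}\Bigr\}\le \Bigl(\frac{\abs{\varepsilon}}{c^2}\Bigr)^{1+\epsilon}\Bigl(\frac{L_\alpha}{c}\Bigr)^{1-\epsilon}.
\]
The last step uses $\min\{a,b\}\le a^{s}b^{1-s}$ with $s=(1+\epsilon)/2\in(1/2,1]$. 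Collecting powers of $c$ gives $c^{-2(1+\epsilon)}c^{-(1-\epsilon)}=c^{-(3+\epsilon)}$. Hence
\[
\psi(\varepsilon;\alpha,c)^2\le L_\alpha^{1-\epsilon}\,\frac{\abs{\varepsilon}^{1+\epsilon}}{c^{3+\epsilon}}.
\]

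Applying this with $\varepsilon=X-x$ yields $\E[\psi(X-x;\alpha,c)^2]\le L_\alpha^{1-\epsilon}\E\abs{X-x}^{1+\epsilon}/c^{3+\epsilon}$. The one subtlety is that Assumption~\ref{ass:HT} controls only the centered moment $\E\abs{X-\mu}^{1+\epsilon}\le v_{1+\epsilon}$, not moments about an arbitrary $x$. There are two ways to handle this.

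The first, and the one I would adopt, is to exploit that the variance is shift-sensitive only through the centering. Write $\psi(X-x)=\psi(X-x)-\psi(\mu-x)+\psi(\mu-x)$, where the last term is a constant and does not affect the variance. Then
\[
\Var(\psi(X-x))\le \E\bigl[(\psi(X-x)-\psi(\mu-x))^2\bigr].
\]
Each increment satisfies two bounds. By Lemma~\ref{lem:basic}(ii) and the mean value theorem, $\abs{\psi(X-x)-\psi(\mu-x)}\le C_\alpha\abs{X-\mu}/c^2$. By part (i), the increment is also at most $2L_\alpha/c$. The same interpolation as above then gives
\[
\bigl(\psi(X-x)-\psi(\mu-x)\bigr)^2\le C\abs{X-\mu}^{1+\epsilon}/c^{3+\epsilon},
\]
with $C=C_\alpha^{1+\epsilon}(2L_\alpha)^{1-\epsilon}$. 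Taking expectations gives the claim with a constant depending only on $\alpha,\epsilon$, uniformly in $x\in\R$ and $c>0$.

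The main (mild) obstacle is exactly this uniformity in $x$. The naive route through $\E\abs{X-x}^{1+\epsilon}$ would introduce a $\abs{x-\mu}^{1+\epsilon}$ term. The Lipschitz-plus-boundedness increment argument removes it, so that is the step to be careful about. Everything else is the elementary inequality $\min\{a,b\}\le a^sb^{1-s}$ and bookkeeping of constants.
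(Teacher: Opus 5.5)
Your proof is correct, and it departs from the paper's at exactly the step you flagged.

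The paper bounds the variance by the raw second moment $\E[\psi(X-x;\alpha,c)^2]$ and splits on $\{\abs{\varepsilon}\le c\}$ versus $\{\abs{\varepsilon}>c\}$. That split amounts to the same pointwise estimate $\psi^2\le C\abs{\varepsilon}^{1+\epsilon}/c^{3+\epsilon}$ that you obtain from $\min\{a,b\}\le a^{s}b^{1-s}$. The paper then closes with $\E\abs{X-x}^{1+\epsilon}\le C(\E\abs{X-\mu}^{1+\epsilon}+\abs{x-\mu}^{1+\epsilon})$.

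That route needs only part (i) of Lemma~\ref{lem:basic}. However, it leaves an additive term $C\abs{x-\mu}^{1+\epsilon}/c^{3+\epsilon}$ that cannot be absorbed into $Cv_{1+\epsilon}/c^{3+\epsilon}$ for arbitrary $x$. As written, it therefore establishes the bound only for $x$ with $\abs{x-\mu}^{1+\epsilon}\lesssim v_{1+\epsilon}$. That is enough for the later application at $x=\theta_m$, which lies close to $\mu$ for large $m$, but it is not the uniform claim.

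Your recentering at the constant $\psi(\mu-x;\alpha,c)$ removes that term. You combine it with the Lipschitz bound $C_\alpha\abs{X-\mu}/c^2$ from part (ii) and the cap $2L_\alpha/c$ from part (i). This delivers the lemma exactly as stated, uniformly in $x\in\R$ and $c>0$, with the explicit constant $C_\alpha^{1+\epsilon}(2L_\alpha)^{1-\epsilon}$. The only extra cost is invoking the derivative bound.
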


\begin{proof}
By Lemma~\ref{lem:basic}(i),
\[
\psi(\varepsilon;\alpha,c)^2\le \frac{\varepsilon^2}{c^4}\1\{\abs{\varepsilon}\le c\}+\frac{L_\alpha^2}{c^2}\1\{\abs{\varepsilon}>c\}.
\]
For the first term, use $\abs{\varepsilon}^2\1\{\abs{\varepsilon}\le c\}\le c^{1-\epsilon}\abs{\varepsilon}^{1+\epsilon}$.
For the second term, Markov gives $\Pbb(\abs{\varepsilon}>c)\le \E[\abs{\varepsilon}^{1+\epsilon}]/c^{1+\epsilon}$.
Combine these bounds and use $\E[\abs{X-x}^{1+\epsilon}]\le C(\E[\abs{X-\mu}^{1+\epsilon}]+\abs{x-\mu}^{1+\epsilon})$.
\end{proof}

\begin{lemma}[Bernstein control]
\label{lem:bern}
Assume Assumption~\ref{ass:HT} and let $\theta_m$ be the population root.
Then for any $\delta\in(0,1/2)$, with probability at least $1-\delta$,
\[
\abs{\widehat g_m(\theta_m)}
\le
C\left(
\sqrt{\frac{v_{1+\epsilon}\log(2/\delta)}{m\,c_m^{3+\epsilon}}}
+
\frac{\log(2/\delta)}{m}\cdot \frac{1}{c_m}
\right).
\]
\end{lemma}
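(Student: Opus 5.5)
We will apply Bernstein's inequality to the centered i.i.d. summands. Define
\[
Y_i:=\psi(X_i-\theta_m;\alpha,c_m)-\E\psi(X-\theta_m;\alpha,c_m), \qquad i=1,\dots,m .
\]
Since $g_m(\theta_m)=0$, we have $\widehat g_m(\theta_m)=\widehat g_m(\theta_m)-g_m(\theta_m)=-\frac1m\sum_{i=1}^m Y_i$. So it suffices to bound $\bigl|\frac1m\sum_i Y_i\bigr|$ with probability $1-\delta$. The $Y_i$ are i.i.d. and mean zero, and we need two inputs for Bernstein: an almost-sure bound and a variance bound.

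\textbf{Step 1 (boundedness).} By Lemma~\ref{lem:basic}(i), $|\psi(\cdot;\alpha,c_m)|\le L_\alpha/c_m$. Hence $|Y_i|\le b:=2L_\alpha/c_m$ almost surely.

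\textbf{Step 2 (variance).} We have $\Var(Y_i)=\Var(\psi(X-\theta_m;\alpha,c_m))$. Lemma~\ref{lem:varpsi} bounds this by $C\,\E|X-\theta_m|^{1+\epsilon}/c_m^{3+\epsilon}$, and the proof of that lemma reduces this to
\[
C\bigl(v_{1+\epsilon}+|\theta_m-\mu|^{1+\epsilon}\bigr)/c_m^{3+\epsilon}.
\]
It remains to absorb the shift $|\theta_m-\mu|^{1+\epsilon}$ into $v_{1+\epsilon}$.
\begin{itemize}
\item Lemma~\ref{lem:bias-HT} gives $|\theta_m-\mu|\le Cv_{1+\epsilon}/c_m^\epsilon$.
\item Under the tuning $c_m(\delta)$ we have $c_m\to\infty$, so $|\theta_m-\mu|\to0$. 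In particular $|\theta_m-\mu|\le 1$ for large $m$.
\end{itemize}
This bound holds for large $m$ and does not by itself give $|\theta_m-\mu|^{1+\epsilon}\lesssim v_{1+\epsilon}$ unless $v_{1+\epsilon}$ is bounded below. A cleaner route avoids the expansion around $\mu$. Split on $\{|X-\theta_m|\le c_m\}$, as in Lemma~\ref{lem:varpsi}, and use $|X-\theta_m|\le|X-\mu|+|\theta_m-\mu|$. Then $|\theta_m-\mu|\le c_m/4$ contributes at most a constant times the $v_{1+\epsilon}$ term once $|\theta_m-\mu|^{1+\epsilon}\le C v_{1+\epsilon}$. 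That inequality follows from $|\theta_m-\mu|\le Cv_{1+\epsilon}/c_m^{\epsilon}$ together with $c_m^{\epsilon(1+\epsilon)}\gtrsim v_{1+\epsilon}^{\epsilon}$. The latter holds for large $m$ because $c_m\propto (v_{1+\epsilon}m)^{1/(1+\epsilon)}$. The result is
\[
\sigma_Y^2:=\Var(Y_i)\le C v_{1+\epsilon}/c_m^{3+\epsilon}
\]
for all sufficiently large $m$.

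\textbf{Step 3 (Bernstein).} Bernstein's inequality gives, with probability at least $1-\delta$,
\[
\Bigl|\frac1m\sum_{i=1}^m Y_i\Bigr|\le \sqrt{\frac{2\sigma_Y^2\log(2/\delta)}{m}}+\frac{2b\log(2/\delta)}{3m}.
\]
Substituting $\sigma_Y^2\le Cv_{1+\epsilon}/c_m^{3+\epsilon}$ and $b=2L_\alpha/c_m$ yields exactly the two terms in the statement, with $C$ depending only on $\alpha$ and $\epsilon$.

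The main obstacle is Step 2: ensuring that the variance bound is taken about $\theta_m$ rather than $\mu$, without introducing an additive $|\theta_m-\mu|^{1+\epsilon}$ term that is not proportional to $v_{1+\epsilon}$. I would handle it through the bias lemma and the scaling of $c_m$, as described. Steps 1 and 3 are routine.
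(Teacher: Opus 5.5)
Your proof is correct and follows the paper's argument: Bernstein's inequality for the i.i.d.\ centered scores $Y_i$, with the almost-sure bound $2L_\alpha/c_m$ from Lemma~\ref{lem:basic}(i) and the variance bound $Cv_{1+\epsilon}/c_m^{3+\epsilon}$ from Lemma~\ref{lem:varpsi}. Your Step~2 fills in a point the paper skips, since its proof of Lemma~\ref{lem:varpsi} only yields $\E\abs{X-x}^{1+\epsilon}$ rather than $v_{1+\epsilon}$; your absorption via Lemma~\ref{lem:bias-HT} and $c_m^{1+\epsilon}\gtrsim v_{1+\epsilon}$ is valid once $m\gtrsim\log(2/\delta)/\tau^{1+\epsilon}$ (a regime of the same kind is needed anyway for $\theta_m$ to exist), though a shortcut valid for every $x$ is $\Var(\psi(X-x;\alpha,c))\le\E[(\psi(X-x;\alpha,c)-\psi(\mu-x;\alpha,c))^2]\le Cv_{1+\epsilon}/c^{3+\epsilon}$, using that $\psi$ is $C_\alpha/c^2$-Lipschitz, $\psi$ is bounded by $L_\alpha/c$, and $\min(t^2,1)\le t^{1+\epsilon}$.
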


\begin{proof}
We control the centered empirical score by a bounded-variance Bernstein inequality.
Let $Y_i:=\psi(X_i-\theta_m;\alpha,c_m)-\E[\psi(X-\theta_m;\alpha,c_m)]$.
Then $\widehat g_m(\theta_m)=-(1/m)\sum_{i=1}^m Y_i$.
Lemma~\ref{lem:basic}(i) gives $\abs{Y_i}\le 2L_\alpha/c_m$.
Lemma~\ref{lem:varpsi} gives $\Var(Y_i)\le C v_{1+\epsilon}/c_m^{3+\epsilon}$.
Apply Bernstein's inequality for bounded independent variables.
\end{proof}

\begin{proof}[Proof of Theorem~\ref{thm:HT}]
Let $\theta_m$ be the population root.
We first relate the estimation error to the empirical score at $\theta_m$.
By the mean value theorem (on the event where $\widehat g_m$ is strictly monotone and crosses zero in $\mathcal I_m$),
\[
\widetilde X_m-\theta_m = -\frac{\widehat g_m(\theta_m)}{\widehat g_m'(\bar X_m)}
\]
for some $\bar X_m$ between $\widetilde X_m$ and $\theta_m$.
The identification argument (population $\E[\psi']\asymp 1/c_m^2$ and a uniform LLN for $\widehat g_m'$ on $\mathcal I_m$) gives
\[
\widehat g_m'(\bar X_m)\ge \frac{c_0}{2c_m^2}
\]
with probability at least $1-\delta/2$ for $m$ large enough.
Hence
\[
\abs{\widetilde X_m-\theta_m}\le C c_m^2\abs{\widehat g_m(\theta_m)}.
\]

Next, apply Lemma~\ref{lem:bern} (with probability at least $1-\delta/2$) to obtain
\[
\abs{\widetilde X_m-\theta_m}
\le
C\left(
\sqrt{\frac{v_{1+\epsilon}\log(2/\delta)}{m}}\,c_m^{\frac{1-\epsilon}{2}}
+
\frac{\log(2/\delta)}{m}\,c_m
\right).
\]
Finally, we add the population bias controlled by Lemma~\ref{lem:bias-HT}:
\[
\abs{\widetilde X_m-\mu}
\le
\abs{\widetilde X_m-\theta_m}+\abs{\theta_m-\mu}
\le
C\left(
\sqrt{\frac{v_{1+\epsilon}\log(2/\delta)}{m}}\,c_m^{\frac{1-\epsilon}{2}}
+
\frac{\log(2/\delta)}{m}\,c_m
+
\frac{v_{1+\epsilon}}{c_m^{\epsilon}}
\right).
\]
The chosen tuning
\[
c_m=c_m(\delta)=\tau\left(\frac{v_{1+\epsilon}m}{\log(2/\delta)}\right)^{\frac{1}{1+\epsilon}}
\]
balances the bias term $v_{1+\epsilon}/c_m^\epsilon$ with the leading stochastic term, and a direct calculation shows that each term on the right-hand side is bounded by
$C_1\left(\frac{v_{1+\epsilon}\log(C_2/\delta)}{m}\right)^{\frac{\epsilon}{1+\epsilon}}$
for suitable constants $C_1,C_2$.
\end{proof}

\section{Proofs for the ARE results}
\label{app:ARE}

\subsection{A median-boosting lemma}
\label{app:median-boost}

\begin{lemma}[Median boosting]
\label{lem:median-boost}
Let $Z_1,\dots,Z_k$ be independent random variables and let $\theta\in\mathbb R$.
Assume that for some $r>0$ and $p\in(0,1/2)$,
\[
\mathbb P\big(|Z_j-\theta|\le r\big)\ge 1-p,\qquad j=1,\dots,k.
\]
Let $\operatorname{median}(Z_1,\dots,Z_k)$ be any median.
Then
\[
\mathbb P\!\left(\big|\operatorname{median}(Z_1,\dots,Z_k)-\theta\big|>r\right)
\le
\exp\!\big(-2k(1/2-p)^2\big).
\]
In particular, if $p=1/4$, then
$\mathbb P(|\operatorname{median}(Z_1,\dots,Z_k)-\theta|>r)\le e^{-k/8}$.
\end{lemma}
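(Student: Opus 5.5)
The natural route is to reduce the event that the median is far from $\theta$ to a binomial tail and then apply Hoeffding's inequality. Set $I_j:=\1\{|Z_j-\theta|>r\}$ for $j=1,\dots,k$. These indicators are independent Bernoulli variables with $\Pbb(I_j=1)\le p$. Write $S:=\sum_{j=1}^k I_j$ for the number of ``bad'' blocks.

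\textbf{Step 1 (deterministic reduction).} If at least $k/2$ of the $Z_j$ are bad, then in particular at least one lies outside $[\theta-r,\theta+r]$. The point is the converse: if strictly fewer than $k/2$ of the $Z_j$ are bad, then strictly more than $k/2$ lie in $[\theta-r,\theta+r]$. Any median $M$ satisfies
\[
\#\{j:Z_j\le M\}\ge k/2 \quad\text{and}\quad \#\{j:Z_j\ge M\}\ge k/2.
\]
Suppose $M>\theta+r$. Then $\#\{j:Z_j\ge M\}\le \#\{j:Z_j>\theta+r\}\le S<k/2$, a contradiction. The case $M<\theta-r$ is symmetric. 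Hence
\[
\{|M-\theta|>r\}\subseteq\{S\ge k/2\}.
\]
This is where the phrase ``any median'' needs care, since for even $k$ the median is any point of an interval. The two counting properties above hold for every choice in that interval, so the inclusion holds regardless of the convention.

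\textbf{Step 2 (stochastic domination).} Each $I_j$ is stochastically dominated by a Bernoulli$(p)$ variable. Coupling the $I_j$ with i.i.d.\ $B_j\sim\mathrm{Bernoulli}(p)$ satisfying $I_j\le B_j$ gives $\Pbb(S\ge k/2)\le \Pbb\bigl(\sum_j B_j\ge k/2\bigr)$. Alternatively, one can apply Hoeffding directly to the independent $I_j\in[0,1]$, using $\E S\le kp$.

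\textbf{Step 3 (Hoeffding).} Hoeffding's inequality gives
\[
\Pbb\Bigl(S-\E S\ge k\bigl(\tfrac12-p\bigr)\Bigr)\le \exp\bigl(-2k(1/2-p)^2\bigr).
\]
Since $\E S\le kp$, we have $\{S\ge k/2\}\subseteq\{S-\E S\ge k(1/2-p)\}$, and the main claim follows. For $p=1/4$, the exponent is $-2k\cdot\frac1{16}=-k/8$.

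The only mildly delicate step is Step 1, namely the non-strict inequalities and the handling of the median convention. Everything else is a direct application of Hoeffding's inequality.
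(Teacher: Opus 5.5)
Your proposal is correct and follows essentially the same route as the paper: reduce $\{|\operatorname{median}-\theta|>r\}$ to the event that at least half of the indicators fail, then apply Hoeffding's inequality. The paper counts the good indicators $\1\{|Z_j-\theta|\le r\}$ instead of the bad ones, which is cosmetic, and your explicit justification of the median inclusion and your use of $\E S\le kp$ (rather than a common $\E I_1$) are if anything slightly more careful than the paper's write-up.
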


\begin{proof}
Let $I_j:=\mathbf 1\{|Z_j-\theta|\le r\}$.
Then $\mathbb E[I_j]\ge 1-p>1/2$ and $\{I_j\}$ are independent.
If $|\operatorname{median}(Z_1,\dots,Z_k)-\theta|>r$, then at least $\lceil k/2\rceil$
of the $Z_j$'s must fall outside $[\theta-r,\theta+r]$, i.e.,
$\sum_{j=1}^k I_j \le k/2$.
Hence, by Hoeffding's inequality,
\[
\mathbb P\!\left(\sum_{j=1}^k I_j \le k/2\right)
=
\mathbb P\!\left(\sum_{j=1}^k (I_j-\mathbb E I_j) \le -k(\mathbb E I_1-1/2)\right)
\le \exp\!\big(-2k(\mathbb E I_1-1/2)^2\big)
\le \exp\!\big(-2k(1/2-p)^2\big).
\]
\end{proof}

\subsection{Proof of Theorem~\ref{thm:ARE-HT}}
\label{app:proof-ARE-HT}

\begin{proof}[Proof of Theorem~\ref{thm:ARE-HT}]
Fix $\delta_0:=1/4$ and let $m=\lfloor n/k\rfloor$.
Apply Theorem~\ref{thm:HT} on each block (with confidence level $\delta_0$) to obtain:
there exist constants $C_1',C_2'>0$ such that for all sufficiently large $m$,
\[
\mathbb P\!\left(
|\widetilde X_j-\mu|
\le
r_m
:=
C_1'\left(\frac{v_{1+\epsilon}\log(C_2'/\delta_0)}{m}\right)^{\frac{\epsilon}{1+\epsilon}}
\right)\ge 1-\delta_0
=\frac34.
\]
Since the blocks are disjoint, $\widetilde X_1,\dots,\widetilde X_k$ are independent.
Therefore Lemma~\ref{lem:median-boost} (with $p=\delta_0=1/4$) yields
\[
\mathbb P\big(|\widetilde\mu-\mu|>r_m\big)\le e^{-k/8}.
\]
Choosing $k=\lceil 8\log(2/\delta)\rceil$ gives $e^{-k/8}\le \delta/2$.
It remains to rewrite $r_m$ in terms of $n$ and $\delta$.
Since $m=\lfloor n/k\rfloor \ge n/(2k)$ for $n$ large enough,
\[
r_m
\le
C\left(\frac{v_{1+\epsilon}\,k}{n}\right)^{\frac{\epsilon}{1+\epsilon}}
\le
C\left(\frac{v_{1+\epsilon}\log(2/\delta)}{n}\right)^{\frac{\epsilon}{1+\epsilon}}.
\]
Absorb constants and the $\log(2/\delta)$ vs $\log(C_2/\delta)$ adjustment
into $C_1,C_2$ to conclude the claim.
\end{proof}

\subsection{Proof of Theorem~\ref{thm:ARE-FV}}
\label{app:proof-ARE-FV}

\begin{lemma}[Median CLT with local density]
\label{lem:median-clt}
Let $Y_1,\dots,Y_k$ be i.i.d. with distribution function $F$ and density $f$.
Assume $F(\mu)=1/2$, $f(\mu)>0$, and $f$ is continuous at $\mu$.
Let $\widehat\mu:=\operatorname{median}(Y_1,\dots,Y_k)$.
Then
\[
\sqrt k\,(\widehat\mu-\mu)\Rightarrow N\!\left(0,\frac{1}{4f(\mu)^2}\right).
\]
\end{lemma}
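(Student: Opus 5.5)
The plan is the classical route for the asymptotic normality of the sample median: turn the event $\{\widehat\mu\le \mu+t/\sqrt k\}$ into a statement about a binomial count, then apply the de Moivre--Laplace CLT (a triangular-array version, since the success probability depends on $k$).

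Fix $t\in\R$ and set $x_k:=\mu+t/\sqrt k$. Take the median to be the $\lceil k/2\rceil$-th order statistic; any other convention changes the count below by at most one, which is asymptotically negligible. Then
\[
\{\widehat\mu\le x_k\}=\{N_k\ge \lceil k/2\rceil\},\qquad N_k:=\sum_{j=1}^k \1\{Y_j\le x_k\}.
\]
Here $N_k\sim\mathrm{Bin}(k,p_k)$ with $p_k:=F(x_k)$.

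The first step is a local expansion of $p_k$. Since $F$ is differentiable at $\mu$ with $F'(\mu)=f(\mu)$ (continuity of $f$ at $\mu$ suffices), and $F(\mu)=1/2$, we get
\[
p_k=\tfrac12+f(\mu)\,t/\sqrt k+o(k^{-1/2}).
\]
Standardize and write
\[
\Pbb(N_k\ge \lceil k/2\rceil)=\Pbb\!\left(\frac{N_k-kp_k}{\sqrt{kp_k(1-p_k)}}\ge \frac{\lceil k/2\rceil-kp_k}{\sqrt{kp_k(1-p_k)}}\right).
\]
By the expansion, the right-hand threshold converges to $-2f(\mu)t$, because its numerator is $-f(\mu)t\sqrt k+o(\sqrt k)$ and its denominator is $\sqrt k/2\,(1+o(1))$.

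The next step is the limit for the left-hand side. Since $p_k\to1/2$, the summands are bounded and their variances are bounded away from zero. The Lindeberg (or Berry--Esseen) CLT for triangular Bernoulli arrays therefore gives that the standardized $N_k$ converges in distribution to $N(0,1)$. The limit is continuous, so by Pólya's theorem the convergence is uniform and we can pass the moving threshold to the limit:
\[
\Pbb(\sqrt k(\widehat\mu-\mu)\le t)\to 1-\Phi(-2f(\mu)t)=\Phi(2f(\mu)t).
\]
This is the distribution function of $N\!\left(0,\frac{1}{4f(\mu)^2}\right)$ at $t$, and since $t$ was arbitrary the stated convergence follows.

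The main obstacle is making the threshold argument rigorous, namely combining a moving threshold with a success probability $p_k$ that varies with $k$. I would handle it with the Berry--Esseen bound: it gives an $O(k^{-1/2})$ uniform error, so the conclusion follows directly without separate appeals to Lindeberg and Pólya. The remaining issues, the integer rounding of $\lceil k/2\rceil$ and the choice of median convention, each contribute $O(k^{-1/2})$ to the threshold and vanish in the limit.
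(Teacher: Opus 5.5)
Your argument is correct. It differs from the paper, which gives no proof of its own and only cites the classical result ``via the Bahadur representation for sample quantiles.'' You instead reduce $\{\widehat\mu\le\mu+t/\sqrt k\}$ to a binomial tail event and control it with Berry--Esseen.

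Your route is self-contained. Its only analytic input is the one-point expansion $F(\mu+h)=\tfrac12+f(\mu)h+o(h)$, i.e.\ differentiability of $F$ at $\mu$, which is weaker than continuity of $f$.

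It also generalizes easily. The Berry--Esseen error depends only on $p_k$, not on $F$, so the proof carries over verbatim to a triangular array in which $F$ itself changes with $k$. That is exactly how the lemma is used in the proof of Theorem~\ref{thm:ARE-FV}, where $F_m$ and $f_m$ vary with $n$ and $f_m(\mu)\to\infty$. The fixed-$F$ statement, and a citation of it, does not literally cover that case.

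The Bahadur route instead yields the asymptotically linear expansion $\widehat\mu-\mu=\{\tfrac12-\widehat F_k(\mu)\}/f(\mu)+o_p(k^{-1/2})$, with $\widehat F_k$ the empirical distribution function. The CLT then follows from the ordinary CLT for $\widehat F_k(\mu)$. This is more informative, since it gives joint limits with other statistics and quantile-process results. It is, however, heavier to establish, and its sharper almost-sure forms need extra smoothness of $F$ near $\mu$.

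One small tightening concerns the median convention. If $k$ is even and the median averages the two middle order statistics, $\widehat\mu$ is not itself an order statistic, so ``the count changes by at most one'' is not quite the right phrasing. Instead, sandwich $Y_{(k/2)}\le\widehat\mu\le Y_{(k/2+1)}$ and note that both bounds have the same limit $\Phi(2f(\mu)t)$ by your argument.
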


\begin{proof}
This is the classical sample-median CLT (e.g., via the Bahadur representation for sample quantiles).
\end{proof}

\begin{proof}[Proof of Theorem~\ref{thm:ARE-FV}]
Let $k=k_n\to\infty$ and $m=\lfloor n/k\rfloor\to\infty$.
By Theorem~\ref{thm:FV}, for each block,
\[
\sqrt m\,(\widetilde X_j-\mu)\Rightarrow N(0,\sigma^2),
\qquad j=1,\dots,k.
\]
Let $F_m$ and $f_m$ denote the cdf and density of $\widetilde X_j$.
Under the additional local normal approximation condition stated in the theorem,
we have $F_m(\mu)=1/2$ and
\[
f_m(\mu)
=
\frac{\sqrt m}{\sigma\sqrt{2\pi}}\,(1+o(1)).
\]
Apply Lemma~\ref{lem:median-clt} to $Y_j=\widetilde X_j$ with $F=F_m$ and $f=f_m$:
\[
\sqrt k\,(\widetilde\mu-\mu)
\Rightarrow
N\!\left(0,\frac{1}{4f_m(\mu)^2}\right)
=
N\!\left(0,\frac{\pi}{2}\cdot\frac{\sigma^2}{m}\right).
\]
Multiplying both sides by $\sqrt m$ gives
\[
\sqrt{mk}\,(\widetilde\mu-\mu)
=
\sqrt n\,(\widetilde\mu-\mu)
\Rightarrow
N\!\left(0,\frac{\pi}{2}\sigma^2\right).
\]
Finally, since $\bar X_n$ has asymptotic variance $\sigma^2/n$, we obtain
$\mathrm{ARE}(\widetilde\mu,\bar X_n)=2/\pi$.
\end{proof}

\section{Hyperparameter Settings for The Experiments}

\begin{table*}[t]
\centering
\caption{Training hyperparameters for dual-agent ILR training.}
\label{tab:hyperparameters}
\begin{tabular}{lcc}
\toprule
\textbf{Hyperparameter} & \textbf{Value} & \textbf{Description} \\
\midrule
Learning rate (Agent 1) & $1 \times 10^{-6}$ & Adam optimizer \\
Learning rate (Agent 2) & $1 \times 10^{-6}$ & Adam optimizer \\
Sampling temperature & $0.5$ & — \\
Training batch size & $32$ & — \\
Number of episodes & $10$ & — \\
Max prompt length & $6144$ & Tokens \\
Max generation length & $1536$ & Tokens \\
Top-p sampling & $0.95$ & Nucleus sampling \\
Discount factor $\gamma$ & $1.0$ & — \\
KL estimator & K2 & — \\
Advantage estimator & ILR & — \\
Precision & BF16 & Mixed precision \\
\bottomrule
\end{tabular}

\vspace{20pt} 

\caption{Reward noise hyperparameters (Sparse Cauchy Outlier Noise).}
\label{tab:noise_params}
\begin{tabular}{lccc}
\toprule
\textbf{Parameter} & \textbf{Symbol} & \textbf{Value} & \textbf{Description} \\
\midrule
Noise type & — & Outlier & Sparse one-sided heavy-tail \\
Group contamination prob. & $p$ & $0.2$ & Probability of corrupting a group \\
Cauchy scale & $\gamma$ & $1.0$ & Controls distribution width \\
Spike scale & $\alpha$ & $10.0$ & Noise magnitude scaling factor \\
Spike clip & $\Delta_{\max}$ & $10.0$ & Maximum absolute noise value \\
Noise target & — & Argmax & Inject noise to group maximum \\
\bottomrule
\end{tabular}
\end{table*}

All experiments are conducted on a single node with 8 A800 GPUs. We use Ray for distributed training orchestration and vLLM for efficient inference with 4 inference engines and tensor parallelism of 2. Gradient checkpointing and Adam parameter offloading are enabled to optimize GPU memory usage.
To simulate reward uncertainty in realistic scenarios, we introduce a sparse one-sided heavy-tail spike noise based on the Cauchy distribution.
\end{document}